\documentclass{article}
\usepackage{ijcai26}

\usepackage{times}
\usepackage{soul}
\usepackage{url}
\usepackage[hidelinks]{hyperref}
\usepackage[utf8]{inputenc}
\usepackage[small]{caption}
\usepackage{graphicx}
\usepackage{amsmath}
\usepackage{amsthm}
\usepackage{booktabs}
\usepackage{algorithm}
\usepackage{algorithmic}
\usepackage[switch]{lineno}

\usepackage[textsize=tiny]{todonotes}
\usepackage{amsfonts}
\usepackage{subcaption}
\usepackage{wrapfig}

\newtheorem{definition}{Definition}
\newtheorem{assumption}{Assumption}

\newtheorem{theorem}{Theorem}

\title{ROAD: Adaptive Data Mixing for Offline-to-Online Reinforcement Learning via Bi-Level Optimization}

\author{
Letian Yang$^{1}$\thanks{These authors contributed equally.}
\and
Xu Liu$^{1}$\footnotemark[1]
\and
Yiqiang Lu$^{2}$
\and
Jian Liu$^{2}$
\and
Weiqiang Wang$^{2}$
\And
Shuai Li$^1$
\\
\affiliations
$^1$Shanghai Jiao Tong University, Shanghai, China\\
$^2$Ant Group, Shanghai, China
}
\begin{document}

% \renewcommand{\thefootnote}{\fnsymbol{footnote}}
% \footnotetext[2]{These authors contributed equally.}

% \maketitle
% \input{sections/0-abstract}
% \input{sections/1-introduction}
% \input{sections/2-related_work}
% \input{sections/3-theory}
% \input{sections/4-method}
% \input{sections/5-experiment}
% \input{sections/6-conclusion}

\maketitle

\begin{abstract}
    Offline-to-online reinforcement learning harnesses the stability of offline pretraining and the flexibility of online fine-tuning. A key challenge lies in the non-stationary distribution shift between offline datasets and the evolving online policy. Common approaches often rely on static mixing ratios or heuristic-based replay strategies, which lack adaptability to different environments and varying training dynamics, resulting in suboptimal tradeoff between stability and asymptotic performance.
    In this work, we propose Reinforcement Learning with Optimized Adaptive Data-mixing (ROAD), a dynamic plug-and-play framework that automates the data replay process. We identify a fundamental objective misalignment in existing approaches. To tackle this, we formulate the data selection problem as a bi-level optimization process, interpreting the data mixing strategy as a meta-decision governing the policy performance (outer-level) during online fine-tuning, while the conventional Q-learning updates operate at the inner level. To make it tractable, we propose a practical algorithm using a multi-armed bandit mechanism. This is guided by a surrogate objective approximating the bi-level gradient, which simultaneously maintains offline priors and prevents value overestimation.
    Our empirical results demonstrate that this approach consistently outperforms existing data replay methods across various datasets, eliminating the need for manual, context-specific adjustments while achieving superior stability and asymptotic performance.
\end{abstract}
\section{Introduction} \label{sec:intro}

Deep reinforcement learning (RL) has achieved remarkable access in various domains \cite{rl-survey,rl-co-survey}, yet its deployment in real-world applications is fundamentally restricted by the prohibitive sample complexity of online learning methods \cite{exploration-rl-survey,rl-sample-complexity}, especially in safety-critical scenarios (e.g., healthcare) or costly environments (e.g., autonomous driving) \cite{rl-healthcare-survey,autonomous-driving-survey}. Offline RL offers a promising alternative by leveraging previously collected datasets to extract a policy without environment interaction \cite{offline-rl-tutorial}. However, strictly offline learning is inherently limited by the quality and coverage of the static datasets, failing to reach a performance comparable to agents trained online. To bridge this gap, offline-to-online (O2O) RL has emerged as a new paradigm, aiming to initialize an agent with a safe offline policy (the offline phase) and subsequently refine it via direct online interaction (the online phase) \cite{awac}. This paradigm seeks to maintain the safety of offline policy while enabling the plasticity to explore and improve beyond the offline dataset.

Under such a twofold objective, effective O2O training is non-trivial. The transition from offline pretraining to online fine-tuning introduces severe distribution shift \cite{br-pessimism}. Naively concatenating offline and online RL algorithms typically results in a catastrophic ``unlearning'' phenomenon, characterized by a significant performance dip at the beginning of online fine-tuning \cite{cal-ql}.

Prior works have addressed the distribution shift primarily through two complementary perspectives: Q-value calibration and data-level distribution alignment. Works on the former aspect \cite{awac,cal-ql} are aimed for a balance between pessimism from offline RL \cite{cql,iql} and optimism required for online RL \cite{sac}. By contrast, the latter aspect focuses on the data-level distribution shift when transitioning from the behavior policy of offline dataset to the offline-trained policy. Existing data-centric solutions largely rely on static strategies \cite{cal-ql,pex} or fixed heuristics \cite{br-pessimism,a3rl} to mix offline and online dataset. However, we argue that there is no universally optimal data mixing strategy. Instead, the optimal data replay pattern is tailored to the quality of the offline data, the environment properties and even the stage of training. For instance, a high proportion of offline data might be necessary in early training to stabilize updates, whereas later stages might benefit from prioritizing online experience to correct the bias from offline data.

In this paper, we investigate the mechanism by which data mixing strategies influence online fine-tuning, characterizing data mixing as a reshaping of the Q-learning loss landscape. From this perspective, we identify an objective misalignment problem in offline-to-online settings, where the conventional Q-learning objective of minimizing the Bellman error does not suffice to guarantee RL performance.

To address this objective misalignment issue, we propose \textbf{R}einforcement Learning with \textbf{O}ptimized \textbf{A}daptive \textbf{D}ata-mixing (ROAD), a dynamic, plug-and-play framework that adaptively determines the optimal data mixing strategy. We fundamentally reframe the data selection problem as a bi-level optimization process in the functional space of data mixing strategies, where the inner-level updates follow standard Q-learning, and the outer-level takes data mixing as a meta-decision problem to maximize the expected performance of the online policy, thereby aligning Q-learning with the ultimate goal of maximizing RL performance. A solution is provided by computing the gradient of the outer-level objective.

To further deal with the overestimation bias of the outer-level objective and thus bridge the gap between theory and practice, ROAD constructs a surrogate as an unbiased estimator for the outer-level objective. A practical algorithm utilizing a sliding-window multi-armed bandit (MAB) to optimize the data mixing ratio is proposed to optimize this surrogate objective. Intuitively, ROAD selects an arm that maximizes the performance increment in trusted regions where the uncertainty of the Q-function estimator is negligible and mitigates training instability by suppressing value overestimation.

In summary, our contributions are as follows:

\begin{itemize}
    \item We identify an objective misalignment problem in data-centric O2O RL and solve this misalignment by formulating it as a bi-level optimization, where the data-mixing strategy is considered a meta-decision variable.
    \item We propose ROAD, a plug-and-play framework that approximates the bi-level optimization theory with a computable, unbiased surrogate objective and a MAB mechanism optimizing data mixing ratios in real-time.
    \item We evaluate ROAD across extensive benchmarks and the results show that ROAD outperforms static and heuristic-based data selection strategies, ensuring training stability and asymptotic performance.
\end{itemize}
\section{Related Works} 
\label{sec:related_works}

\subsection{Offline-to-Online RL}

The primary challenge of O2O RL lies in the mismatch between the principles underlying the two phases. Specifically, offline RL algorithms feature restrictive policy constraints and pessimistic Q-value estimation \cite{cql,iql} to ensure conservatism, which inherently limits the exploration capabilities during online fine-tuning \cite{policy-finetuning,hybrid-rl,hybrid-ftpl}, resulting in slow performance gains.

To mitigate this, transitional frameworks between phases have been proposed. Some algorithms have adopted implicit or dynamic constraints \cite{awac,adaptive-bc,br-pessimism} which naturally loosen in the online phase. Similarly, Cal-QL \cite{cal-ql} conducts offline Q-value estimation in a calibrated rather than pessimistic manner; WSRL \cite{wsrl} introduces a separate warm-start phase to calibrate Q-value and then discards the offline dataset.

Apart from balancing pessimism and optimism, one line of work embraces the discrepancy between offline and online settings. APL \cite{apl} employs pessimistic updates on offline data and optimistic updates on online data; PEX \cite{pex} maintains separate policies for offline and online learning; MOORL \cite{moorl} treats the O2O RL problem as a meta-learning problem with two distinct tasks, offline and online learning.

\subsection{Offline Data Replay for Offline-to-Online RL}

While algorithmic consistency is crucial, the strategy to address data-level distribution shift plays a decisive role in ensuring stability and plasticity in the online fine-tuning phase. Although some argue that offline data can be completely discarded in online learning \cite{wsrl}, its adaptiveness to various environments is questioned \cite{three-regimes}. The predominant solution to offline data utilization during online interaction is to replay offline data in the online replay buffer.

\paragraph{Static Mixing Strategies} Early attempts directly initialize the replay buffer with offline dataset \cite{early-dqfd,early-dqfd-robotics}, which proves unstable \cite{modem}. Instead, the current mainstream is to sample independently from two data sources with a mixing ratio. QT-Opt gradually increase the online data ratio from $1\%$ to $50\%$ \cite{qt-opt}. Cal-QL, RLPD and PROTO implement symmetric sampling (a fixed $50\%$-$50\%$ split between offline and online data) \cite{cal-ql,rlpd,proto}, which is also adopted in concurrent researches \cite{mixing-app-simlauncher,q-chunking}. While simple, these static strategies are inherently suboptimal for lack of context awareness and environment adaptiveness.

\paragraph{Heuristic-based Data Selection} Recognizing the limitations of fixed mixing ratios, there have been works toward selective sampling. Balanced replay prioritizes ``near on-policy'' offline samples with density ratio estimation \cite{br-pessimism}. Starting from the same heuristic, A3RL filter for samples that are also advantage-aligned \cite{a3rl} and ARB propose a learning-free method to evaluate the ``on-policyness'' \cite{arb}. Despite mitigating distribution shift, near on-policy selection heuristics reduces data selection to a form of online data augmentation, limiting the algorithm's ability to fully leverage the environmental information encoded in the offline dataset. Our empirical findings, together with prior analytical works \cite{three-regimes}, indicate that no single sampling strategy generalizes universally, which motivates an adaptive, context-sensitive strategy that adapts to learning dynamics and environmental feedback \cite{cal-ql,apl}.
\section{Bi-Level Optimization for O2O RL}
\label{sec:online-as-bi-level-opt}

In this section, we formulate data mixing as an optimization problem. From a functional perspective, we offer a novel perspective on how different data mixing strategies influence the training dynamics. Under this perspective, we argue that an optimal mixing strategy should be derived by aligning the Q-learning objective with the true RL objective. Based on this formulation, we present an idealized optimization algorithm as a theoretical solution.

\subsection{Objective Misalignment in O2O Fine-Tuning}
\label{sec:theory-preliminary}

We formulate the O2O RL problem within the Markov Decision Process (MDP) framework $\mathcal{M} = \left( \mathcal{S}, \mathcal{A}, p, r, \gamma \right)$. The online fine-tuning begins with a static offline dataset $\mathcal{D}^{\text{offline}}$ generated by a behavior policy $\pi_\beta$ and an offline initialization for the policy $\pi^{\text{off}}$ and the Q-function $f^{\text{off}}$ in a function class $\mathcal{F}$. Let the initial state distribution be $\mu \in \Delta(\mathcal{S})$ and $\rho$ be any trajectory, then the goal is to maximize the cumulative value function, denoted by $J(\pi) = \mathbb{E}_{s \sim \mu} \left[ V^{\pi}(s) \right] = \mathbb{E}_{\rho\sim\pi, s \sim \mu} \left[ \sum_{t=0}^{\infty} \gamma^t r_t \vert s_0=s \right]$.

\paragraph{Online fine-tuning via fitted Q-iteration.}
Standard value-based online fine-tuning algorithms typically follow a fitted Q-iteration (FQI) scheme \cite{cal-ql}. At each iteration $k$, the agent collects an online data batch $\mathcal{D}^{k}$ with the online policy $\pi^{k}$. Then the Q-value iteration is performed on a mixed distribution $d^{\text{sample}}$ built with both the offline dataset and online replay buffer. Specifically, the Q-function updates with Equation \eqref{eq:fqi-update} where $\mathcal{T}$ is the Bellman operator defined as $\mathcal{T}f(s,a) = r(s,a) + \gamma \mathbb{E}_{s'} \left[ \max_{a'} f(s',a') \right]$.
\begin{align}
    f^{k+1} \leftarrow \arg\min_{f\in\mathcal{F}} \left\{ \mathbb{E}_{d^{\text{sample}}} \left[ \mathcal{T}f^k(s,a) - f(s,a) \right]^2 \right\}.
    \label{eq:fqi-update}
\end{align}
Crucially, the construction of the mixed distribution is formulated via a data mixing strategy $g\in \mathcal{G}$, defined as a mapping from online and offline data sources to a sampling distribution, where $\mathcal{G}$ is the function space of all such strategies.
\begin{align}
    d^{\text{sample}}_g = g(\mathcal{D}^{\text{offline}}, \mathcal{D}^{1},...,\mathcal{D}^{k}).
    \label{eq:data-mixing-definition}
\end{align}

\paragraph{The loss landscape perspective on data mixing.}
Equation \eqref{eq:fqi-update} reveals a critical insight that Q-learning is essentially projecting the Bellman target onto the representable function space $\mathcal{F}$, i.e., $f^{k+1} \leftarrow \Pi_{\mathcal{F}, d_g^{\text{sample}}} \left( \mathcal{T} f^{k} \right)$. Here $\Pi_{\mathcal{F}, d_g^{\text{sample}}}$ denotes the projection operator defined by weighted $L_2$-norm $\left\Vert \cdot \right\Vert_{d_g^{\text{sample}}}$. This formulation makes explicit the mechanism through which data mixing strategies affect online RL fine-tuning, as depicted below. The data mixing strategy governs the geometry of the projection. By altering the sampling density over the state-action space, different mixing strategies define distinct loss landscapes, which further steers the optimization toward different optima $f^*(g)$. The induced variation in Q-function updates further yields different online policies and, ultimately, different RL performance levels $J(\pi)$.
\begin{align*}
    g \rightarrow \text{loss landscape} \rightarrow f^* \rightarrow \pi \rightarrow J(\pi).
\end{align*}

\paragraph{The misalignment between objectives.}
The dependency chain of variables above demonstrates that the primary obstacle to identifying ideal data mixing stems from its \textit{indirect} influence on the training dynamics, which gives rise to an objective misalignment problem. Specifically, contrary to standard online RL, minimizing the Bellman error in online fine-tuning does not guarantee maximizing the policy return due to the extra variable $g$.

Existing approaches dependent on static mixing ratios and heuristic selection effectively lock the projection geometry based on fixed priors, assuming that a particular landscape is universally beneficial. However, this assumption is fundamentally flawed because the relationship between the projection geometry and online performance is dynamic, dependent on the available data, the online policy and the stage of training.
As an example, geometry required for early-stage stability (anchoring to offline data to prevent catastrophic forgetting) is the inverse of that required for asymptotic plasticity (emphasizing online samples to correct offline bias).
The fixed priors cut off the explicit feedback from the online policy performance, failing to satisfy these mutually exclusive requirements simultaneously.

To resolve this misalignment, we transcend static rules and treat $g$ as a meta-decision variable to study its effect on the policy performance (our final objective) rigorously. Since aligning the Bellman objective with RL performance necessitates precise control over the loss landscape across the state-action space, we abstain from parameterizing $g$ and formulate it directly as a functional meta-decision variable.

\subsection{The Bi-Level Optimization Formulation and The Theoretical Solution}
\label{sec:problem-formulation}

To address the objective misalignment problem, we reframe data mixing as a dynamic meta-decision process, solving an optimization problem in the space of mixing strategies to dynamically align the Q-learning objective with policy performance maximization. Notably, this optimization is inherently hierarchical because the final objective (policy performance) depends on the learned Q-function, which is itself an optimum for minimizing the Bellman error. The constraint from an optimal inner Q-learning loop naturally casts the data mixing problem into the framework of bi-level optimization.

Formally, the inner loop is defined as standard Q-learning updates, which we treat as a fixed subroutine. The objective $\mathcal{J}^{k}_{\text{in}}(g,f)$ is exactly the Bellman error to be minimized on the mixed dataset, as is defined in Equation \eqref{eq:fqi-update}. In order to dynamically determine the optimal mixing, the outer-loop is expected to make a decision on the meta-variable $g$ within each FQI iteration, maximizing the expected return $J(\pi)$ of the induced policy. Using the chain of variable dependency, we can express the objective as a function of $g$, leading to the objective in Equation \eqref{eq:bi-level-objective}.
\begin{align}
    & \max_{g \in \mathcal{G}} \mathcal{J}^{k}_{\text{out}}(g)=\max_{g \in \mathcal{G}} J \left( \pi \left( \arg\min_{f} \mathcal{J}^{k}_{\text{in}}(g,f)) \right) \right) \nonumber \\
    & \text{where } \mathcal{J}^{k}_{\text{in}}(g,f) = \mathbb{E}_{d_g^{\text{sample}}} \left[ \mathcal{T}f^k(s,a) - f(s,a) \right]^2.
    \label{eq:bi-level-objective}
\end{align}
Here we assume a differentiable policy mapping from the inner-level optimal $f^*$ to the behavior policy $\pi(f^*)$ to ensure that the gradient exists.

\paragraph{The theoretical solution.}
Inspired by recent breakthroughs in bi-level optimization theory \cite{bi-level-opt}, we derive a theoretical solution to the proposed setting. Within each FQI iteration, we solve the optimization via gradient ascent.
The critical part is the gradient calculation of the outer-level objective demonstrated in Equation \eqref{eq:outer-level-grad}. A full derivation and the explanation are provided in Appendix \ref{sec:appendix-theoretical-solution}.
\begin{align}
    \nabla_g \mathcal{J}_{\text{out}} (g) = - \int_{\mathcal{S}\times\mathcal{A}} w_g(s,a) \partial_{g,f} \mathcal{J}_{\text{in}}(s,a) \mathrm{d}s \mathrm{d}a.
    \label{eq:outer-level-grad}
\end{align}
Here $w_g$ denotes the density ratio between the distribution of a policy induced by the online policy and its advantage function, and the sampling distribution; the mixed Fréchet derivative $\partial_{g,f} \mathcal{J}_{\text{in}}$ evaluates the sensitivity of Bellman error to the data mixing strategy. Intuitively, Equation \eqref{eq:outer-level-grad} implies that the ideal $g^*$ induces a weighted on-policy distribution biased towards the loss-sensitive samples. Further discussion is provided in Appendix \ref{sec:appendix-interpret-theoretical-gradient}.
\section{ROAD: Towards Practical RL Fine-Tuning}
\label{sec:road}

\begin{figure*}[t]
\begin{center}
\includegraphics[width=0.9\textwidth]{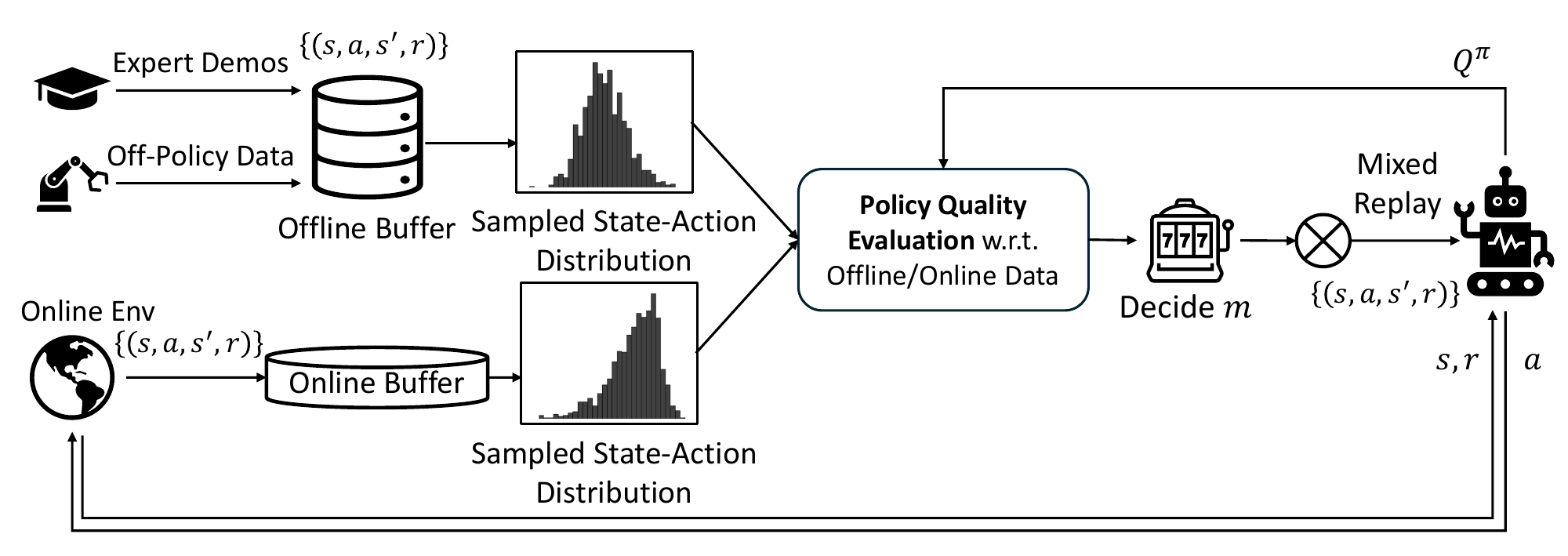}
\end{center}
\caption{Illustration of the online training scheme of ROAD. In the online training phase, ROAD controls the offline data replay pattern by selecting the mixing ratio in an adaptive fashion with the derived surrogate objective for the bi-level optimization problem. The mixing ratio $m$ adjusted by ROAD controls the mixed replay for the offline dataset and the online buffer.}
\label{Fig: framework}
\end{figure*}

Despite an idealized procedure for optimal data mixing strategy in Section \ref{sec:problem-formulation}, directly solving for an optimum with Equation \eqref{eq:outer-level-grad} is intractable in online settings. Consequently, in this section, we present a practical algorithm for the bi-level optimization, ROAD, for online RL fine-tuning.

\subsection{Bridging Theory with Practice}
\label{sec:bridge-theory-with-practice}

The primary obstacle to implementing theoretical gradient ascent problem are the \textit{estimation bias} of the Q-function and severe \textit{numerical instabilities} in gradient computation. First, in practice, $J(\pi(f^*))$ is a biased estimation for the outer-level objective because $\pi$ is optimized against a Q-function $f^*$ characterized by uncertainty. Maximizing this biased objective inevitably misguides the optimization process, which necessitates an unbiased estimator of $J(\pi)$. Furthermore, computing this gradient involves multiple density estimations including that for a rapidly evolving online policy $d^{\pi}$.
These estimations are inherently unstable on online data, and the arithmetic operations further aggravate this instability. Meanwhile, the high variance associated with the mixed Fréchet derivative term easily mislead the gradient direction in sampling-based estimation.
Consequently, to make unbiased estimation on the outer-level objective and meanwhile ensure computational feasibility, we adopt a zero-order optimization approach, employing a MAB mechanism to search for an optimal data mixing strategy.
We establish our practical algorithm on the following two key assumptions.
\begin{assumption}
    The offline dataset offers sufficient number of samples within its support, leading to negligible uncertainty.
    \label{assumption:offline-uncertainty}
\end{assumption}
\begin{assumption}
    The training dynamics varies slowly.
    \label{assumption:slow-vary}
\end{assumption}

Assumption \ref{assumption:offline-uncertainty} posits that offline data serves as an anchor. Within this ``trust region'', it is assumed that uncertainty does not significantly distort the perceived relative value of actions. Meanwhile, Assumption \ref{assumption:slow-vary} is the fundamental premise for modeling non-stationary environments using MAB. It allows for a sliding-window approach to evaluate the performance of data mixing strategies.

\paragraph{An unbiased objective estimator.}
An unbiased estimation for the outer-level objective is shown in Equation \eqref{eq:unbiased-estimation}, where the empirical estimation error $\epsilon = \hat{f} - f$ is assumed to be a zero-mean random field, effectively measuring the epistemic uncertainty. The specific formulation of the bias term is provided in Appendix \ref{sec:build-surrogate}.
\begin{align}
    \mathcal{J}^{k}_{\text{out}}(g) = \mathbb{E}_{(s,a) \sim d^{\pi}} \left[ \hat{f}^*(s,a) \right] - \mathbb{E}_{\epsilon} \left[ \text{Bias}_{\hat{f}^*}\right]
    \label{eq:unbiased-estimation}
\end{align}

\paragraph{Linearized mixing strategies.}
To tackle computational intractability, we restrict the search space to linear combinations of offline and online distributions, parameterized by an offline data replay ratio $m \in [0,1]$. Then the mixed data distribution is $d_m^{\text{sample}}(s,a) = m d^{\text{off}}(s,a) + (1-m) d^{\text{on}}(s,a)$. To ensure compatibility with the MAB search mechanism, the candidate set of mixing ratios $\Lambda$ is a fixed discrete one.

\subsection{The Practical Algorithm}
\label{sec:road-practical}

In this section we construct a robust surrogate objective utilized as the reward within a sliding-window MAB framework. Our approach relies exclusively on readily available statistics and avoids complex auxiliary techniques such as density estimators or Q-ensembles \cite{br-pessimism}, incurring virtually no additional computational cost.

\paragraph{The surrogate objective.}
By approximating the unbiased outer-level objective with statistics directly from the training process, we formulate our surrogate objective as Equation \eqref{eq:R_q}, where $\kappa$ is a hyperparameter. Here we have replaced the theoretical notations with the practical ones, where $Q_{\phi}$ and $\pi_\theta$ are the Q-function and the policy parameterized by $\phi$ and $\theta$ respectively. In this surrogate, $\Delta_{\text{off}}$ is the empirical estimator of RL performance improvement with respect to the offline behavior policy. By employing this advantage-like mechanism, we reduce the variance of performance estimation, leading to a stable MAB reward. Meanwhile, $\Delta_{\text{on}}$ is a theoretically grounded approximation of the overestimation bias at early stages, stabilizing the training process. The justification is provided in Appendix \ref{sec:build-surrogate}.
\begin{align}
    & R_q = \Delta_{\text{off}} - \kappa \Delta_{\text{on}} \text{ where } \nonumber\\
    & \begin{cases}
        \Delta_{\text{off}} = \mathbb{E}_{s \sim \mathcal{D}^{\text{offline}},a\sim \pi_\theta} \left[  Q_{\phi}(s,a) \right] - \mathbb{E}_{(s,a) \sim \mathcal{D}^{\text{offline}}} \left[ Q_{\phi}(s,a) \right] \\
        \Delta_{\text{on}} = \mathbb{E}_{s \sim \mathcal{D}^{\text{online}}, a\sim\pi_\theta} \left[ Q_{\phi}(s,a) \right] - \mathbb{E}_{(s,a)\sim\mathcal{D}^{\text{online}}} \left[ Q_{\phi}(s,a) \right].
    \end{cases}
    \label{eq:R_q}
\end{align}
The two terms correspond to approximations of the empirical performance and the uncertainty penalty respectively. Maximizing $\Delta_{\text{off}}$ encourages policy improvement in the offline ``trust region'', while minimizing $\Delta_{\text{on}}$ stabilizes online learning and prevents unlearning. Though seemingly counter-intuitive, $\Delta_{\text{on}}$ functions analogously to a min-max regularization by constraining aggressive policy updates and preventing the policy from exploiting the Q-function bias.

\paragraph{Implementation of ROAD.} With a practical surrogate objective, we describe our algorithm ROAD as follows. ROAD works under a certain period (e.g., an episode or a certain number of steps) and maintains a recording list for the arm pulled and the surrogate objective $(m^k, R_q^k)$ for each period $k$. When a period begins, ROAD selects a mixing ratio by maximizing the upper confidence bound (UCB) as is shown in Equation \eqref{eq:mab_ucb} and fixes the ratio during the period. In order to model the non-stationarity in the training dynamics, ROAD adopts a sliding window of size $\tau$, which means that we select the best arm using only results from the recent $\tau$ periods.
\begin{align}
    m^{k} = \arg\max_{m \in \Lambda} \left\{ \bar{R}_{q,m} + \sqrt{\frac{c \log (k \wedge \tau)}{N_k(\tau,m)}}\right\},
    \label{eq:mab_ucb}
\end{align}
Here $\bar{R}_{q,m}$ is the average of recorded $R_q$ values for mixing ratio $m$ within the sliding window; $N_k(\tau, m)$ denotes the total number of times $m$ has been selected up to period $k$.
\section{Experiment} \label{sec 5}

\begin{table*}[t]
\centering
\scriptsize
\setlength{\tabcolsep}{2.5pt}
\caption{Performance of ROAD and the baseline methods for offline data replay under Antmaze tasks, Locomotion tasks and Kitchen tasks with IQL. All results are assessed across 4 random seeds. The best performance for fixed mixing ratios is \underline{underlined}, and the best-performed score is \textbf{bolded}.\\ \ }
\label{tab:performance_metrics2}
\begin{tabular}{lcccccccccc}
\toprule
\multicolumn{1}{c}{} & \multicolumn{6}{c}{Fixed Mixing Ratios} & \multicolumn{1}{c}{} & \multicolumn{1}{c}{} & \multicolumn{1}{c}{} & \multicolumn{1}{c}{} \\
\cmidrule(lr){2-7}
Tasks & $0.0$ & 0.1 & 0.2 & 0.3 & 0.4 & 0.5 & Uniform & Decreasing & BR & ROAD \\
\midrule
antmaze-large-diverse & 56.98$\pm$3.33 & \underline{60.32$\pm$4.08} & 55.98$\pm$2.26 & 56.01$\pm$2.29 & 50.66$\pm$3.79 & 50.35$\pm$0.86 & 60.83$\pm$4.76 & 52.15$\pm$0.25 & 46.83$\pm$6.74 & \textbf{63.13$\pm$2.24} \\
antmaze-large-play    & \underline{58.16$\pm$2.63} & 46.51$\pm$1.63 & 53.16$\pm$5.70 & 53.66$\pm$0.61 & 46.82$\pm$2.02 & 55.12$\pm$3.63 & 46.32$\pm$9.76 & 52.17$\pm$1.28 & 38.34$\pm$1.24 & \textbf{59.75$\pm$2.97} \\
antmaze-medium-diverse& 80.00$\pm$4.55 & 78.68$\pm$4.48 & 80.50$\pm$5.00 & 81.66$\pm$2.01 & 82.67$\pm$1.85 & \underline{83.29$\pm$3.71} & 82.30$\pm$2.01 & 81.82$\pm$2.52 & 81.84$\pm$0.75 & \textbf{83.67$\pm$1.39} \\
antmaze-medium-play   & \underline{82.50$\pm$3.11} & 79.17$\pm$2.40 & 77.32$\pm$1.92 & 80.16$\pm$1.66 & 82.00$\pm$2.95 & 78.62$\pm$3.53 & 82.67$\pm$0.26 & 77.16$\pm$3.02 & 80.99$\pm$0.99 & \textbf{83.49$\pm$3.62} \\
antmaze-umaze-diverse & 63.49$\pm$13.79 & \underline{69.66$\pm$15.78} & 7.83$\pm$2.24 & 38.50$\pm$33.15 & 46.51$\pm$21.76 & 16.13$\pm$8.51 & 25.65$\pm$10.24 & 48.82$\pm$23.49 & \textbf{80.66$\pm$3.99} & 72.12$\pm$23.29 \\
antmaze-umaze         & 92.16$\pm$0.94 & 93.32$\pm$0.96 & 93.16$\pm$0.85 & 93.34$\pm$1.32 & 94.33$\pm$0.47 & \underline{94.37$\pm$0.24} & 91.32$\pm$1.76 & 92.99$\pm$0.50 & 94.33$\pm$0.75 & \textbf{95.83$\pm$0.30} \\
% \textbf{Antmaze Total}& 0.00 & 0.00 & 0.00 & 0.00 & 0.00 & 0.00 & 0.00 & 0.00 & 0.00 & 0.00 \\
\midrule
halfcheetah-random            & 41.48$\pm$2.21         & 43.50$\pm$6.28         & \underline{48.25$\pm$3.30} & 39.62$\pm$3.10         & 43.71$\pm$6.07         & 44.39$\pm$4.37         & 45.91$\pm$0.06        & 42.72$\pm$0.22        & 47.43$\pm$1.12         & \textbf{49.37$\pm$3.57} \\
halfcheetah-medium-replay    & 50.70$\pm$3.38         & \underline{54.54$\pm$1.81} & 52.55$\pm$1.72         & 51.55$\pm$1.90         & 50.11$\pm$1.14         & 47.41$\pm$0.34         & 50.07$\pm$0.05        & 53.55$\pm$3.13        & 49.28$\pm$2.29         & \textbf{55.82$\pm$1.07} \\
halfcheetah-medium           & 69.61$\pm$1.58         & \underline{73.53$\pm$1.74} & 72.23$\pm$1.72         & 69.47$\pm$4.25         & 69.39$\pm$2.61         & 67.87$\pm$1.08         & 67.56$\pm$4.61        & 70.46$\pm$0.30        & 72.49$\pm$1.51         & \textbf{74.57$\pm$1.95} \\
halfcheetah-medium-expert    & 63.75$\pm$5.65         & 92.75$\pm$1.40         & 92.45$\pm$1.48         & \underline{94.52$\pm$1.13} & 93.07$\pm$1.16         & 91.54$\pm$5.56         & 92.83$\pm$1.18        & 93.88$\pm$1.70        & 93.34$\pm$2.33         & \textbf{95.06$\pm$0.55} \\
halfcheetah-expert           & 78.28$\pm$3.58         & 94.17$\pm$1.65         & 94.64$\pm$0.66         & 95.65$\pm$0.36         & 95.94$\pm$0.47         & \underline{96.61$\pm$0.45} & 93.83$\pm$1.76        & 93.15$\pm$0.32        & 94.91$\pm$0.77         & \textbf{96.86$\pm$0.39} \\
% \textbf{Halfcheetah Total}& 0.00 & 0.00 & 0.00 & 0.00 & 0.00 & 0.00 & 0.00 & 0.00 & 0.00 & 0.00 \\
\midrule
walker2d-random                  & 6.57$\pm$1.50    & 8.72$\pm$2.36   & 8.13$\pm$0.38   & 10.00$\pm$1.52  & 9.20$\pm$1.08    & \underline{10.65$\pm$1.66} & 8.58$\pm$0.31   & 9.56$\pm$1.00   & 7.58$\pm$0.86   & \textbf{12.43$\pm$0.69} \\
walker2d-medium-replay           & 46.55$\pm$24.38  & 34.56$\pm$7.61  & 34.94$\pm$3.47  & \underline{63.26$\pm$13.31} & 43.33$\pm$9.41   & 40.98$\pm$20.32 & 57.70$\pm$15.01 & 39.16$\pm$2.18  & 70.26$\pm$7.95  & \textbf{78.15$\pm$15.74} \\
walker2d-medium                   & 50.28$\pm$7.17   & \underline{59.07$\pm$14.22} & 56.72$\pm$9.78  & 44.59$\pm$9.07  & 55.74$\pm$16.19  & 31.52$\pm$8.42  & 39.78$\pm$1.67  & \textbf{64.30$\pm$10.13} & 40.11$\pm$19.46 & 60.17$\pm$8.51 \\
walker2d-medium-expert            & \underline{79.40$\pm$11.42} & 76.15$\pm$13.66 & 68.21$\pm$6.43  & 73.54$\pm$11.20 & 62.06$\pm$28.48  & 60.48$\pm$4.65  & 59.12$\pm$22.35 & 82.93$\pm$7.30  & 81.88$\pm$6.36  & \textbf{83.60$\pm$13.06} \\
walker2d-expert                   & 86.02$\pm$12.41  & \textbf{\underline{89.75$\pm$10.80}} & 41.59$\pm$23.56 & 56.49$\pm$18.25 & 73.62$\pm$9.23  & 58.19$\pm$9.67  & 52.01$\pm$8.22  & 74.28$\pm$6.07  & 79.64$\pm$2.37  & 88.06$\pm$2.09 \\
% \textbf{Walker2d Total}& 0.00 & 0.00 & 0.00 & 0.00 & 0.00 & 0.00 & 0.00 & 0.00 & 0.00 & 0.00 \\
\midrule
hopper-random         & 17.43$\pm$10.05 & 18.54$\pm$6.79 & \underline{19.71$\pm$2.87} & 12.82$\pm$3.29 & 13.51$\pm$5.57 & 15.51$\pm$5.41 & 13.77$\pm$3.61 & 19.80$\pm$1.37 & 21.48$\pm$2.84 & \textbf{31.77$\pm$4.44} \\
hopper-medium-replay  & 54.46$\pm$29.79 & \underline{80.14$\pm$25.89} & 67.24$\pm$13.88 & 71.00$\pm$25.77 & 67.35$\pm$9.05 & 77.74$\pm$29.92 & 97.93$\pm$13.56 & 77.12$\pm$9.42 & 88.27$\pm$24.20 & \textbf{99.14$\pm$9.97} \\
hopper-medium         & 87.25$\pm$19.06 & 81.84$\pm$29.83 & 61.15$\pm$18.09 & 77.46$\pm$18.12 & \underline{87.85$\pm$7.86} & 73.30$\pm$22.99 & 86.17$\pm$2.53 & 85.37$\pm$13.13 & 88.56$\pm$20.03 & \textbf{99.23$\pm$1.05} \\
hopper-medium-expert  & 46.63$\pm$4.13 & 49.86$\pm$8.83 & 47.01$\pm$12.77 & \underline{75.97$\pm$29.32} & 57.68$\pm$12.16 & 73.46$\pm$22.75 & 66.02$\pm$23.94 & 61.76$\pm$4.93 & 66.21$\pm$4.65 & \textbf{94.08$\pm$16.78} \\
hopper-expert         & 71.28$\pm$24.84 & \underline{72.74$\pm$12.96} & 62.33$\pm$14.68 & 57.80$\pm$3.67 & 52.69$\pm$10.61 & 54.07$\pm$1.54 & 50.81$\pm$5.70 & 58.80$\pm$4.32 & 39.62$\pm$5.91 & \textbf{85.10$\pm$4.27} \\
% \textbf{Hopper Total}  & 0.00 & 0.00 & 0.00 & 0.00 & 0.00 & 0.00 & 0.00 & 0.00 & 0.00 & 0.00 \\
\midrule
kitchen-partial       & 4.16$\pm$13.27 & 38.17$\pm$11.61 & 39.55$\pm$4.71 & 23.76$\pm$9.43 & 18.74$\pm$14.94 & \underline{41.87$\pm$18.71} & 22.04$\pm$0.00 & 32.93$\pm$28.74 & 29.99$\pm$29.37 & \textbf{42.65$\pm$5.13} \\
kitchen-mixed         & 0.41$\pm$0.59 & 44.34$\pm$1.58 & \underline{45.44$\pm$8.12} & 40.02$\pm$7.12 & 39.14$\pm$15.11 & 45.00$\pm$9.17 & 55.83$\pm$11.90 & 29.58$\pm$13.81 & 47.56$\pm$0.64 & \textbf{56.62$\pm$6.18} \\
kitchen-complete      & 10.04$\pm$1.56 & 51.65$\pm$10.22 & \underline{52.07$\pm$6.64} & 47.08$\pm$12.38 & 19.59$\pm$24.76 & 41.86$\pm$11.94 & 53.75$\pm$13.69 & 21.65$\pm$16.24 & 41.64$\pm$27.49 & \textbf{66.25$\pm$17.63} \\
% \textbf{Kitchen Total}& 0.00 & 0.00 & 0.00 & 0.00 & 0.00 & 0.00 & 0.00 & 0.00 & 0.00 & 0.00 \\
\midrule
\textbf{Average} & 54.07 & 62.15 & 57.28 & 58.66 & 56.49 & 56.26 & 58.45 & 59.00 & 61.80 & \textbf{71.95} \\
\bottomrule
\end{tabular}
% \vspace{0.2cm}
\end{table*}

We study how well ROAD can facilitate efficient training in the online phase with adaptive data mixing. To this end, we compare ROAD with several other data replay methods on a variety of offline-to-online RL benchmark tasks with various offline datasets. We also study the behavior of ROAD in adaptively adjusting its data replay pattern in various environments. Finally, we perform empirical studies to investigate the robustness of ROAD under different parameter settings.

\paragraph{O2O RL tasks and settings.}
Our benchmarking tasks and datasets include: \textbf{(1)} AntMaze tasks from D4RL \cite{d4rl} which involves navigating an ant quadruped robot to a predetermined goal location within a maze; \textbf{(2)} Gym MuJoCo Locomotion tasks \cite{mujoco,iql} that leverage the MuJoCo physics simulation engine targeted for robotic control with distinct morphologies and objectives; \textbf{(3)} FrankaKitchen tasks from D4RL \cite{cal-ql} where a 9-DoF Franka robot is controlled to reach a target configuration within a kitchen setup. Further information is provided in Appendix~\ref{appendix: environment details}.

\paragraph{Implementation details for ROAD.}
We examine the effect of ROAD and the baseline data replay approaches with multiple popular offline-to-online RL algorithms, including IQL \cite{iql}, PEX \cite{pex}, CQL \cite{cql}, and Cal-QL \cite{cal-ql}, to illustrate the robustness of ROAD when integrated with various offline phase implementations. Due to the space limits, we report only the IQL-based results in the main text (Section \ref{sec: 5.1}) and other results are left in Appendix \ref{Appendix: result of Cal-QL and CQL}. The candidate set $\Lambda$ is $\{0.1, 0.2, 0.3, 0.4, 0.5\}$, and the hyperparameter $\kappa = 1$. The exploration parameter $c=2.0$ in all cases to encourage adaptation to the varying training dynamics. The sliding-window size is $\tau = 1,000$ for all tasks except $\tau = k$ for Antmaze because these tasks feature sparse rewards which leads to high variance in estimating $R_q$. Further implementation details and hyper-parameters are listed in Appendix \ref{appendix: params}.

\paragraph{Baselines}

We compare ROAD with the following baseline data mixing strategies: \textbf{(i)} \textbf{\texttt{Fixed}} \cite{pex,cal-ql} mixing ratios conduct online fine-tuning on a linear combination of online and offline distributions. We run the experiments across all fixed ratios in $\Lambda$ to compare ROAD against the best one. \textbf{(ii)} \textbf{\texttt{Decreasing}} ratio adopts the heuristic that offline data stabilizes training at early stages, and thus decreases the mixing ratio linearly from $0.5$ linearly to $0.1$ (we do not take $0$ as minimum for better performance). \textbf{(iii)} \textbf{\texttt{Balanced Replay}} (\textbf{\texttt{BR}}) \cite{br-pessimism} adopts another family of heuristics that ``nearly on-policy'' samples help stabilize training.  \textbf{(iv)} \textbf{\texttt{Uniform}} ratios select mixing ratios uniformly within the candidate set.

\subsection{Empirical Results} \label{sec: 5.1}

\begin{figure*}[t]
  \centering
  \begin{subfigure}[b]{\textwidth}
    \centering
    \begin{subfigure}[b]{0.18\textwidth}
      \includegraphics[width=\textwidth]{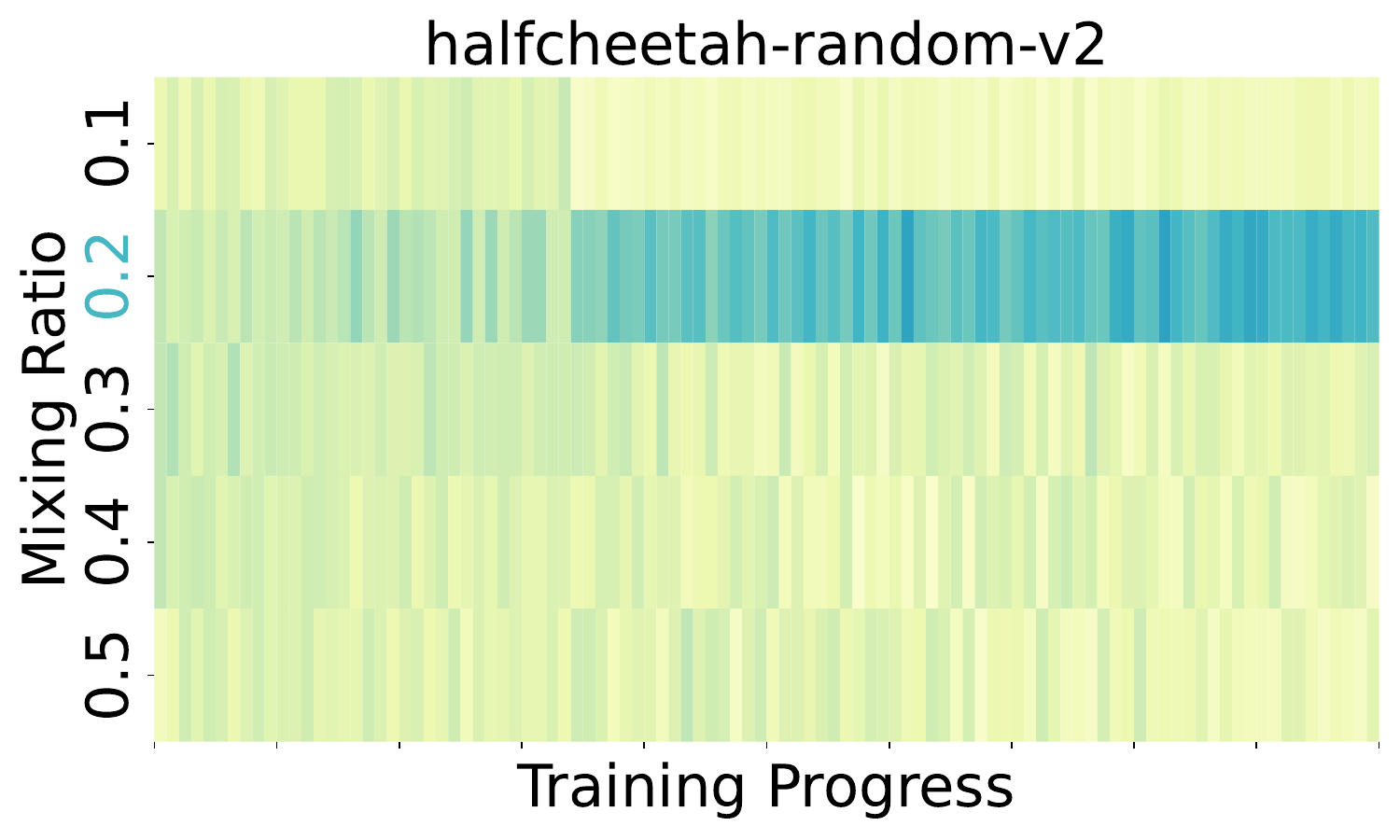}
    \end{subfigure}
    \begin{subfigure}[b]{0.18\textwidth}
      \includegraphics[width=\textwidth]{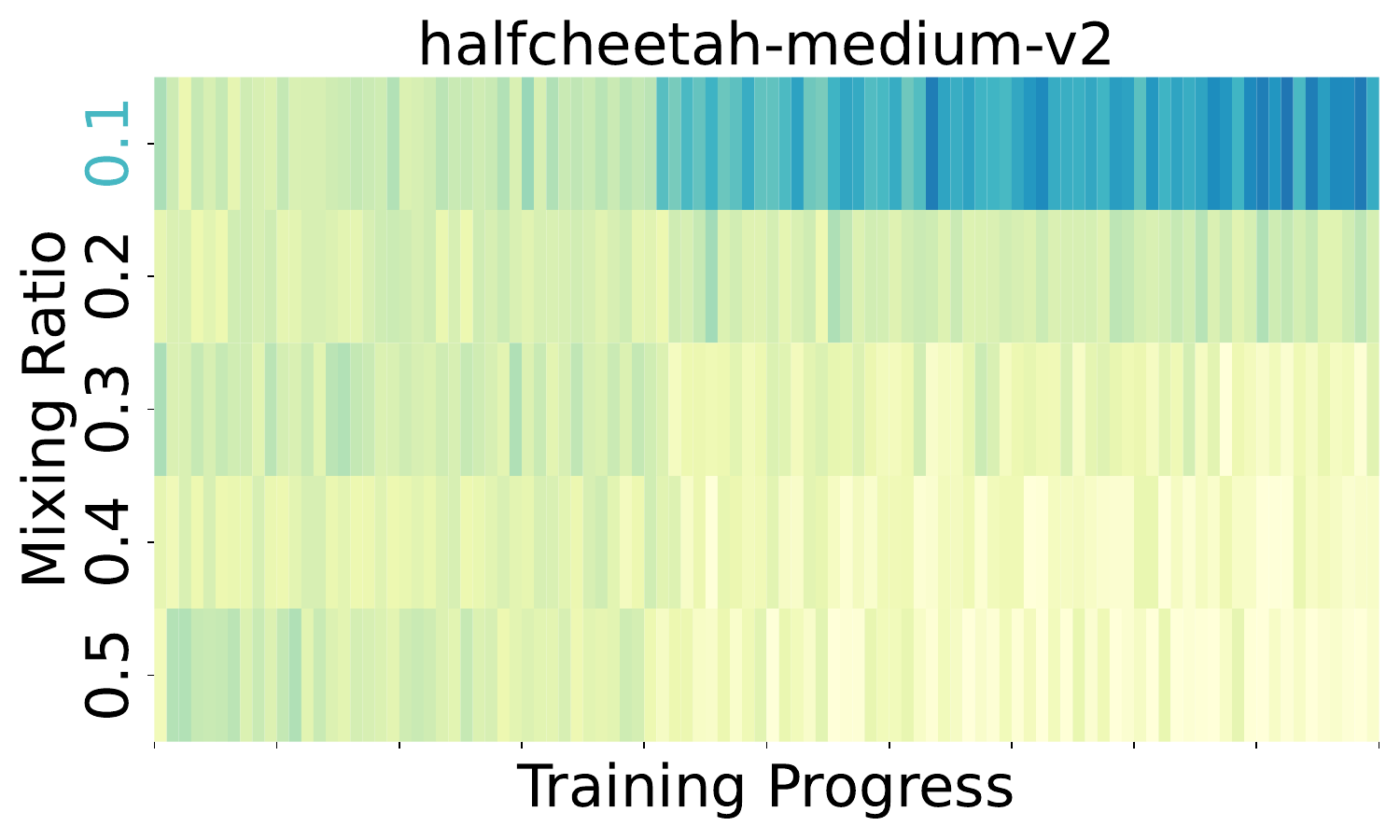}
    \end{subfigure}
    \begin{subfigure}[b]{0.18\textwidth}
      \includegraphics[width=\textwidth]{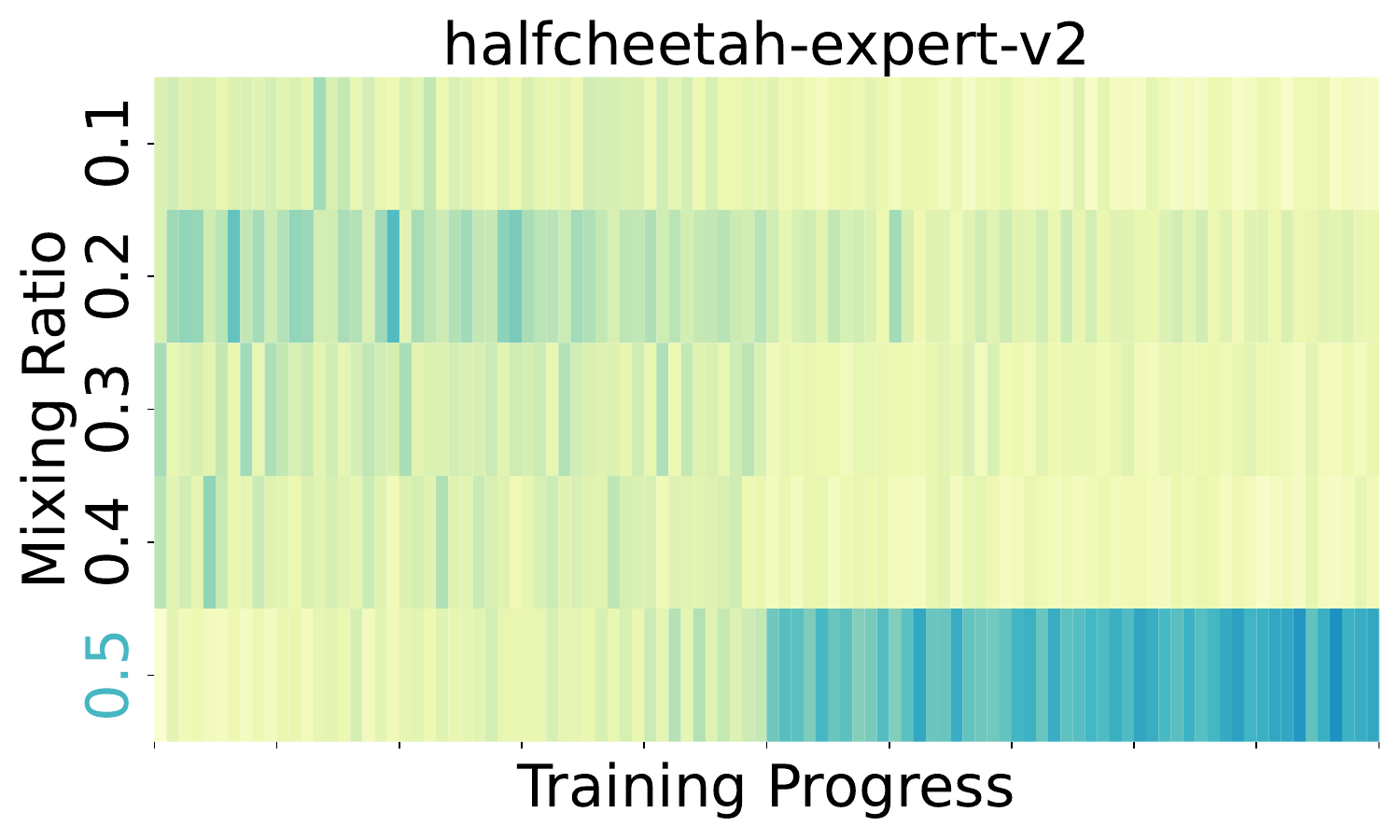}
    \end{subfigure}
    \begin{subfigure}[b]{0.18\textwidth}
      \includegraphics[width=\textwidth]{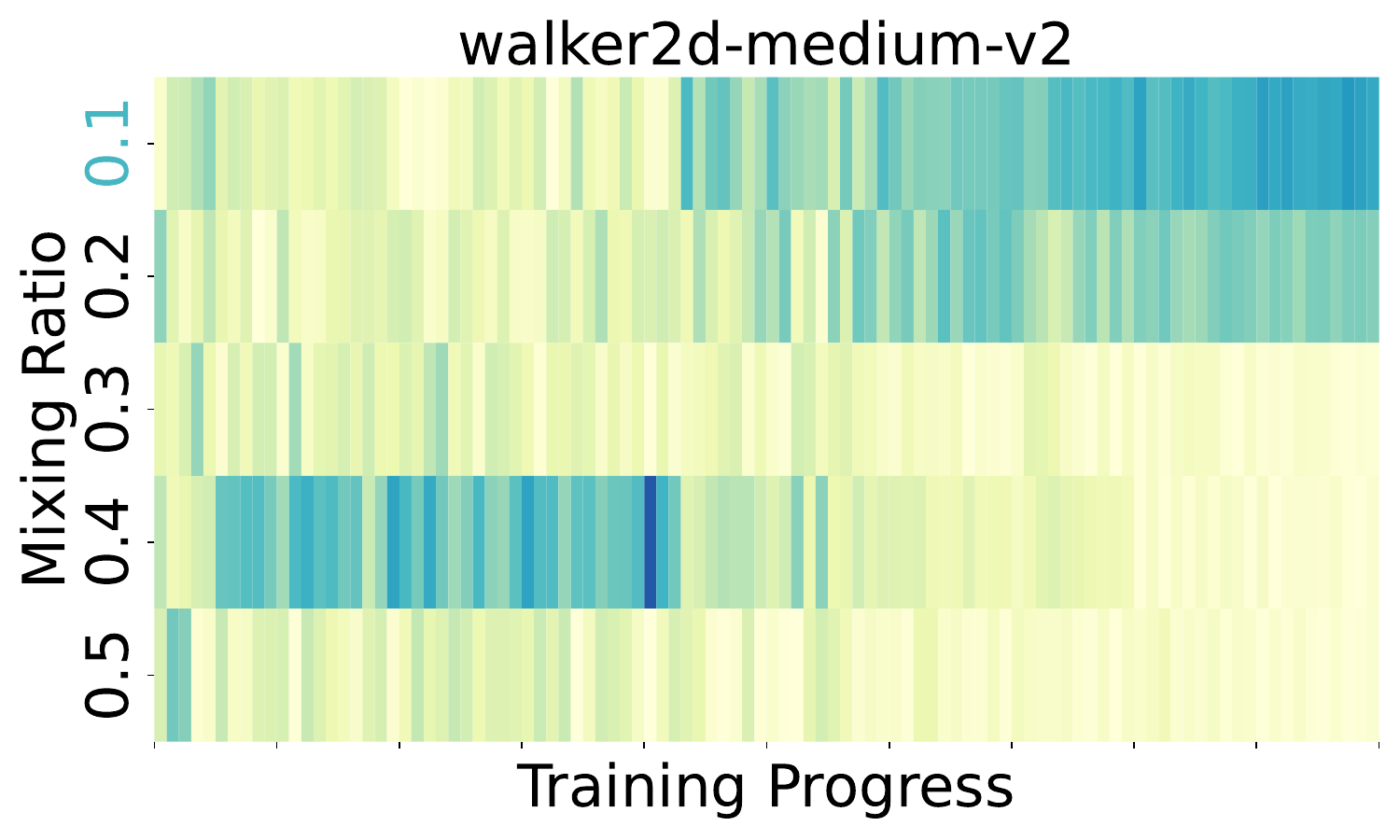}
    \end{subfigure}
    \begin{subfigure}[b]{0.18\textwidth}
      \includegraphics[width=\textwidth]{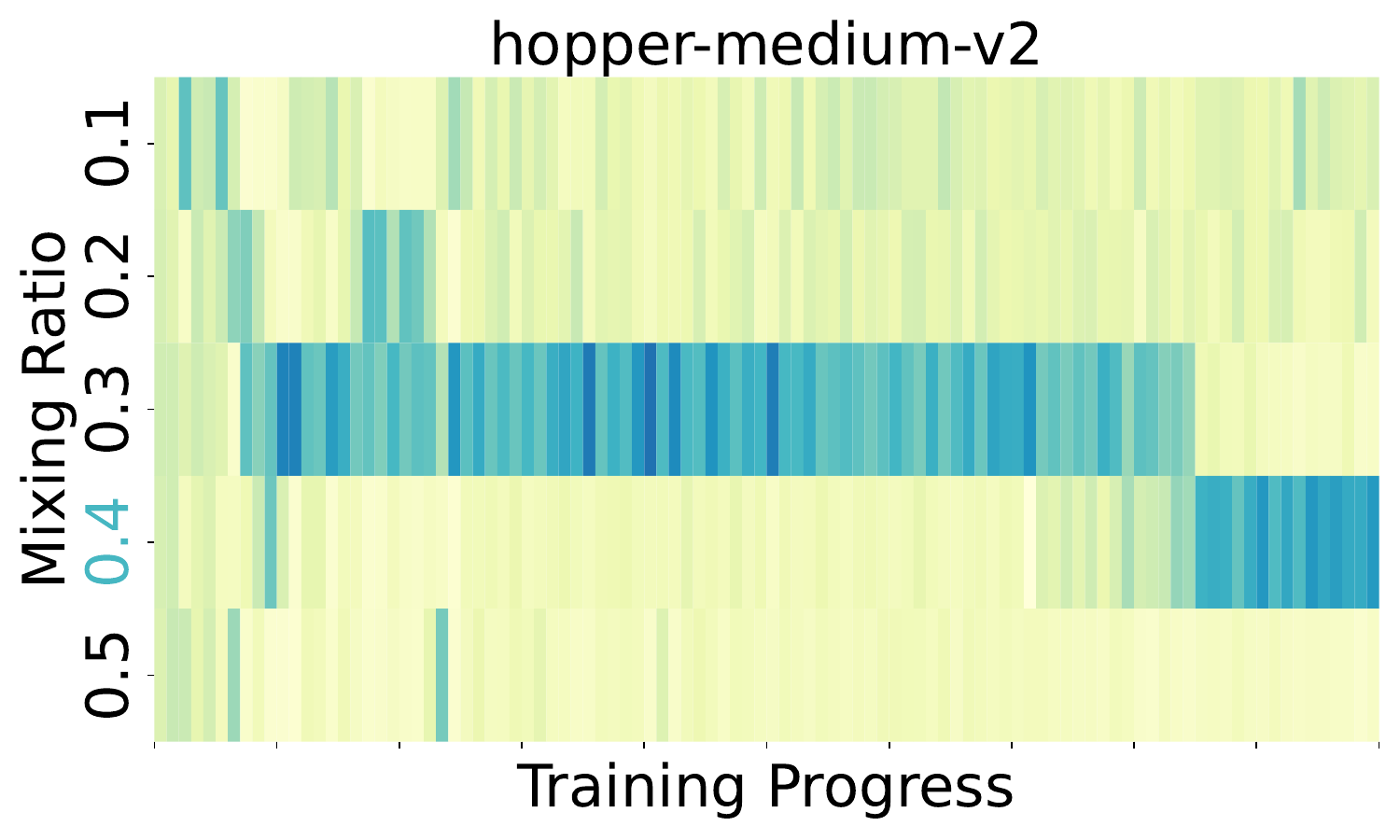}
    \end{subfigure}
    \begin{subfigure}[b]{0.035\textwidth}
      \includegraphics[width=\textwidth]{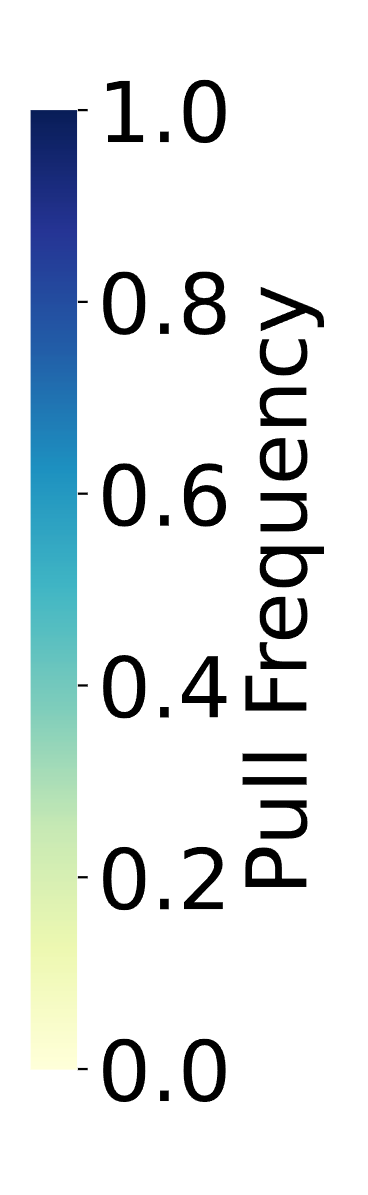}
    \end{subfigure}
  \end{subfigure}
  \caption{Heatmap for mixing ratio selection in ROAD. In some cases, ROAD could learn to converge to the mixing ratio that performs best when fixed (left three), while in some other cases it could adjust its mixing ratio pattern more adaptively (right two).
  }
  \label{fig: heatmap}
  \vspace{0.5cm}
\end{figure*}

\paragraph{ROAD shows solid performance and adaptivity across various environments.}
Table \ref{tab:performance_metrics2} provides an analysis of ROAD's performance using normalized metrics with IQL-based offline phase as an example. \textbf{\texttt{Fixed}} mixing ratios have a performance heavily dependent on the chosen ratio. While some environments are less sensitive, others exhibit pronounced differences. Heuristic-based strategies like \textbf{\texttt{Decreasing}} and \textbf{\texttt{BR}} offer improved robustness to data quality variations but lack consistent superiority. In contrast, ROAD presents consistently strong or comparable performance across all tested environments, emerging as the most stable and adaptive method evaluated. The comparison against \textbf{\texttt{Uniform}} arm selection demonstrates the efficacy of our proposed surrogate $R_q$.

\paragraph{ROAD shows adaptivity across various offline data qualities and RL algorithms.}
In our benchmarking environments, ROAD showcases its ability to handle diverse offline data qualities. The performances of baselines exhibit varying performance depending on dataset quality while struggling to adapt to changing data quality. For example, when an initially inferior offline policy undergoes training and outperforms the offline dataset, it is generally intuitive to switch the data mixing strategy. By contrast, ROAD consistently matches or exceeds baseline performance across different dataset qualities. This adaptability extends to various RL algorithms, including PEX, Cal-QL and CQL, whose results are shown in Appendix \ref{Appendix: result of Cal-QL and CQL}. ROAD also effectively integrates online RL algorithms with offline data, even without offline pretraining, as demonstrated in Appendix \ref{appendix: result of RLPD} with RLPD \cite{rlpd}. These results highlight ROAD’s flexibility and robustness in diverse settings, adapting seamlessly to varying algorithms and offline data conditions.

\begin{table*}[t]
\scriptsize
\centering
\caption{Ablation study on the offline and online components of relative policy quality in ROAD.}
\begin{tabular}{@{}lccccccccc@{}}
\toprule
Hyperparameter choice of $\kappa$                      & $0$ ($R_q=\Delta_\text{off}$) & $0.2$      & $0.5$     & $1.0$       & $2.0$        & $5.0$       & $\infty$ ($R_q=-\Delta_\text{on}$) \\ \midrule
halfcheetah-medium        & $72.25 \pm 2.70$                     & $75.05 \pm 1.10$  & $77.09 \pm 0.91$ & $74.57 \pm 1.95$ & $76.33 \pm 1.56$  & $74.33 \pm 1.48$ & $68.31 \pm 1.86$                            \\
halfcheetah-medium-replay & $55.08 \pm 1.29$                     & $57.25 \pm 2.71$  & $53.51 \pm 1.68$ & $55.82 \pm 1.07$ & $55.16 \pm 0.82$  & $54.74 \pm 1.83$ & $47.98 \pm 3.41$                            \\
hopper-medium             & $82.88 \pm 15.84$                    & $85.60 \pm 16.13$ & $91.58 \pm 4.46$ & $99.23 \pm 1.05$ & $97.11 \pm 4.57$  & $94.15 \pm 4.75$ & $73.96 \pm 8.17$                            \\
hopper-medium-replay      & $52.78 \pm 14.83$                    & $98.28 \pm 7.81$  & $96.11 \pm 6.49$ & $99.14 \pm 9.97$ & $85.00 \pm 14.93$ & $93.93 \pm 6.29$ & $93.39 \pm 3.73$                            \\
\bottomrule
\end{tabular}
\label{table: ablation study}
% \vspace{0.5cm}
\end{table*}

\subsection{Understanding ROAD}

\paragraph{The dynamics of $R_q$.}
Our surrogate objective $R_q$ leverages in-training statistics to capture the training dynamics and guide data mixing decisions accordingly. To intuitively illustrate the physical interpretation of our bi-level optimization theory, we construct a simple MDP and a corresponding offline dataset, and plot the curves of $\Delta_{\text{off}}$ and $\Delta_{\text{on}}$ throughout the online fine-tuning phase. As is depicted by Figure \ref{fig:examplar-mdp}, $\Delta_{\text{off}}$ consistently acts as a proxy for policy improvement. Conversely, early fine-tuning is safely constrained by minimizing $\Delta_{\text{on}}$. $\Delta_{\text{on}}$ later decays in scale, leaving it as a noise term in $R_q$ and thus preventing it from suppressing true policy improvement. More details are provided in Appendix \ref{Appendix: simple MDP}.
\begin{figure}[htbp]
    \centering
    \includegraphics[width=0.8\linewidth]{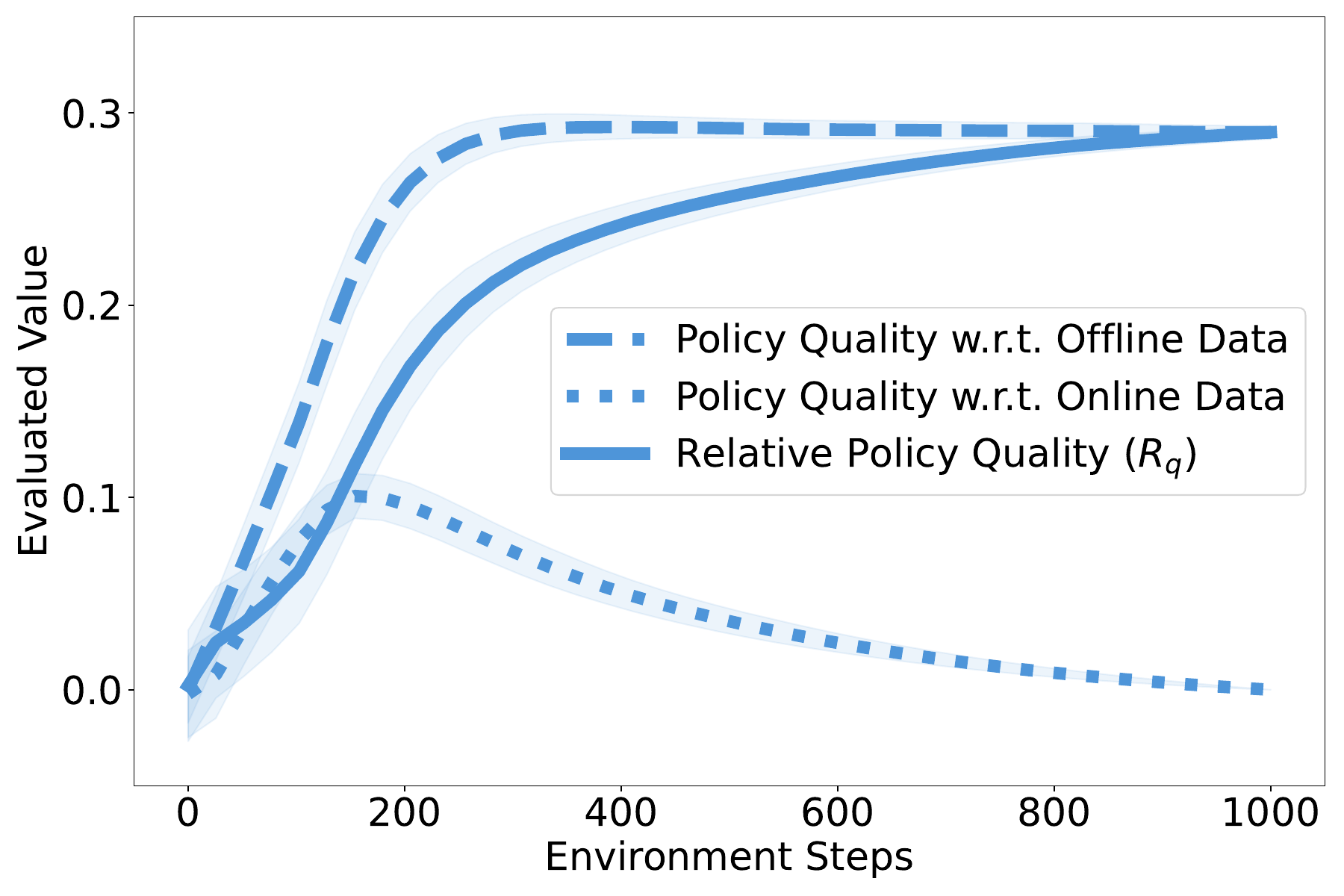}
    \caption{The $R_q$ curve in online fine-tuning.}
    \label{fig:examplar-mdp}
\end{figure}

\paragraph{The behavior of ROAD.}
To analyze how ROAD adapts mixing ratios of offline and online data across environments, we visualized its decisions during training using heatmaps. These heatmaps, shown in Figure \ref{fig: heatmap}, capture the evolution of mixing ratios for different environments and offline datasets. Specifically, different environments witness different type of data mixing strategies. In some cases, ROAD converges to a fixed ratio, while in other cases, ROAD adapts its strategy to the altered training dynamics. By exploring diverse ratios early in training, ROAD enables policies to become more robust to varying state-action distributions, leading to improved learning outcomes and more effective integration of data sources.

\subsection{Ablation and Parameter Sensitivity Study}

\paragraph{The hyperparameter $\kappa$.} This hyperparameter effectively balances aggressive policy improvement and conservative offline data utilization. Two extremes are $\kappa=0$ and $\kappa \to \infty$ corresponding to ablating the contributions of the offline and online components of relative policy quality. Results in Table~\ref{table: ablation study} demonstrate that the full ROAD framework with both offline and online components consistently outperforms settings that rely solely on one component. However, ROAD demonstrates low sensitivity to $\kappa$ when $\kappa$ falls within a moderate range. In these cases, ROAD works as expected, functioning as a stabilizer initially (dominated by $\Delta_\text{on}$) and optimizes asymptotic performance later on (dominated by $\Delta_\text{off}$).

\paragraph{The sliding-window size $\tau$ and exploration parameter $c$.}
Two critical parameters in balancing the exploration and exploitation of the ROAD framework are the exploration parameter $c$ and the sliding-window size $\tau$. To explore the effects of these parameters, we conducted experiments on the Halfcheetah environment with various combinations of $c$ and $\tau$, as detailed in Table \ref{tab: parameter c}. The results demonstrate that ROAD's performance remains robust across different parameter settings, highlighting its resilience to parameter variations. Typically, setting $\tau = 1000$ better captures the slowly-varying training dynamics in most environments, leading to better asymptotic performance. This indicates ROAD's broad adaptability to different environments and offline datasets.

\begin{table}
\scriptsize
\centering
\caption{ROAD performance across $c$ and $\tau$ in Halfcheetah (mean).}
\label{tab: parameter c}
\begin{tabular}{p{1.2cm}p{1.2cm}p{1.2cm}p{1.2cm}}
\toprule
& \multicolumn{3}{c}{$\tau$} \\
\cmidrule{2-4}
$c$ & $500$ & $1000$ & $\tau = t$ \\
\midrule
$0.5$ & 72.82 & 71.52 & 71.49 \\
$1.0$ & 73.19 & 74.08 & 72.33 \\
$2.0$ & 72.80 & 74.34 & 73.05 \\
$4.0$ & 71.72 & 73.90 & 74.16 \\
\bottomrule
\end{tabular}
\end{table}

\section{Conclusions, Limitations and Future Work} \label{sec 6}

In this work, we provide a novel perspective on how data mixing affects offline-to-online RL fine-tuning. Under this perspective, we identify the objective misalignment problem, where minimizing the Bellman error does not translate to maximized policy performance. To address this misalignment, we propose \textbf{R}einforcement Learning with \textbf{O}ptimized \textbf{A}daptive \textbf{D}ata-mixing, a novel, plug-and-play framework that formulates data mixing as a bi-level optimization problem. By treating data mixing strategy as a meta-decision variable, ROAD dynamically aligns the inner-level Q-learning objective with the goal of optimizing policy performance.

Our theoretical analysis of the optimization problem yields a gradient-based solution. Central to bridging the theory and practice is the derivation of a theoretically grounded surrogate objective $R_q$ which approximates the unbiased estimator of the outer-level objective. This surrogate encourages policy improvement in trusted regions (via $\Delta_\text{off}$) and suppresses value overestimation (via $\Delta_\text{on}$). Extensive empirical evaluations demonstrate that ROAD consistently outperforms existing methods dependent on static and heuristic-based data replay strategies. ROAD exhibits robustness across various backbone algorithms and diverse offline data qualities.

Despite the theoretical rigor and promising empirical results, ROAD has limitations that merit discussion. The primary limitation lies in the assumption of linearized mixing strategies and further discretizations of the search space. These assumptions ensure computational feasibility but are essentially projecting our theoretical solution in the infinite-dimensional function space $\mathcal{G}$ to a low-dimensional subspace, compromising the rich expressivity of the function space. Another limitation stems from Assumption \ref{assumption:offline-uncertainty}, indicating reliance on the offline dataset. In scenarios with extremely sparse or narrow offline datasets where the Q-value estimation is not sufficiently anchored, the reliability of $R_q$ might degrade.

The promising directions primarily correspond to the limitation of restricted strategy search spaces. One natural extension to ROAD is to incorporate sample-level data selection (as suggested by the $w_g$ term in Equation \eqref{eq:outer-level-grad}), effectively introducing larger search spaces of mixing strategies. Another future work would be exploring methods to search for the optimal mixing strategy in a continuous space, thus allowing for more delicate control over the training dynamics. Apart from these methodological extensions, the application of this framework to real-world robotic tasks, which emphasizes the algorithm's ability to prevent unlearning while ensuring asymptotic performance, is also of interest.

\section*{Acknowledgments}

This work was supported by Ant Group Research Fund.

\bibliographystyle{named}
\bibliography{citation}

% \include{sections/0-abstract}
% \include{sections/1-introduction}
% \include{sections/2-related_work}
% \include{sections/3-theory}
% \include{sections/4-method}
% \include{sections/5-experiment}
% \include{sections/6-conclusion}

% \section*{Acknowledgments}
% This work was supported by Ant Group Research Fund.

%% The file named.bst is a bibliography style file for BibTeX 0.99c
% \bibliographystyle{ijcai-2026/named}
% \bibliography{citation}

\newpage
\onecolumn
\appendix

\section{Solution to the Bi-Level Optimization}
\label{sec:appendix-theoretical-solution}

As is demonstrated in Equation \eqref{eq:bi-level-objective}, we solve a bi-level optimization problem that selects an optimal data mixing strategy. Following the bi-level optimization theory \cite{bi-level-opt}, we compute the gradient of the outer-level objective for a theoretical gradient ascent algorithm. Although we are solving an optimization problem within each FQI iteration, all the iterations share the same formulation with only differences in the available data sources. Therefore, we drop the iteration superscript $k$ and denote the outer-level and inner-level objectives as $\mathcal{J}_{\text{out}}$ and $\mathcal{J}_{\text{in}}$ here.

\paragraph{The dependency chain of variables.} The bi-level optimization problem is established on a rigorous causal chain as shown below. Specifically, the data mixing strategy $g$ determines the sampling distribution $d^{\text{sample}}_g$, shaping the loss landscape of $\mathcal{J}_{\text{in}}$. Optimizing this inner-level objective leads to the optimal Q-function $f^*(g)$ and the corresponding policy $\pi(f^*(g))$. Then we evaluate this policy in the environment and obtain an empirical approximation of the outer-level objective (the expected performance of the online policy).
\begin{align*}
    g \xrightarrow{\text{inner-level optimization}} f^*(g) \xrightarrow{\text{soft-greedy}} \pi(f^*(g)) \xrightarrow{\text{exploration}} \mathcal{D}^{\text{online}} \xrightarrow{\text{evaluation}} \mathcal{J}_{\text{out}},
\end{align*}
In the following theoretical derivation, we explicitly takes the dependency chain of variables into account and compute the outer-level gradient in the functional space. For clarify of the dependency structure, we rewrite the objective $\mathcal{J}_{\text{out}}(g)$ as $\mathcal{J}_{\text{out}}(g, f^*(g), \pi(f^*(g)))$.
% we mainly focus on how $g$ and $f^*(g)$ influence the outer-level objective, while abstracting away the intermediate effects of the online policy. Therefore, for clarify of the dependency structure, we rewrite the objective $\mathcal{J}_{\text{out}}(g)$ as $\mathcal{J}_{\text{out}}(g, f^*(g))$.

\paragraph{The Bi-Level Optimization Problem Formulation}

In this theoretical derivation, we follow the notations from RL theory works such as Cal-QL \cite{cal-ql}. We assume the Q-function space $\mathcal{F}$ is the Hilbert space $L^2(\mathcal{S}\times\mathcal{A})$ defined on the state-action space $\mathcal{S}\times\mathcal{A}$, within which the inner product is defined by $\langle f,h \rangle_{\mathcal{F}} =\int_{\mathcal{S}\times\mathcal{A}} f(s,a) h(s,a) \mathrm{d}s\mathrm{d}a$. We assume that the state and action space $\mathcal{S}$ and $\mathcal{A}$ are both compact sets (so that the integrals on these sets exists). In our problem, the data mixing strategy $g$ maps the input datasets to a sampling distribution $d_g \in \Delta(\mathcal{S}\times\mathcal{A})$, thereby defining the inner-level optima $f^*(g)$ through the inner-level objective $\mathcal{J}_{\text{in}}$.
\begin{align}
    \mathcal{J}_{\text{in}}(g,f) = \mathbb{E}_{(s,a) \sim d^{\text{sample}}_g} \left[ \left( \mathcal{T}f^{k}(s,a) - f(s,a) \right)^2 \right] = \int_{\mathcal{S}\times\mathcal{A}} d^{\text{sample}}_g(s,a) \left( \mathcal{T}f^{k}(s,a) - f(s,a) \right)^2 \mathrm{d}s\mathrm{d}a
    \nonumber
\end{align}
With this inner-level optimization, we assume a differentiable mapping $\Psi:\mathcal{F}\to \Pi$ that induces $\pi=\Psi(f)$ from a Q-function $f$. One typical example could be the softmax mapping where $\pi(a|s) \propto \exp\left( \beta f(s,a) \right)$. Finally, the outer-level objective is defined by $\mathcal{J}_{\text{out}}$, where $\mu\in\Delta(\mathcal{S})$ is the initial state distribution as defined in Section \ref{sec:theory-preliminary}.
\begin{align}
    \mathcal{J}_{\text{out}}(g,f^*(g),\pi(f^*(g)))
    % = J(\pi(f^*(g)))
    % = \mathbb{E}_{s\sim \mu, \tau \sim \pi(f^*(g))} \left[ \sum_{t} \gamma^t r_t \vert s_0=s \right]
    = \mathbb{E}_{(s,a) \sim d^{\pi(f^*(g))}} \left[ f^*(s,a) \right]
    = \int_{\mathcal{S}\times\mathcal{A}} d^{\pi(f^*(g))}(s,a) f^*(s,a) \mathrm{d}s\mathrm{d}a
    \nonumber
\end{align}

\subsection{Computing the Outer-Level Gradient}

To compute the outer-level gradient $\nabla_g \mathcal{J}_{\text{out}}$, we apply the chain rule and decompose it into several parts. The $D_g f^*$ term is computed via the implicit function theorem, and $\nabla_\pi \mathcal{J}_\text{out}$ is essentially the policy gradient in reinforcement learning.
\begin{align}
    \nabla_g \mathcal{J}_\text{out} = \left( D_g f^* \right)^\dagger \nabla_f \mathcal{J}_\text{out}
    \label{eq:comp-decompose-outer-grad}
\end{align}

\paragraph{The implicit inner-level gradient.} The inner-level optimality condition is that $\nabla_f \mathcal{J}_{\text{in}}(g, f^*(g))=0$ (notably, in $\mathcal{J}_\text{in}$, the two variables $f$ and $g$ are independent). Let $\Phi(g,f)=\nabla_f \mathcal{J}_\text{in}(g,f)$ where $f=f^*(g)$, thus introducing the dependency of $f$ on $g$. We take the derivative with respect to the data mixing strategy $g$ and obtain Equation \eqref{eq:relating_g_f^*}.
% \begin{align}
%     & \nabla_{g} \left( \partial_f \mathcal{J}_{\text{in}}(g, f^*(g)) \right) = \partial_{g,f} \mathcal{J}_{\text{in}} + \partial_f^2 \mathcal{J}_{\text{in}} \cdot \partial_g f^*(g) = 0 \nonumber\\
%     \Rightarrow & \partial_g f^*(g) = - \left( \partial_f^2 \mathcal{J}_{\text{in}} \right)^{-1} \partial_{g,f} \mathcal{J}_{\text{in}}
%     \label{eq:relating_g_f^*}
% \end{align}
\begin{align}
    D_g \Phi(g,f^*(g))=0
    & \Rightarrow
    \partial_g \Phi + \partial_f \Phi \circ D_g f^* = 0
    \Rightarrow
    D^2_{gf}\mathcal{J}_\text{in} + D^2_{f} \mathcal{J}_\text{in} \circ D_g f^* = 0
    \nonumber\\
    & \Rightarrow
    D_g f^* = - \left( D^2_{f} \mathcal{J}_\text{in} \right)^{-1} \cdot D^2_{gf}\mathcal{J}_\text{in}.
    \label{eq:relating_g_f^*}
\end{align}
With the explicit form of inner-level objective $\mathcal{J}_{\text{in}}$, we can translate these abstract operators into practical forms with clear semantics. Let $\delta_f^{k}(s,a) = f(s,a) - \mathcal{T}f^{k}(s,a)$ be the temporal difference error, $h\in\mathcal{F}$ be a perturbation, then we have:
\begin{align}
    & D_f \mathcal{J}_{\text{in}}(g,f)[h]
    = \int_{\mathcal{S}\times\mathcal{A}} d^{\text{sample}}_g(s,a) \frac{\partial}{\partial f} \left( \delta_f^k(s,a) \right)^2 h(s,a)\mathrm{d}s\mathrm{d}a
    % = 2\int_{\mathcal{S}\times\mathcal{A}} d^{\text{sample}}_g(s,a) \delta_f^k(s,a) h(s,a)\mathrm{d}s\mathrm{d}a
    = \left\langle 2 d_g^{\text{sample}} \delta_f^k , h \right\rangle_{\mathcal{F}}
    \nonumber\\
    \Rightarrow &
    (\nabla_f \mathcal{J}_{\text{in}})(s,a) = 2 d_g^{\text{sample}}(s,a) \delta_f^k (s,a)
    % \nonumber\\
    \Rightarrow
    \begin{cases}
        D^2_{f} \mathcal{J}_\text{in}(g,f)[h] = D_f \left( \nabla_f \mathcal{J}_{\text{in}} \right)[h] = \left\langle 2 d_g^{\text{sample}} , h \right\rangle_{\mathcal{F}}
        \\
        D^2_{gf}\mathcal{J}_\text{in}(g,f)[h] = D_g \left( \nabla_f \mathcal{J}_{\text{in}} \right)[h] = \left\langle 2 \delta_f^k \left( \nabla_g d_g^{\text{sample}} \right), h \right\rangle_{\mathcal{F}}
    \end{cases},
    \nonumber
\end{align}
where we utilize the fact that the Hessian operator for the weighted least squares objective $\mathcal{J}_{\text{in}}$ is self-adjoint. Substituting this into Equation \eqref{eq:comp-decompose-outer-grad}, we have
\begin{align}
    \nabla_g \mathcal{J}_\text{out} = -\int_{\mathcal{S}\times\mathcal{A}} \frac{\nabla_f \mathcal{J}_{\text{out}}(s,a)}{d_g^{\text{sample}}(s,a)} \left( \nabla_g d_g^{\text{sample}} \right) (s,a) \delta_f^k(s,a) \mathrm{d}s\mathrm{d}a.
    \label{eq:substitute-inner-level-grad}
\end{align}

\paragraph{Connecting $f$ and $\mathcal{J}_{\text{out}}$.}
The remaining problem is to take the derivative of $\mathcal{J}_{\text{out}}(\pi,f)=\mathbb{E}_{(s,a) \sim d^{\pi}} \left[ f(s,a) \right]$ with respect to $f$. Since the outer-level objective is dependent on both $f$ and $\pi$, the derivative is decomposed as Equation \eqref{eq:comp-outer-grad-on-f}.
\begin{align}
    D_f \mathcal{J}_{\text{out}} = \partial_f \mathcal{J}_{\text{out}} + (D_f \pi)^\dagger \partial_\pi \mathcal{J}_\text{out}
    \label{eq:comp-outer-grad-on-f}
\end{align}
The remaining three terms are straightforward enough to be directly interpreted. The resulting gradient is
\begin{align}
    \left\langle \nabla_f \mathcal{J}_\text{out}, h \right\rangle_{\mathcal{F}} = \partial_f \mathcal{J}_{\text{out}} [h] + (D_f \pi)^\dagger \partial_\pi \mathcal{J}_\text{out} [h]
    = \left\langle d^{\pi}, h \right\rangle_\mathcal{F} + \left\langle (D_f \pi)^\dagger d^{\pi} f, h\right\rangle_\mathcal{F}.
\end{align}
Here the policy gradient term uses the commonly adopted assumption that $f\approx Q^{\pi}$. This further leads to a compact form of the complete outer-level gradient as is shown below:
\begin{align}
    \nabla_g \mathcal{J}_\text{out}
    =& - \int_{\mathcal{S}\times\mathcal{A}} w_g(s,a) \partial_{gf}^2\mathcal{J}_{\text{in}} (s,a) \mathrm{d}s\mathrm{d}a
    \nonumber
    \\
    =& - \int_{\mathcal{S}\times\mathcal{A}} \frac{d^{\pi}(s,a)\left( 1+ (D_f \pi)^\dagger [f] (s,a)\right)}{d_g^{\text{sample}}(s,a)} \left( \nabla_g d_g^{\text{sample}} \right) (s,a) \delta_f^k(s,a) \mathrm{d}s\mathrm{d}a,
\end{align}
where the density ratio term $w_g$ in Equation \eqref{eq:outer-level-grad} follows the definition above.

\subsection{Interpreting the Theoretical Gradient}
\label{sec:appendix-interpret-theoretical-gradient}

To make this abstract gradient term more intuitive, we assume the policy $\pi(f^*)$ is a softmax policy with respect to $f^*$ with an inverse temperature $\beta$, and the data mixing strategy is a linear combination of online and offline distributions, $d_m(s,a) = md^{\text{off}}(s,a)+(1-m)d^{\text{on}}(s,a)$, as is discussed in Section \ref{sec:road}. The first assumption holds for most of the practical online RL algorithms like max-entropy RL \cite{sac} and trust region methods \cite{trpo}. The second assumption conveys the critical intuition that the data mixing strategy is a tradeoff between utilizing offline and online data.

With these assumptions, we simplify the gradient estimation as
\begin{align}
    \nabla_m \mathcal{J}_\text{out}
    =& - \int_{\mathcal{S}\times\mathcal{A}} \frac{d^{\pi}(s) \pi'(a|s)}{d_m^{\text{sample}}(s,a)} \left( d^{\text{on}} (s,a) - d^{\text{off}} (s,a) \right) \delta_f^k(s,a) \mathrm{d}s\mathrm{d}a,
\end{align}
where $\pi'(a|s) = \pi(a|s) \left( 1+A_f(s,a) \right)$ is a policy induced from the online policy and the advantage function $A_f(s,a)=f(s,a) -\mathbb{E}_{a\sim\pi} \left[ f(s,a)\right]$. This equation implies that, an optimal mixing strategy effectively balances the sensitivity (TD error) of online and offline samples. When the online TD error is large, which indicates instability of online training, then we should add more offline data as an anchor. By contrast, with the training progresses and the bias in offline knowledge is gradually recognized, we should put more emphasis on the online data. This insight aligns with our intuition that offline data dominates the early-stage training while online data benefits long-term asymptotic performance.

\subsection{The Theoretical Algorithm}
\begin{algorithm}[t]
\caption{Bi-level optimization for online RL fine-tuning}
\label{alg:theoretical-bi-level-o2o}
\begin{algorithmic}[1]
    \REQUIRE Value function class $\mathcal{F}$, data mixing function class $\mathcal{G}$, \# total iterations $K$, offline dataset $\mathcal{D}^{\text{offline}}$, offline policy $\pi^{\text{off}}$ and the offline Q-function $f^{\text{off}}$.
    
    \STATE Initialize $f^1(s,a)=f^{\text{off}}(s,a),\forall (s,a)$ and $g \in \mathcal{G}$
    \FOR {$k=1,...,K$}
        \STATE Let $\pi^k$ be soft-greedy w.r.t. $f^k$.
        \STATE Collect online data $\mathcal{D}^k \sim \pi^k$.
        \STATE Solve the bi-level optimization for $g^*$ by setting Equation \eqref{eq:outer-level-grad} to zero.
        \STATE Construct $d^{\text{sample}}_{g^*} = g^* \left( \mathcal{D}^{\text{offline}}, \mathcal{D}^1,...,\mathcal{D}^k \right)$.
        \STATE Implement Q-learning using the constructed distribution $f^{k+1} \leftarrow \arg\min_f \mathcal{J}^{k}_{\text{in}}(g^*,f)$.
    \ENDFOR
    \RETURN $\pi^K$
\end{algorithmic}
\end{algorithm}

An ideal solution to the bi-level optimization problem is demonstrated by Algorithm \ref{alg:theoretical-bi-level-o2o}. Within each FQI iteration, the algorithm determines the best data mixing strategy and carries out Q-learning under the optimal data distribution. Despite the theoretical guarantee of optimal online evaluation performance, this algorithm is largely infeasible in practice.
\begin{itemize}
    \item First, the space of data mixing strategies is a function space with infinite dimension, and there is also no intuitive method to parameterize this function. Therefore, it is almost impossible to directly compute the functional gradient $\nabla_g \mathcal{J}_{\text{out}}$ and implement gradient ascent. In reality, we have to constrain the family of candidate strategies into a smaller subset of $\mathcal{G}$.
    \item The second infeasibility lies in the distribution estimation terms, or equivalently, the theoretical expectation notations. These terms assume zero epistemic uncertainty and thus no estimation bias. However, it is exactly the estimation bias that yields unlearning at early stages, making O2O RL challenging.
\end{itemize}

\section{Building the surrogate objective.}
\label{sec:build-surrogate}

% 如正文中所说，我们假设实际训练中使用empirical dataset而非理想的distribution上的expectation计算出的Q函数为$\hat{f}(s,a)$，这个随机变量的方差为$\sigma^2(s,a)=\text{Var}(\hat{f})(s,a)$，那么实际的对目标的无偏估计为Equation \eqref{eq:practical-objective}。
As is discussed in Section \ref{sec:road-practical}, the major difference between the theoretical objective and a practical RL fine-tuning objective lies in the nature of expectation. Specifically, the theoretical formulation assumes a true expectation while the practical estimation relies on finite datasets. Moreover, in offline-to-online settings, the reliability of these empirical expectations varies significantly across the state-action space. Adhering to uniform confidence in the estimation leads to exploitation of erroneous Q-values and subsequent performance collapse. To mitigate such risk and ensure safe exploration, we model the lower bound for the theoretical objective.

\subsection{An Unbiased Outer-Level Objective Estimator}
\label{sec:build-surrogate-unbiased-estimator}

% \paragraph{Formulation of the practical objective.}
\begin{assumption}
    The Q-function estimated from the empirical dataset is $\hat{f}(s,a)=f(s,a)+\epsilon(s,a)$. Here the zero-mean (i.e., for any state-action pair $(s,a)$, $\mathbb{E}_{\epsilon} \left[ \epsilon(s,a) \right]=0$) error term $\epsilon(s,a)$ is defined as a random field on $\mathcal{S}\times\mathcal{A}$, with its correlation depicted by a continuous covariance kernel $K((s,a),(s',a'))=\text{Cov}(\epsilon(s,a),\epsilon(s',a'))$. Meanwhile, the error term is independent from the estimation target itself, i.e., $f$ and $\epsilon$ are independent from each other.
    \label{assumption:estimation-error}
\end{assumption}
% Then the variance of the empirical estimation is given by $\sigma^2(s,a)=\text{Var}(\hat{f})(s,a)=K((s,a),(s,a))$.

Under such an assumption we can model the overestimation bias by taking the second-order Taylor expansion (the bias as a function of $\hat{f}$ has a universal optimum at $\hat{f}=f$, so the first-order derivative is zero).
\begin{definition}
    The overestimation bias is defined as the discrepancy between the perceived value (computed via an inaccurate value function $\hat{f}(s,a)$) and the true value.
    \begin{align}
        \text{Bias}_{\hat{f}}
        = \mathbb{E}_{a\sim d^{\pi(\hat{f})}} \left[ \hat{f}(s,a) - f(s,a) \right]
        = \int_{\mathcal{A}} \pi_{\hat{f}} (a|s) \cdot \epsilon(s,a) \mathrm{d}a
        \label{eq:definition-overestimation-bias}
    \end{align}
\end{definition}

\begin{theorem}
    Under Assumption \ref{assumption:estimation-error}, the overestimation bias at each state $s$ is computed by Equation \eqref{eq:comp-estimation-overestimation-bias}, where $\Vert R_3 \Vert = o\left( \Vert \epsilon(s) \Vert^3 \right)$ is the third-order remainder. Here the terms involving the state $s$ are ignored for symbolic simplicity, e.g., $K(a,a')$ is an abbreviation for $K((s,a),(s,a'))$.
    \begin{align}
        \mathbb{E}_{\epsilon} \left[ \text{Bias}_{\hat{f}}(s) \right]
        = \iint_{\mathcal{A}\times\mathcal{A}} \frac{\delta \pi (a)}{\delta \hat{f}(a')} K(a,a') \mathrm{d}a \mathrm{d}a' + R_3
        \label{eq:comp-estimation-overestimation-bias}
    \end{align}
\end{theorem}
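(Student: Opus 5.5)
The plan is to Taylor-expand the policy map $\hat f \mapsto \pi_{\hat f}(\cdot|s)$ around the true value function $f$, substitute into the definition of the bias in Equation \eqref{eq:definition-overestimation-bias}, and then take the expectation over $\epsilon$ using Assumption \ref{assumption:estimation-error}. Throughout we fix a state $s$ and write $\epsilon(a)=\epsilon(s,a)$. We regard $\Psi$ as a Fréchet-differentiable map from $L^2(\mathcal{A})$ to densities on $\mathcal{A}$. Such a map is assumed in Appendix \ref{sec:appendix-theoretical-solution}, for example the softmax $\pi(a)\propto\exp(\beta \hat f(a))$. Its first variation is the kernel $\frac{\delta \pi(a)}{\delta \hat f(a')}$.

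\textbf{Step 1: expand the policy.} Expand to first order with remainder:
\begin{align*}
\pi_{\hat f}(a) = \pi_f(a) + \int_{\mathcal{A}} \frac{\delta \pi(a)}{\delta \hat f(a')}\Big|_{f}\,\epsilon(a')\,\mathrm{d}a' + R_2(a),
\end{align*}
where $R_2$ is quadratic in $\epsilon$. For the softmax this is explicit: $\frac{\delta\pi(a)}{\delta\hat f(a')}=\beta\pi(a)(\delta_{a}(a')-\pi(a'))$, and $R_2$ is controlled by the bounded second derivative.

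\textbf{Step 2: substitute into the bias.} Putting the expansion into $\text{Bias}_{\hat f}(s)=\int_{\mathcal{A}}\pi_{\hat f}(a)\epsilon(a)\,\mathrm{d}a$ gives three terms:
\begin{align*}
\int \pi_f(a)\epsilon(a)\,\mathrm{d}a \;+\; \iint \frac{\delta\pi(a)}{\delta\hat f(a')}\epsilon(a)\epsilon(a')\,\mathrm{d}a\,\mathrm{d}a' \;+\; \int R_2(a)\epsilon(a)\,\mathrm{d}a.
\end{align*}
These are of first, second and third order in $\epsilon$ respectively.

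\textbf{Step 3: take the expectation over $\epsilon$.}
\begin{itemize}
\item The first term vanishes. Assumption \ref{assumption:estimation-error} makes $f$, and hence $\pi_f$, independent of $\epsilon$, and gives $\mathbb{E}[\epsilon(a)]=0$. This is the "zero first-order term" mentioned before the definition.
\item In the second term, the functional derivative is evaluated at $f$, so it is deterministic with respect to $\epsilon$. Using Fubini's theorem we can exchange expectation and integration; this is justified by compactness of $\mathcal{A}$ and continuity of $K$. Then $\mathbb{E}[\epsilon(a)\epsilon(a')]=K(a,a')$ because the mean is zero, which yields exactly the double integral in Equation \eqref{eq:comp-estimation-overestimation-bias}.
\item The third term is collected into $R_3$.
\end{itemize}

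\textbf{Main obstacle: the remainder.} The theorem asserts $\|R_3\|=o(\|\epsilon(s)\|^3)$, whereas a naive bound gives $\int R_2\epsilon = O(\|\epsilon\|^3)$. I would proceed in two stages.
\begin{itemize}
\item First, expand one order further. The cubic term is $\tfrac12\iiint \frac{\delta^2\pi(a)}{\delta\hat f(a')\delta\hat f(a'')}\,\epsilon(a)\epsilon(a')\epsilon(a'')$. Its expectation involves only third moments of $\epsilon$, which vanish if the field is symmetric (for instance Gaussian). Otherwise I would absorb this term into the remainder.
\item Second, the genuine remainder is then $o(\|\epsilon\|^3)$ provided $\Psi$ is three times Fréchet differentiable, which the softmax is.
\end{itemize}
If symmetry is not available, I would state the remainder as $O(\mathbb{E}\|\epsilon\|^3)$ and read the theorem's $R_3$ as "third-order and higher", consistent with its name. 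Either way the leading second-order term is unaffected.
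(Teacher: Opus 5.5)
Your proposal is correct and follows essentially the paper's route: a second-order expansion of $\text{Bias}_{\hat f}$ in $\epsilon$ about $\hat f = f$, followed by $\mathbb{E}[\epsilon(a)\epsilon(a')] = K(a,a')$ on the bilinear term. You expand $\pi_{\hat f}$ to first order and multiply by $\epsilon$, while the paper differentiates $\text{Bias}$ twice, but both produce the same bilinear term.

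Your treatment is also more careful than the paper's on two points. First, the linear term $\int \pi_f\epsilon$ vanishes only in expectation, by the zero-mean assumption. It does not vanish because the first derivative is zero at an ``optimum'' as the paper asserts; its own derivative formula gives $\int \pi_f h \neq 0$ at $\hat f = f$. Second, the paper never justifies the $o(\|\epsilon\|^3)$ remainder, which, as you note, needs vanishing third moments or must otherwise be read as $O(\cdot)$.
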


\begin{proof}
We first model the overestimation bias by Equation \eqref{eq:comp-overestimation-bias}. To approximate this overestimation, we take the second-order Taylor expansion (the bias as a function of $\hat{f}$ has a universal optimum at $\hat{f}=f$, so the first-order derivative is zero). To make the dependency clear, we define the bias function by Equation \eqref{eq:comp-overestimation-bias}, where $f(s,a)$ is considered a constant function, and the function variable $\pi_{\hat{f}}$ depends on $\hat{f}$. Note that in this formulation, we discard the integral on $\mathcal{S}$ and consider it as a function on actions $a\in\mathcal{A}$ because $\pi(\cdot|s)$ is dependent only by the Q-values of the same state $f(s,\cdot)$.
\begin{align}
    \text{Bias}_{\hat{f}}
    % = \mathcal{J}_{\text{out}}(\pi(\hat{f})) - \mathcal{J}_{\text{out}}(\pi(f))
    = \mathbb{E}_{a\sim d^{\pi(\hat{f})}} \left[ \hat{f}(s,a) - f(s,a) \right]
    % = \int_{\mathcal{A}} \pi_{\hat{f}} (a|s) \cdot \left( \hat{f}(s,a) - f(s,a)\right) \mathrm{d}a
    = \int_{\mathcal{A}} \pi_{\hat{f}} (a|s) \cdot \epsilon(s,a) \mathrm{d}a
    % \nonumber\\
    = \frac{1}{2} D_{f}^2 \mathcal{J}_{\text{out}} \left[ \epsilon, \epsilon \right] + R_3.
    \label{eq:comp-overestimation-bias}
\end{align}
The first-order and the second-order derivatives are as follows ($h_1$ and $h_2$ are arbitrary functions in $\mathcal{F}$):
\begin{align}
\begin{cases}
    D_{\hat{f}} \text{Bias}[h_1]
    = \partial_{\hat{f}} \text{Bias}[h_1] + \left( D_{\hat{f}} \pi_{\hat{f}}\right)^\dagger \partial_\pi \text{Bias}[h_1]
    = \int_{\mathcal{A}} \pi_{\hat{f}}(a)h_1(a) \mathrm{d}a + \int_{\mathcal{A}} \left( D_{\hat{f}} \pi_{\hat{f}}\right)^\dagger [\epsilon] (a) h_1(a) \mathrm{d}a
    \nonumber\\
    D_{\hat{f}}^2 \text{Bias} [h_1,h_2]
    = D_{\hat{f}} \left( D_{\hat{f}} \text{Bias}[h_1] \right) [h_2]
    = \int_{\mathcal{A}} \left( D_{\hat{f}} \pi_{\hat{f}}\right)^\dagger [h_1](a) h_2(a) \mathrm{d}a + \int_{\mathcal{A}} \left( D_{\hat{f}} \pi_{\hat{f}}\right)^\dagger [h_2](a) h_1(a) \mathrm{d}a.
\end{cases}
\end{align}
Since the inner-level derivative is usually a multiplication operator, we assume this multiplier to be a function $d^{\pi}(s,a) h^\pi(s,a)$, and thus the expectation of the bias term is estimated by Equation \eqref{eq:comp-expectation-overestimation-bias}.
\begin{align}
    \mathbb{E}_{\epsilon} \left[ \text{Bias}_{\hat{f}} \right]
    & = \mathbb{E}_{\epsilon} \left[ \int_{\mathcal{A}} \left( D_{\hat{f}} \pi_{\hat{f}}\right)^\dagger [\epsilon](a) \epsilon(a) \mathrm{d}a \right] + R_3
    % = \mathbb{E}_{\epsilon} \left[ \int_{\mathcal{A}} h^{\pi}(a) \epsilon^2(a) \mathrm{d}a \right]
    % = \mathbb{E}_{a \sim \pi} \left[ h^{\pi}(a) \sigma^2(a) \right]
    = \mathbb{E}_{\epsilon} \left[ \int_{\mathcal{A}} \left( \int_{\mathcal{A}} \frac{\delta \pi (a)}{\delta \hat{f}(a')}\epsilon(a') \mathrm{d}a' \right) \epsilon(a) \mathrm{d}a \right] + R_3
    \nonumber\\
    &= \iint_{\mathcal{A}\times\mathcal{A}} \frac{\delta \pi (a)}{\delta \hat{f}(a')} K(a,a') \mathrm{d}a \mathrm{d}a' + R_3
    \label{eq:comp-expectation-overestimation-bias}
\end{align}
\end{proof}

\subsection{Approximating The Overestimation Bias via $\Delta_{\text{on}}$}

% We model the overestimation bias $\text{Bias}(\hat{f})$ with the realistic assumption of policy improvement under small learning rates and almost-independent empirical errors. The intuition lies in the fact that, under small learning rates, the real improvement in the policy is bounded by a small value. Therefore, if we perceive a large increment in the policy value, it is likely caused by overestimation. We should constrain this delusion to ensure a stable training process.

The perceived policy improvement with respect to the online replay buffer $\Delta_{\text{on}}$ is decomposed into two terms, the true advantage and the overestimation bias.
\begin{align}
    \mathbb{E}_{\epsilon} \left[ \Delta_{\text{on}}(s) \right]
    &= \mathbb{E}_{\epsilon} \left[ \mathbb{E}_{a\sim\pi} \left[ \hat{f}(s,a)\right] - \mathbb{E}_{a\sim \pi^{k}} \left[ \hat{f}(s,a)\right] \right]
    \nonumber\\
    &= \underbrace{\mathbb{E}_{\epsilon} \left[ \mathbb{E}_{\pi} \left[ f\right] - \mathbb{E}_{\pi^{k}} \left[ f\right] \right]}_{\text{True advantage}} + \underbrace{\mathbb{E}_{\epsilon} \left[ \mathbb{E}_{\pi} \left[ \hat{f}\right] - \mathbb{E}_{\pi} \left[ f \right] \right]}_{\text{Bias}} - \underbrace{\mathbb{E}_{\epsilon} \left[ \mathbb{E}_{\pi^{k}} \left[ \hat{f} \right] - \mathbb{E}_{\pi^{k}} \left[ f\right] \right]}_{\text{Bias}'=0},
    \\
    \text{where }&
    \text{Bias}'
    % \mathbb{E}_{\epsilon} \left[ \mathbb{E}_{\pi^{k}} \left[ \hat{f}(s,a) \right] - \mathbb{E}_{\pi^{k}} \left[ f(s,a)\right] \right]
    = \mathbb{E}_{\epsilon} \left[ \int_{\mathcal{A}} \pi^{k}(a|s) \epsilon(a) \mathrm{d}a \right]
    = \int_{\mathcal{A}} \pi^{k}(a|s) \mathbb{E}_{\epsilon} \left[\epsilon(a) \right] \mathrm{d}a
    =0.
    \nonumber
\end{align}
The task is to compare the overestimation bias term and the true advantage term. We approximate the true advantage term by a first-order Taylor expansion on the online policy $\pi$. (In fact, the former approximation of the bias term is also equivalent to a first-order policy expansion.) This expansion eliminates the stochastic term within the true advantage, and the remaining problem lies in the comparison between the deterministic true advantage term and the stochastic overestimation bias.
\begin{align}
    \mathbb{E}_{\epsilon} \left[ \text{TrueAdv} \right]
    &= \mathbb{E}_{\epsilon} \left[ \mathbb{E}_{\pi_{\hat{f}}} \left[ f\right] - \mathbb{E}_{\pi^{k}} \left[ f\right] \right]
    = \mathbb{E}_{\epsilon} \left[ \int_{\mathcal{A}} \left( \pi_f(a) + \int_{\mathcal{A}} \frac{\delta \pi_f(a)}{\delta \hat{f}(a')} \epsilon(a') \mathrm{d}a'\right) f(a) \mathrm{d}a - \mathbb{E}_{\pi^{k}} \left[ f\right]\right]
    \nonumber\\
    &= \left( \mathbb{E}_{\pi_f} \left[ f\right] - \mathbb{E}_{\pi^k} \left[ f\right] \right) + \mathbb{E}_{\epsilon} \left[ \iint_{\mathcal{A}\times\mathcal{A}} \frac{\delta \pi_f(a)}{\delta \hat{f}(a')} \epsilon(a') f(a) \mathrm{d}a'\mathrm{d}a\right]
    = \mathbb{E}_{\pi_f} \left[ f\right] - \mathbb{E}_{\pi^k} \left[ f\right]
\end{align}
Intuitively, the true advantage term is proportional to the roughness of Q-function ($\Vert \nabla_a f \Vert^2$) while the overestimation bias is proportional to that of the noise ($\Vert \nabla_a \epsilon \Vert^2$). Since the Q-function encapsulates the intrinsic properties of the physical system, it is typically smoother than the noise term. This difference in smoothness, combined with the relative scale of the two functions, determines which of the true advantage and the overestimation bias dominates the perceived policy improvement $\Delta_{\text{on}}$. We formally decompose the two contributing terms as follows.

\begin{definition}
    The characteristic length $\lambda$ of a differentiable function is the ratio of its amplitude to its gradient norm. Specifically, for a deterministic function $f$, the characteristic length is defined under the $L^2$-norm, $\lambda_f = \frac{\Vert f \Vert_{L^2}}{\Vert \nabla f \Vert_{L^2}}$; for a random field $\epsilon$ with kernel $K$, it is defined under expectation, $\lambda_\epsilon = \sqrt{\frac{\mathbb{E} \left[ \vert \epsilon \vert^2 \right]} {\mathbb{E} \left[ \Vert \nabla\epsilon \Vert^2 \right]} }$.
\end{definition}

\begin{assumption}
    The policy improvement follows a policy gradient step $\pi_{\hat{f}}(a|s) = \pi^{k}(a|s) \left(1 + \beta \left( \hat{f}(s,a)-V_{\hat{f}}(s,a)\right) \right)$ where the learning rate $\beta$ is small.
    \label{assumption:policy-gradient-improvement}
\end{assumption}

\begin{assumption}
    In offline-to-online RL, the true Q-function has a much longer characteristic length than the noise $\lambda_f \gg \lambda_\epsilon$, i.e., the Q-function is smoother than the noise.
    \label{assumption:smooth-q-function}
\end{assumption}

\begin{theorem}
    Under Assumption \ref{assumption:policy-gradient-improvement} and Assumption \ref{assumption:smooth-q-function}, the signal-to-noise ratio $\rho = \frac{\mathbb{E}_{\epsilon} \left[ \text{Bias} \right]}{\mathbb{E}_{\epsilon} \left[ \text{TrueAdv} \right]}$ is estimated by
    \begin{align}
        \rho \approx \left( \frac{\mathbb{E} \left[ \epsilon^2 \right]}{\Vert f \Vert^2} \right) \cdot \left( \frac{\lambda_f}{\lambda_\epsilon} \right)^2.
    \end{align}
\end{theorem}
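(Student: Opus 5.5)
We compute the numerator and the denominator of $\rho$ separately. Both are specialised to the linearised policy of Assumption \ref{assumption:policy-gradient-improvement}, and then each is converted into a gradient-energy expression. The assumption gives the functional derivative explicitly:
\begin{align*}
    \frac{\delta \pi(a)}{\delta \hat{f}(a')} = \beta\, \pi^k(a)\left(\delta(a-a') - \pi^k(a')\right).
\end{align*}

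\textbf{Bias term.} Substituting this derivative into the earlier theorem (Equation \eqref{eq:comp-estimation-overestimation-bias}) and dropping $R_3$ gives
\begin{align*}
    \mathbb{E}_\epsilon[\text{Bias}] \approx \beta\left(\mathbb{E}_{\pi^k}[K(a,a)] - \iint \pi^k(a)\pi^k(a')K(a,a')\,\mathrm{d}a\,\mathrm{d}a'\right) = \beta\, \mathbb{E}_\epsilon\left[\mathrm{Var}_{\pi^k}(\epsilon)\right].
\end{align*}
So the expected bias is $\beta$ times the expected variance of the noise under the current policy. This variance measures how much $\epsilon$ fluctuates across the effective action support of $\pi^k$.

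\textbf{True advantage.} The derivation just before the statement reduces the true advantage to $\mathbb{E}_{\pi_f}[f]-\mathbb{E}_{\pi^k}[f]$. Inserting the same policy-gradient step, now with $f$ in place of $\hat f$, gives
\begin{align*}
    \mathbb{E}_\epsilon[\text{TrueAdv}] \approx \beta\,\mathrm{Var}_{\pi^k}(f).
\end{align*}
The factor $\beta$ cancels in the ratio, so $\rho \approx \mathbb{E}[\mathrm{Var}_{\pi^k}(\epsilon)]/\mathrm{Var}_{\pi^k}(f)$.

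\textbf{Converting variances to gradient energies.} Both variances are variances over the same action support, of width $\ell$ set by $\pi^k$. I would apply a Poincaré-type heuristic: a function whose characteristic length exceeds $\ell$ is approximately linear across the support, so its variance is about $\ell^2\Vert\nabla f\Vert^2$ up to a normalising constant. By the definition of characteristic length this equals $\ell^2\Vert f\Vert^2/\lambda_f^2$. Assumption \ref{assumption:smooth-q-function} is what puts the two functions in different regimes. If $\lambda_\epsilon \gtrsim \ell$, the noise is treated the same way and its variance is about $\ell^2\mathbb{E}[\epsilon^2]/\lambda_\epsilon^2$. Taking the ratio, the support constants cancel, leaving
\begin{align*}
    \rho \approx \frac{\mathbb{E}\Vert\nabla\epsilon\Vert^2}{\Vert\nabla f\Vert^2} = \frac{\mathbb{E}[\epsilon^2]}{\Vert f\Vert^2}\left(\frac{\lambda_f}{\lambda_\epsilon}\right)^2,
\end{align*}
which is the claimed formula, with the amplitude ratio multiplied by the squared ratio of characteristic lengths.

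\textbf{Main obstacle.} The hard step is justifying that the same constant relates variance to gradient energy for both the smooth $f$ and the rough $\epsilon$. When $\lambda_\epsilon \ll \ell$, the noise variance saturates at $\mathbb{E}[\epsilon^2]$ rather than growing like $\ell^2/\lambda_\epsilon^2$, and the formula above would no longer hold as written. I would first handle this with a local expansion. Taking $\pi^k$ concentrated, for example approximately Gaussian with small width, I would Taylor-expand both $f$ and $K$ to second order about the mode. This makes both variances equal to $\ell^2$ times a gradient norm, namely $\ell^2\Vert\nabla f\Vert^2$ and $\ell^2\,\partial_a\partial_{a'}K|_{a=a'}$. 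Since $\mathbb{E}\Vert\nabla\epsilon\Vert^2 = \partial_a\partial_{a'}K|_{a=a'}$, this recovers the formula directly. The approximation sign in the statement would then be stated to cover both this small-width regime and the dropped $R_3$ remainder.
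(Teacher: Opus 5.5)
Your proposal is correct and takes essentially the paper's route. The same functional derivative gives $\mathbb{E}_\epsilon[\text{Bias}]\approx\beta\,\mathrm{Var}_{\pi^k}(\epsilon)$ and $\mathbb{E}_\epsilon[\text{TrueAdv}]=\beta\,\mathrm{Var}_{\pi^k}(f)$, and a first-order expansion about the mean action with isotropic spread (the paper's $\Sigma_\pi\propto I$, your common width $\ell$) turns the variance ratio into a gradient-norm ratio, which the characteristic-length definitions convert into the stated formula.

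Your orientation $\mathbb{E}\Vert\nabla\epsilon\Vert^2/\Vert\nabla f\Vert^2$ is the one consistent with the final expression; the paper's intermediate display writes this ratio inverted. The small-width condition you flag ($\ell$ small relative to $\lambda_\epsilon$) is exactly the unstated requirement behind the paper's Taylor expansion of $\epsilon$ at $\bar a$. In both derivations, the smoothness assumption $\lambda_f\gg\lambda_\epsilon$ only serves to interpret the size of $\rho$, not to derive the formula.
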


\begin{proof}
With the policy gradient assumption, the derivative of the policy with respect to the empirical Q-function estimator is shown as follows, where $\delta(a)$ is the Dirac delta function.
\begin{align}
    \frac{\delta \pi (a)}{\delta \hat{f}(a')} = \beta \pi^k(a) \left( \delta(a-a') - \frac{\delta V_{\hat{f}}}{\delta \hat{f}(a')}\right) = \beta \pi^k(a) \left( \delta(a-a') - \pi^k(a')\right)
    \nonumber
\end{align}
With this assumption, the overestimation bias and the true advantage are simplified into a concise formulation:
\begin{align}
\begin{cases}
    \mathbb{E}_{\epsilon} \left[ \text{Bias}_{\hat{f}} \right]
    &\approx \beta \displaystyle \iint_{\mathcal{A}\times\mathcal{A}} \pi^k(a) \left( \delta(a-a') - \pi^k(a')\right) K(a,a') \mathrm{d}a'\mathrm{d}a
    \nonumber\\
    &= \beta \left( \mathbb{E}_{a\sim\pi^k} \left[ K(a,a) \right] - \mathbb{E}_{a,a'\sim \pi^k} \left[ K(a,a') \right] \right)
    = \beta \text{Var}_{a\sim\pi^k} \left( \epsilon \right)
    \nonumber\\
    \mathbb{E}_{\epsilon} \left[ \text{TrueAdv} \right]
    &= \displaystyle \int_{\mathcal{A}} \beta \pi^k(a) (f(a) - V_{f}) f(a) \mathrm{d}a = \beta \text{Var}_{a\sim \pi^k} \left( f\right)
\end{cases}
\end{align}
Therefore, the signal-to-noise ratio is estimated by:
\begin{align}
    \rho
    = \frac{\mathbb{E}_{\epsilon} \left[ \text{Bias} \right]}{\mathbb{E}_{\epsilon} \left[ \text{TrueAdv} \right]}
    \approx \frac{\text{Var}_{a\sim\pi^k} \left( \epsilon \right)}{\text{Var}_{a\sim\pi^k} \left( f \right)}
    \approx \frac{\Vert \nabla f(\mathbb{E}[a]) \Vert^2_{\Sigma_\pi}}{\Vert \nabla \epsilon(\mathbb{E}[a]) \Vert^2_{\Sigma_\pi}}
    \approx \frac{\Vert \nabla f(\mathbb{E}[a]) \Vert^2}{\Vert \nabla \epsilon(\mathbb{E}[a]) \Vert^2}
    = \left( \frac{\mathbb{E} \left[ \epsilon^2 \right]}{\Vert f \Vert^2} \right) \cdot \left( \frac{\lambda_f}{\lambda_\epsilon} \right)^2
\end{align}
Here the second approximation uses the first-order Taylor expansion for the functions $f$ and $\epsilon$ at the point $\bar{a}=\mathbb{E}_{a\sim\pi^k}[a]$. The covariance for the exploration policy is $\Sigma_\pi=\mathbb{E}_{a\sim\pi^k} \left[ \left( a-\bar{a} \right) \left( a-\bar{a} \right)^\top \right]$. When $\Sigma_\pi \propto I$, i.e., the exploration is isotropic, the third approximation becomes a strictly-held equality.

In such formulation, the amplitude of noise decays with the number of online training samples increases, while the other three terms are fixed, reflecting the intrinsic physical properties of the environment and the noise of data collection.
\end{proof}

% \begin{align}
%     \Delta_{\text{on}}(s)
%     =& \mathbb{E}_{a\sim\pi} \left[ \hat{f}(s,a)\right] - \mathbb{E}_{a\sim \pi^{k}} \left[ \hat{f}(s,a)\right]
%     = \int_{\mathcal{A}} \pi^k(a|s) \left( 1 + \beta \left( \hat{f}(s,a) - \mathbb{E}_{a\sim \pi^k} \left[ \hat{f}(s,a) \right] \right) \right) \hat{f}(a) \mathrm{d}a - \mathbb{E}_{a\sim\pi^k} [\hat{f}]
%     \nonumber\\
%     =& \beta \left( \mathbb{E}_{a\sim \pi^k} \left[ \hat{f}^2(s,a) \right] - \mathbb{E}_{a\sim \pi^k} \left[ \hat{f}(s,a) \right]^2\right)
%     = \beta \cdot \text{Var}_{\pi^{k}} (\hat{f}).
% \end{align}

\subsection{The Surrogate Objective}

The unbiased estimator for the outer-level objective is Equation \eqref{eq:practical-objective}, which corresponds to the formulation in Equation \eqref{eq:unbiased-estimation}. In this formulation the integral on state $s\in\mathcal{S}$ is restored.
\begin{align}
    \mathcal{J}'_{\text{out}}(g)
    % = \mathbb{E}_{(s,a) \sim d^{\pi}} \left[ \hat{f}^*(s,a)-h^{\pi}(s,a) \sigma^2(s,a) \right]
    = \hat{\mathcal{J}}_{\text{out}} \left( \hat{f}^*, \pi \right) - \mathbb{E}_{\epsilon} \left[ \text{Bias}_{\hat{f}} \right]
    \label{eq:practical-objective}
\end{align}
% 这个估计虽然是无偏的，但是其本身不适合作为MAB的reward，因为它具有较高的方差，这会导致基于采样计算的reward非常不稳定，影响算法整体有效性。因此我们采用类似于advantage的想法，减去一个baseline来减小方差。具体来说，我们选取离线行为策略作为我们的baseline。其直觉在于，训练中的策略在不同的state-action pair上各有优劣，我们希望尽可能把策略本身的好坏和其他环境信息如状态的好坏解耦开来，比如一个较优的策略在较差的state上采样得到了较差的轨迹，而一个较差的策略则可能由于state本身更好而得到更好的结果。
Despite being unbiased, this objective itself is unsuitable as a reward signal for an MAB mainly due to its high variance. Such variance leads to significant instability for a sample-based stochastic reward function, and may influence the effectiveness of the MAB algorithm. To address this issue, we use variance reduction similar to \textit{advantage} function by subtracting a baseline $\pi_\beta$. By subtracting a baseline we decouple the intrinsic policy quality from the inherent value of the state, which ensures that the reward reflects a policy's actual capability rather than the luck of sampling an advantageous state. From this perspective, the offline behavior policy $\pi_\beta$ is a good choice for the baseline because it is usually an ensemble of a large collection of policies, effectively encoding rich environment information.
\begin{align}
    \mathcal{J}'_{\text{out}} (g) = \left( J \left(\pi\left( \hat{f}^* \right)\right) - J(\pi_\beta) \right) - \mathbb{E}_{\epsilon} \left[ \text{Bias}_{\hat{f}} \right]
\end{align}

\paragraph{Approximating the relative performance quality.}
We approximate the term $J(\pi)-J(\pi_\beta)$ with the performance difference lemma \cite{pdl}.
\begin{align}
    J(\pi)-J(\pi_\beta) =& \frac{1}{1-\gamma} \mathbb{E}_{(s,a) \sim d^{\pi}} \left[ A^{\pi_\beta}(s,a) \right]
    \approx \frac{1}{1-\gamma} \mathbb{E}_{s \sim d^{\text{off}}, a\sim \pi} \left[ Q^{\pi_\beta}(s,a) - V^{\pi_\beta}(s) \right]
    \nonumber\\
    \approx& \frac{1}{1-\gamma} \left( \mathbb{E}_{s \sim d^{\text{off}}, a\sim \pi} \left[ \hat{f}(s,a) \right] - \mathbb{E}_{(s,a) \sim d^{\text{off}}} \left[ \hat{f}(s,a) \right] \right) \propto \Delta_{\text{off}}
\end{align}
The first approximation $d^{\pi}(s) \approx d^{\text{off}}(s)$ is mainly a practicality concern to avoid explicitly modeling the density for the online policy, while $d^{\text{off}}$ is a good alternative because of the low uncertainty of $\hat{f}(s,a)$ within its support. The second approximation $Q^{\pi_\beta} \approx f$ utilizes a unique feature of O2O RL that the critic is trained on both online and offline data, ensuring good fitting to offline data distribution.
% Among the readily available data distributions $d^{\text{off}}$ and $d^{\text{on}}$, the former is a natural choice because within this distribution, $\hat{f}(s,a)$ has low uncertainty and is expected to be an accurate estimation of the real Q-function.
These approximations reduce the requirement of estimating the value for $\pi_\beta$ to naively sampling from the offline dataset, allowing for an estimation with low variance and little computational overhead.

\paragraph{Approximating the overestimation bias.}
This approximation of our signal-to-noise ratio $\rho$ implies that at the early stages of training where the amplitude of $\epsilon(s,a)$ is large, the bias term dominates $\Delta_{\text{on}}$, making it a good surrogate for the overestimation bias. Since the early stages of training are exactly when the training undergoes instability due to overestimation, our $\Delta_\text{on}$ is highly effective at preventing unlearning (and the performance dip) during the transition from offline to online phase.

As training progresses and the amplitude of $\epsilon(s,a)$ decays (scaling with $1/\sqrt{N}$), the fidelity of $\Delta_{\text{on}}$ as a bias estimator diminishes. However, in our Multi-Armed Bandit (MAB) setting, this reduction in the signal-to-noise ratio does not compromise the efficacy of the surrogate objective. As the model's performance on offline data improves, the $\Delta_{\text{off}}$ term becomes dominant, effectively rendering $\Delta_{\text{on}}$ a negligible noise component.

In practice, we evaluate the overestimation bias only on the coverage of the online replay buffer. This is because the support of the offline dataset $s \sim d^{\text{off}}$ achieves low uncertainty during the offline pretraining phase, while the area outside existing data coverage has unbounded uncertainty. Then the support of the online replay buffer is natural and the only available choice.

\paragraph{Synthesizing the surrogate.}
With the above approximations, we have a highly symmetric surrogate objective by Equation \eqref{eq:r-q-theory-notation}. Substituting these theoretical notations by practical ones with parameterization, we obtain the surrogate objective proposed in Equation \eqref{eq:R_q}.
\begin{align}
    R_q = \underbrace{\left( \mathbb{E}_{s \sim d^{\text{off}}, a\sim \pi} \left[ \hat{f}(s,a) \right] - \mathbb{E}_{(s,a) \sim d^{\text{off}}} \left[ \hat{f}(s,a) \right] \right)}_{\Delta_\text{off}} - \kappa \underbrace{\left( \mathbb{E}_{s \sim d^{\text{on}}, a\sim \pi} \left[ \hat{f}(s,a) \right] - \mathbb{E}_{(s,a) \sim d^{\text{on}}} \left[ \hat{f}(s,a) \right] \right)}_{\Delta_\text{on}}
    \label{eq:r-q-theory-notation}
\end{align}
\section{Discussions and Details on the Example MDP} \label{Appendix: simple MDP}

\begin{figure*}[ht]
\centering
  \begin{minipage}[c]{0.35\linewidth}
    \centering
    \includegraphics[width=\linewidth]{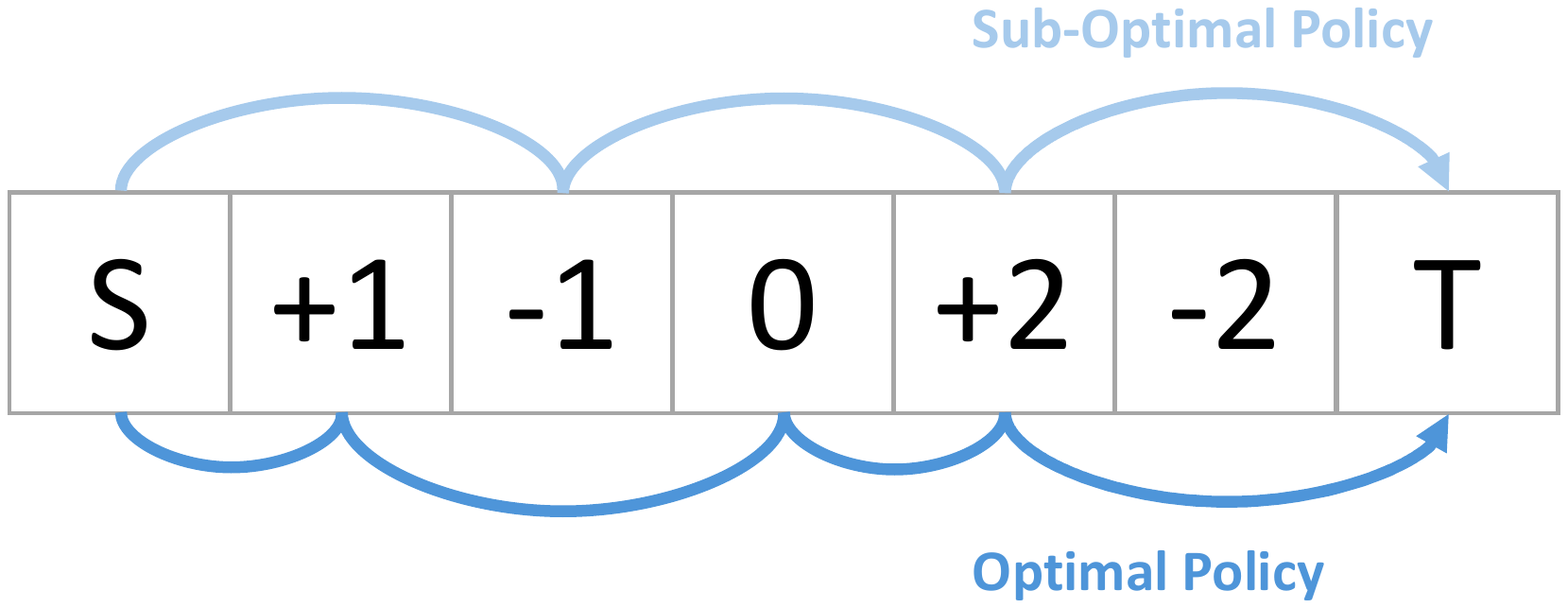}
    % \caption{First Image}
    % \label{fig:first-image}
  \end{minipage}
  \begin{minipage}[c]{0.25\linewidth}
    \centering
    \begin{subfigure}{\linewidth}
      \centering      \includegraphics[width=\linewidth]{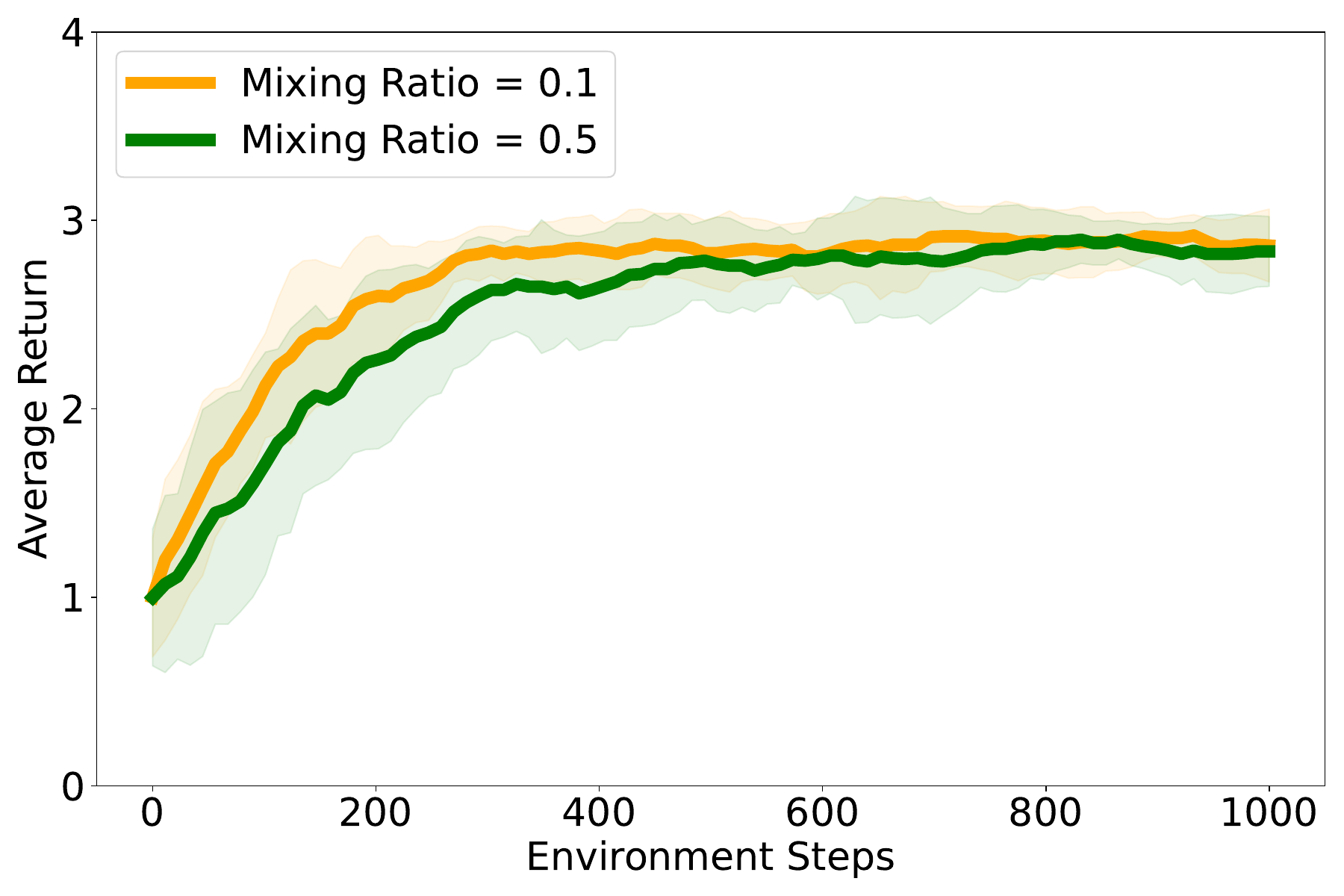}
      % \caption{Second Image}
      % \label{fig:second-image}
    \end{subfigure}
    \begin{subfigure}{\linewidth}
      \centering
      \includegraphics[width=\linewidth]{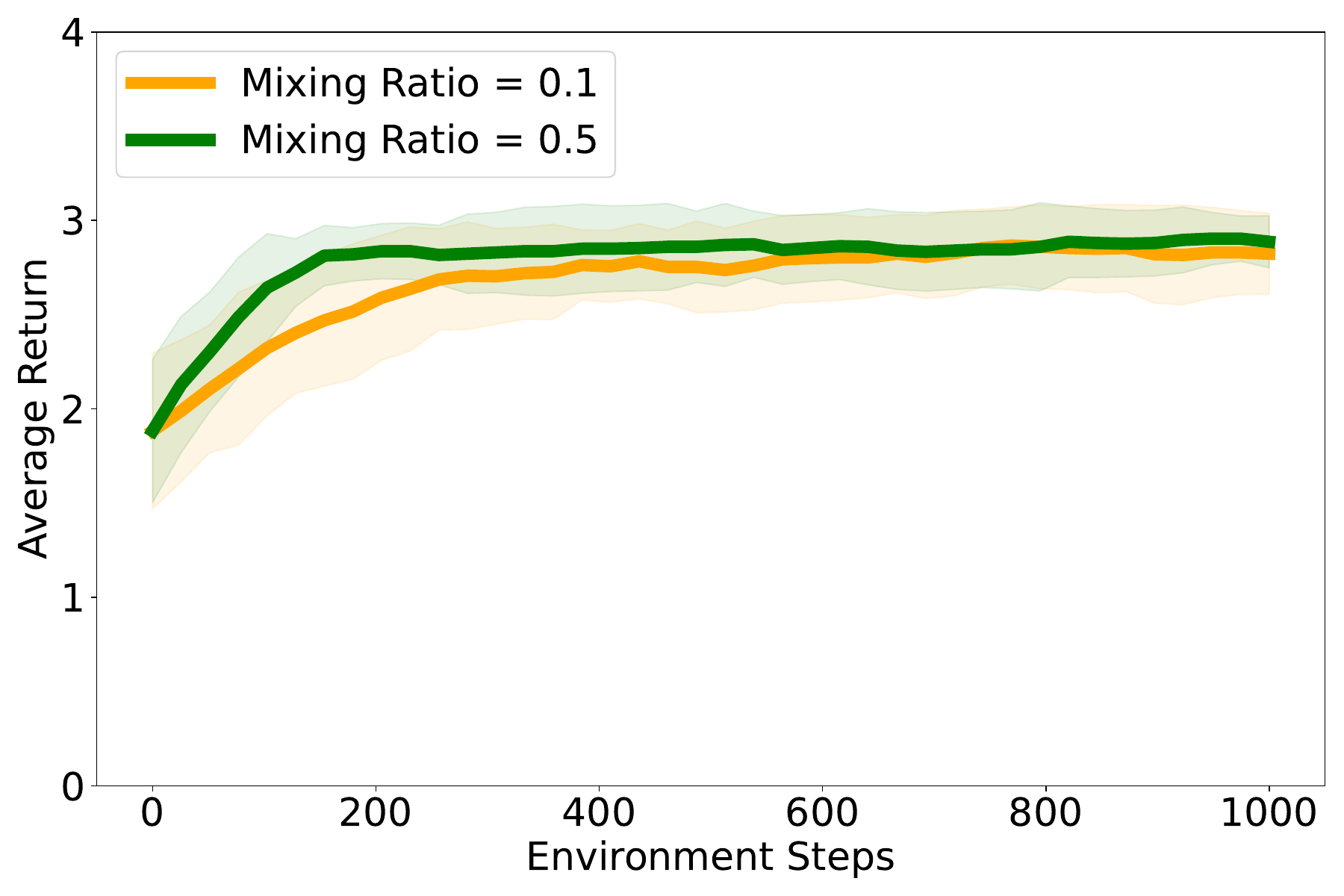}
      % \caption{Third Image}
      % \label{fig:third-image}
    \end{subfigure}
  \end{minipage}
  \begin{minipage}[c]{0.35\linewidth}
    \centering
    \begin{subfigure}{\linewidth}
      \centering
      \includegraphics[width=\linewidth]{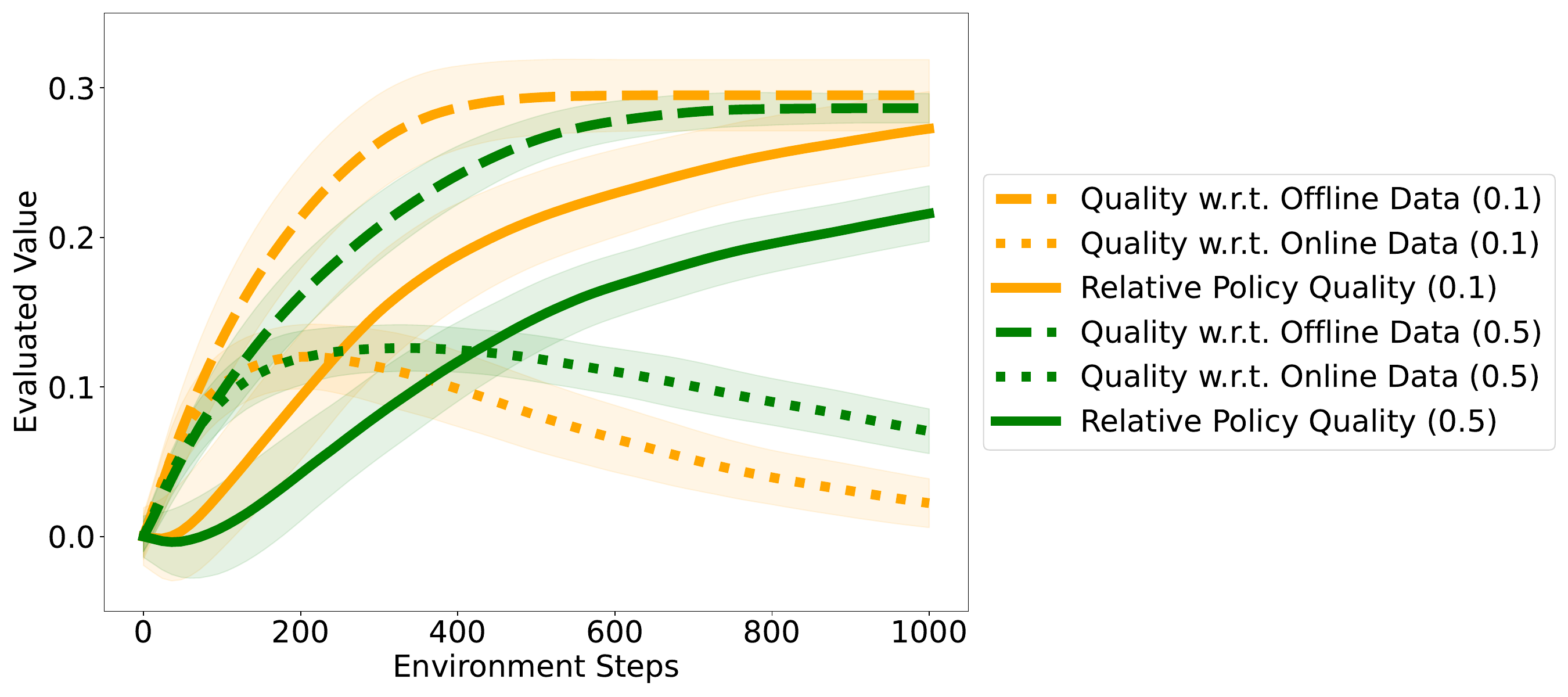}
      % \caption{Fourth Image}
      \label{fig:fourth-image}
    \end{subfigure}
    \begin{subfigure}{\linewidth}
      \centering
      \includegraphics[width=\linewidth]{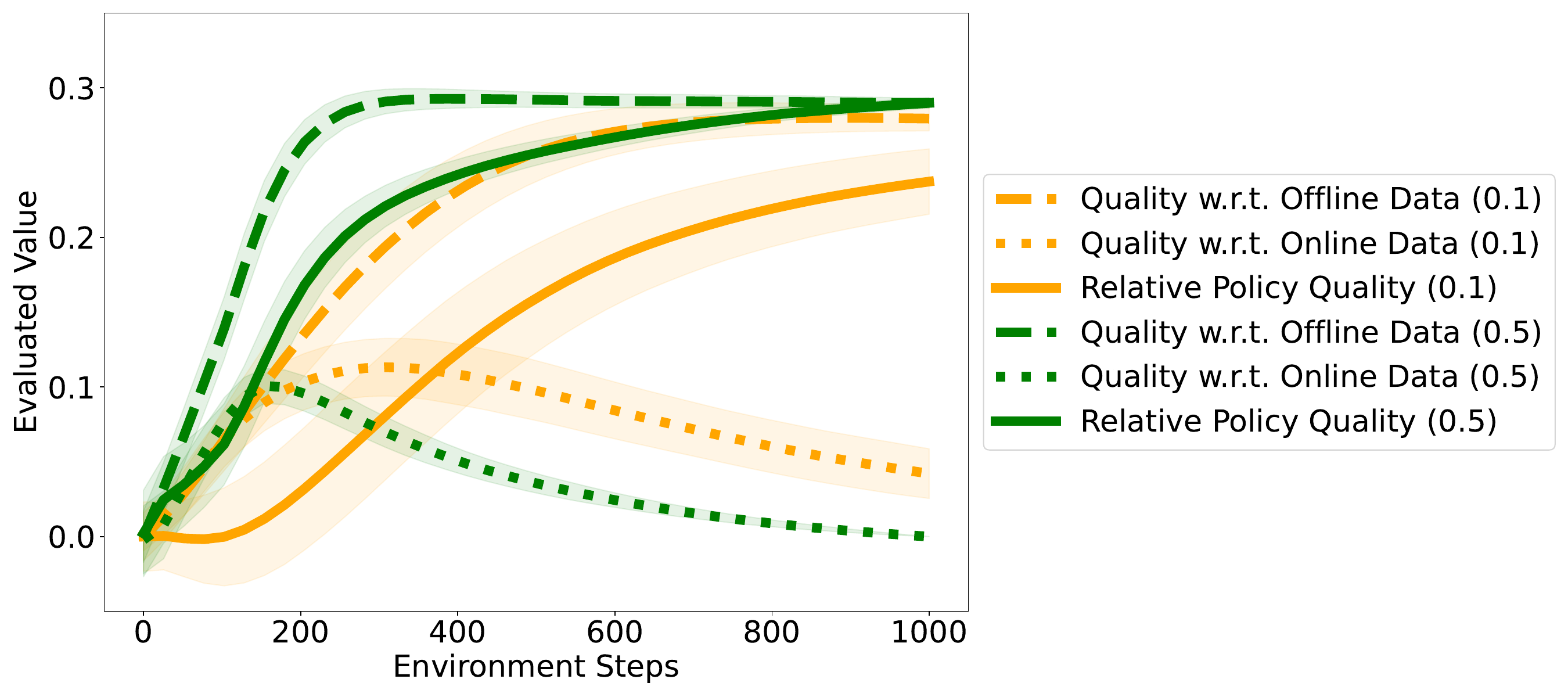}
      % \caption{Fifth Image}
      \label{fig:fifth-image}
    \end{subfigure}
  \end{minipage}
  \caption{An example of offline-to-online RL with various dataset qualities and the evaluated policy quality. \textbf{Left:} Illustrates a simple MDP with an example of sub-optimal/optimal policies. An agent initiates from the start state $\mathbf{S}$, with options to advance either 1 or 2 steps towards the terminal state $\mathbf{T}$ in each timestep. \textbf{Right:} Demonstrates the average return and perceived quality of the current policy by the agent against offline and online data under mixing ratios of 0.1 and 0.5 with lower-quality offline data generated with sub-optimal policy (\textbf{upper}) and higher-quality offline data generated with an equal blend of sub-optimal policy and optimal policy (\textbf{lower}).}
  \label{fig: example in appendix}
\end{figure*}

% In this section, we delve deeper into the analysis and elucidation of the Markov Decision Process (MDP) example utilized in our paper.
In this section, we analyze the example MDP that is used to analyze the training dynamics of $R_q$ in our paper.
The MDP model we have specified (as depicted in Figure~\ref{fig: example in appendix}, left) facilitates the straightforward construction of both optimal and sub-optimal policies. These policies, in turn, enable the generation of offline data of varying quality for use in offline-to-online reinforcement learning. We demonstrate the application of lower-quality offline data constructed using a sub-optimal policy for offline-to-online RL (upper right), as well as higher-quality offline data created using both sub-optimal and optimal policies (lower right). The primary rationale behind employing offline data of these two distinct quality levels is to more accurately reflect real-world scenarios where offline data may vary in quality. Typically, offline data encompasses not only expert data but also includes sub-optimal data. For each data quality type, we engage in 1,000 steps of offline training followed by 1,000 steps of online training. Within the online training phase, we explore the effects of incorporating less versus more offline data replay by selecting mixing ratios of 0.1 and 0.5, respectively. As to the agent for this simple environment, we utilize a Q-Learning Agent, a fundamental reinforcement learning model known for its simplicity and effectiveness. This agent learns optimal actions within an environment by updating a Q-table, which estimates rewards for actions in various states, guided by learning rate 1e-3 and a discount factor of 0.9 for future rewards.

Our observations reveal that in the context of lower-quality offline data, opting for a relatively smaller mixing ratio in this environment facilitates a quicker enhancement in performance. Correspondingly, when evaluating policy quality with respect to both offline and online data, we note a consistent improvement in policy quality relative to offline data across different mixing ratios. A smaller mixing ratio, enabling more online exploration, appears to more effectively enhance this measure. As for the quality relative to online data, we observe an overall trend of initial increase followed by a decrease, aligning with the distribution shift occurring as online training progresses. Overall, a smaller mixing ratio results in a more steady improvement in relative policy quality. Conversely, with higher-quality offline data, a relatively larger mixing ratio yields better outcomes, evident not only in faster performance enhancement but also in a more consistent uplift in relative policy quality.

Furthermore, compared to assessing policy quality based solely on offline or online data, our employed measure of relative data quality more accurately reflects the overall trend of an agent's policy in relation to both types of data. This makes it an excellent metric for evaluating data replay schemes throughout the training process. This enriched discussion aims to provide a clearer understanding of how different qualities of offline data and varying mixing ratios impact the efficiency of offline-to-online RL training strategies, showcasing the importance of tailored data replay schemes in optimizing agent performance.

\section{Environment Details} \label{appendix: environment details}

\subsection{Antmaze Tasks}

Antmaze tasks challenge an agent to guide an 8-DoF ant robot through a maze, starting from one point and ending at a set goal. A simple reward system is used: the agent gets a reward of $+1$ if the goal is reached and $0$ otherwise. There are three maze levels: large, medium and umaze with diverse or play datasets. While the ``diverse'' dataset is filled with trajectories from arbitrary start points to random goals, the ``play'' dataset has trajectories directed at specific, non-goal locations. Each task ran for 1000 episodes. All the agents were initially trained with the offline datasets for $1 \mathrm{M}$ steps. Subsequent online fine-tuning was conducted over $1 \mathrm{M}$ environment steps, integrating ROAD or competing offline data replay strategies.

\subsection{Locomotion Tasks}

This set of benchmarks is pivotal for assessing the adaptability and efficacy of offline-to-online reinforcement learning algorithms. The D4RL locomotion domain encompasses several tasks, including HalfCheetah, Walker2d, and Hopper. Each of these tasks is designed to test the agility and control of simulated robotic models in executing complex movements such as running, walking, or hopping in a physically accurate simulation environment. Crucially, for each of these tasks, the D4RL benchmark provides offline datasets of varying quality levels to simulate different training scenarios and challenges. These quality levels are categorized into five distinct types: expert, medium-expert, medium, medium-replay, and random. These datasets enable researchers to rigorously test and benchmark the performance of reinforcement learning algorithms across a spectrum of real-world challenges. By leveraging these varied quality levels of offline data, the adaptability, learning efficiency, and robustness of algorithms like ROAD can be effectively evaluated, offering insights into their potential applicability in diverse reinforcement learning scenarios.

\subsection{Franka Kitchen Tasks}

The tasks within the Franka Kitchen domain task an agent with manipulating a 9-DoF Franka robot to set up a kitchen based on a specific layout. The task breaks down into 4 distinct subtasks. The agent earns rewards on a scale from $0$ to $+4$, based on the number of subtasks successfully executed. Completing the overall task demands mastering individual subtasks and deducing the right sequence. We evaluated agents on this domain utilizing two contrasting datasets: kitchen-complete and kitchen-mixed. The ``complete'' dataset encompasses full task sequences, while the ``mixed'' dataset presents fragmented sequences without a complete demonstration, pushing agents' generalization abilities to the limit. Each task here also had a length of 1000 episodes. Online fine-tuning was executed across $1 \mathrm{M}$ environment steps after $1 \mathrm{M}$ offline pretraining steps, deploying either ROAD or other alternative offline replay strategies.

% \subsection{Adroit Tasks}

% The Adroit arena demands control over a sophisticated 24-DoF shadow hand robot. The chosen tasks for this domain were relocate-binary (listed twice, assuming a typo). These tasks revolve around a limited set of expert human data (roughly 25 sets), bolstered by trajectories from a behavior-cloned policy. Given the tight dataset distribution and expansive action space, the Adroit tasks are inherently challenging due to sparse rewards and exploration issues. During the offline training phase, agents underwent training for $20 \mathrm{~K}$ steps. Online fine-tuning spanned $1 \mathrm{M}$ environment steps for both tasks. Both tasks had a 200-episode duration.

\begin{figure*}[ht]
\centering
  \begin{minipage}[c]{0.3\linewidth}
    \centering
    \begin{subfigure}{\textwidth}
      \centering      \includegraphics[width=\linewidth]{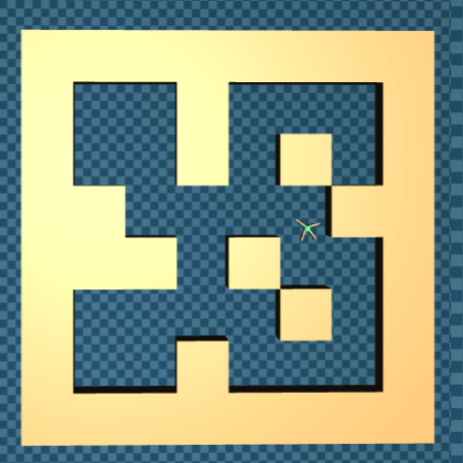}
      % \caption{Second Image}
      % \label{fig:second-image}
    \end{subfigure}
  \end{minipage}
  \begin{minipage}[c]{0.3\linewidth}
    \centering
    \begin{subfigure}{\textwidth}
      \centering      \includegraphics[width=\linewidth]{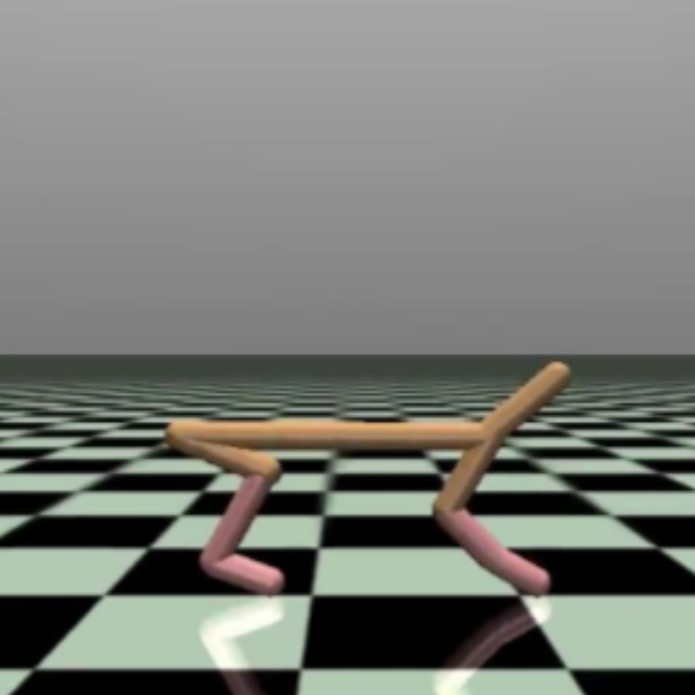}
      % \caption{Second Image}
      % \label{fig:second-image}
    \end{subfigure}
  \end{minipage}
  \begin{minipage}[c]{0.3\linewidth}
    \centering
    \begin{subfigure}{\textwidth}
      \centering      \includegraphics[width=\linewidth]{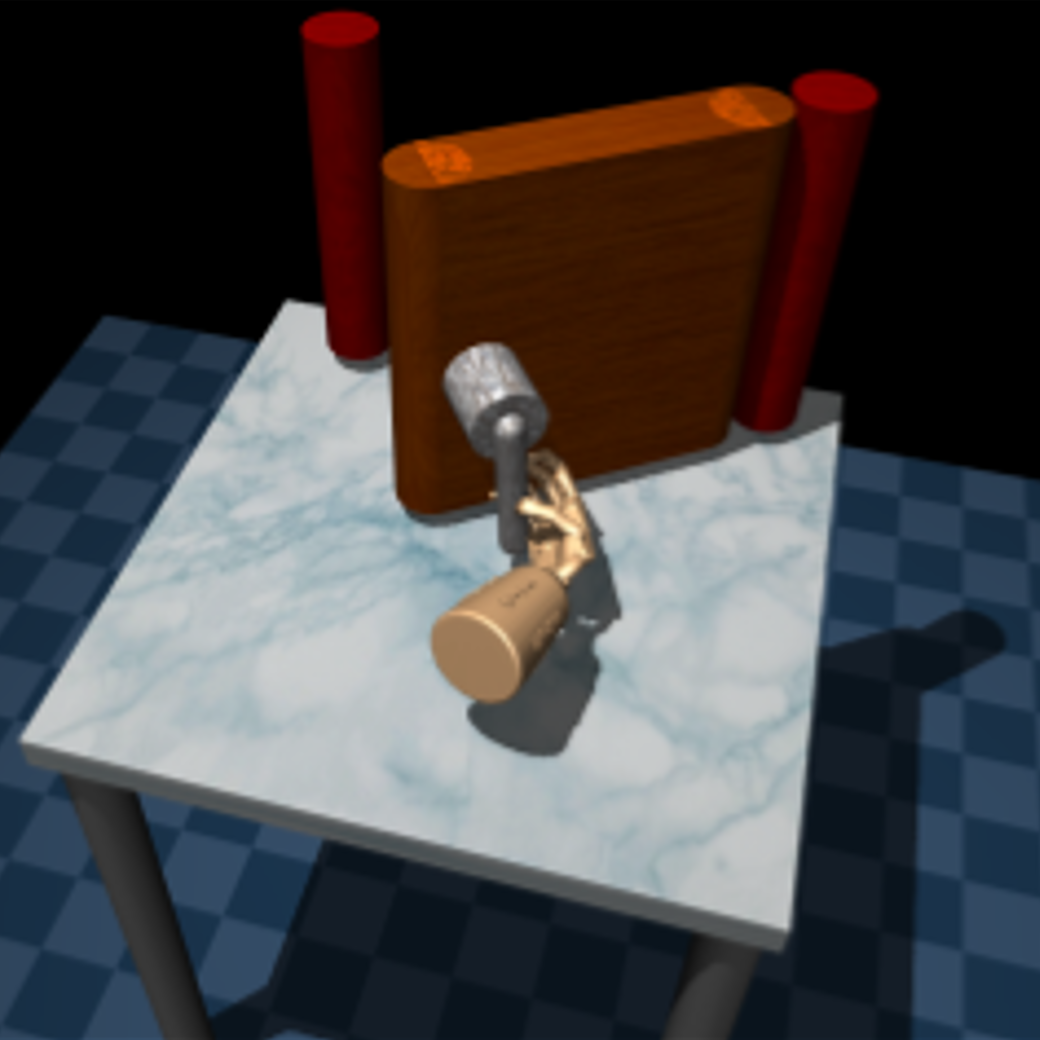}
      % \caption{Second Image}
      % \label{fig:second-image}
    \end{subfigure}
  \end{minipage}
  % \begin{minipage}[c]{0.3\linewidth}
  %   \centering
  %   \begin{subfigure}
  %     \centering
  %     \includegraphics[width=\linewidth]{figures/envs/env1.png}
  %     % \caption{Fourth Image}
  %     \label{fig:fourth-image}
  %   \end{subfigure}
  %   \begin{subfigure}
  %     \centering
  %     \includegraphics[width=\linewidth]{figures/ICML/AppendixA/qdiff-high.pdf}
  %     % \caption{Fifth Image}
  %     \label{fig:fifth-image}
  %   \end{subfigure}
  % \end{minipage}
  \caption{A selection of benchmark tasks in Antmaze Task Domain (left), Locomotion Task Domain (middle), and Adroit Task Domain (right).}
  \label{fig: appendix env introduction}
\end{figure*}

% \section{Combining ROAD with PEX} \label{Appendix: result of PEX}

% In the main paper, we presented the implementation of ROAD with IQL. Here, we extend our evaluation to include the integration of ROAD with PEX \cite{apl}, with the results displayed in Table \ref{tab:performance_metrics}. As can be seen, when combined with PEX, ROAD achieves generally favorable outcomes across various environments and offline datasets, surpassing baseline performances in the majority of tasks. Compared to these baselines, ROAD exhibits superior adaptability across diverse settings and datasets.

\begin{table}[ht]
\centering
\small
\setlength{\tabcolsep}{2.5pt}
\caption{Performance of ROAD and the baseline methods for offline data replay under Antmaze tasks, Locomotion tasks and Kitchen tasks with PEX. All results are assessed across 4 random seeds. The best performance for fixed mixing ratios is \underline{underlined}, and the best-performed score is \textbf{bolded}. \\ \ }
\label{tab:performance_metrics}
\begin{tabular}{lcccccccccc}
\toprule
\multicolumn{1}{c}{} & \multicolumn{6}{c}{Fixed Mixing Ratios} & \multicolumn{1}{c}{} & \multicolumn{1}{c}{} & \multicolumn{1}{c}{} & \multicolumn{1}{c}{} \\
\cmidrule(lr){2-7}
Tasks & $0.0$ & 0.1 & 0.2 & 0.3 & 0.4 & 0.5 & Uniform & Decreasing & BR & ROAD \\
\midrule
antmaze-large-diverse & \underline{70.15} & 69.48 & 66.97 & 62.51 & 65.98 & 69.83 & 64.47 & 69.99 & 33.51 & \textbf{74.51} \\
antmaze-large-play    & 64.33 & 66.66 & \underline{68.34} & 67.18 & 67.01 & 64.18 & 64.75 & 69.01 & 64.00 & \textbf{69.13} \\
antmaze-medium-diverse& 87.66 & 86.33 & \underline{89.82} & 86.82 & 86.67 & 86.49 & 91.50 & \textbf{93.25} & 88.50 & 92.00 \\
antmaze-medium-play   & 87.00 & 88.83 & 86.83 & \underline{89.33} & 85.49 & 86.66 & 84.75 & 84.99 & 89.25 & \textbf{89.67} \\
antmaze-umaze-diverse & \textbf{\underline{95.16}} & 86.99 & 81.01 & 43.00 & 6.17 & 8.68 & 42.22 & 51.75 & 83.50 & 86.35 \\
antmaze-umaze         & 97.99 & \underline{99.00} & 97.66 & 98.00 & 97.16 & 97.34 & 96.25 & 97.00 & 96.75 & \textbf{99.83} \\
% \textbf{Antmaze Total}       & 502.29 & 497.29 & 490.63 & 446.84 & 408.48 & 413.18 & 443.94 & 465.99 & 455.51 & 0.00 \\
\midrule
halfcheetah-random    & 78.07 & 75.82 & \underline{80.43} & 78.16 & 79.11 & 76.65 & 80.07 & 87.35 & 76.94 & \textbf{88.36} \\
halfcheetah-medium-replay    & \textbf{\underline{78.73}} & 74.20 & 73.44 & 72.68 & 64.26 & 60.32 & 71.28 & 62.57 & 72.51 & 77.67 \\
halfcheetah-medium    & 81.44 & \underline{85.30} & 82.71 & 83.27 & 73.39 & 78.48 & 83.53 & 78.86 & 83.37 & \textbf{86.94} \\
halfcheetah-medium-expert    & 88.20 & 89.28 & 89.44 & \underline{89.90} & 88.35 & 87.35 & 88.14 & 89.68 & 91.08 & \textbf{92.10} \\
halfcheetah-expert    & 93.48 & \underline{94.44} & 93.75 & 93.61 & 89.55 & 90.73 & 93.17 & 93.86 & 96.28 & \textbf{96.86} \\
% \textbf{Halfcheetah Total} & 0.00 & 0.00 & 0.00 & 0.00 & 0.00 & 0.00 & 0.00 & 0.00 & 0.00 & 0.00 \\
\midrule
walker2d-random       & 27.14 & \underline{28.20} & 16.54 & 15.77 & 12.06 & 13.84 & 16.82 & 15.93 & 31.63 & \textbf{32.08} \\
walker2d-medium-replay       & 36.08 & 36.24 & 34.83 & \underline{38.89} & 34.85 & 28.39 & 35.95 & 34.47 & 42.21 & \textbf{53.50} \\
walker2d-medium       & 39.66 & 41.55 & \underline{50.49} & 40.42 & 38.86 & 34.01 & 40.51 & 48.84 & 37.47 & \textbf{63.63} \\
walker2d-medium-expert       & 57.74 & 61.69 & 41.67 & 66.36 & \underline{70.61} & 53.73 & 66.29 & 67.07 & 23.03 & \textbf{79.04} \\
walker2d-expert       & 36.99 & 46.23 & 37.11 & 37.64 & 49.26 & \underline{50.31} & 48.57 & 52.22 & \textbf{89.06} & 79.63 \\
% \textbf{Walker2d Total}& 0.00 & 0.00 & 0.00 & 0.00 & 0.00 & 0.00 & 0.00 & 0.00 & 0.00 & 0.00 \\
\midrule
hopper-random         & 29.82 & 20.79 & 13.49 & 21.65 & \textbf{\underline{36.61}} & 25.20 & 22.67 & 23.01 & 17.81 & 34.46 \\
hopper-medium-replay         & 38.44 & 55.97 & 42.60 & 40.81 & 45.73 & \underline{55.99} & 52.17 & 49.10 & 47.30 & \textbf{68.43} \\
hopper-medium         & \underline{40.29} & 34.55 & 35.98 & 33.57 & 39.23 & 34.07 & 42.27 & 39.33 & 41.43 & \textbf{58.61} \\
hopper-medium-expert         & 42.45 & \underline{66.13} & 50.27 & 49.53 & 58.79 & 56.38 & 39.78 & 50.20 & 48.83 & \textbf{70.81} \\
hopper-expert         & 58.57 & \underline{62.31} & 40.26 & 41.82 & 43.52 & 43.06 & 60.24 & \textbf{64.74} & 49.84 & 57.58 \\
% \textbf{Hopper Total}  & 0.00 & 0.00 & 0.00 & 0.00 & 0.00 & 0.00 & 0.00 & 0.00 & 0.00 & 0.00 \\
\midrule
kitchen-partial       & 15.42 & 26.66 & \underline{39.17} & 17.92 & 33.32 & 19.16 & 18.80 & 0.00 & 2.48 & \textbf{46.65} \\
kitchen-mixed         & 0.00 & 19.61 & 37.11 & 42.06 & \underline{50.00} & 48.71 & 18.10 & 42.52 & 21.26 & \textbf{50.74} \\
kitchen-complete      & 1.25 & \underline{47.07} & 15.82 & 16.66 & 40.85 & 33.74 & 0.62 & 34.95 & 51.25 & \textbf{58.34} \\
% \textbf{Kitchen Total}& 0.00 & 0.00 & 0.00 & 0.00 & 0.00 & 0.00 & 0.00 & 0.00 & 0.00 & 0.00 \\
\midrule
\textbf{Average} & 56.09 & 60.97 & 56.91 & 55.32 & 56.53 & 54.30 & 55.12 & 58.36 & 57.47 & \textbf{71.12} \\
\bottomrule
\end{tabular}
\end{table}

\section{Combining ROAD with CQL, Cal-QL and Proto} \label{Appendix: result of Cal-QL and CQL}

In the main paper, we presented the performance of ROAD (Reinforcement learning Online-Offline Adaptive Decision-making) alongside baseline methods under IQL (Implicit Q-Learning) \cite{iql}. To offer a more comprehensive evaluation of ROAD's effectiveness, we also tested it with PEX (Policy Expansion), Cal-QL (Calibrated Q-Learning) and CQL (Conservative Q-Learning), serving as the backbone algorithms for transitioning from offline to online reinforcement learning. CQL is an advanced reinforcement learning algorithm designed to address the challenge of distribution shift between offline and online data, which is a common hurdle in offline-to-online RL scenarios. CQL operates by explicitly minimizing a lower bound of the expected return for the policy it learns, while simultaneously preventing the overestimation of Q-values for unseen state-action pairs. This conservative approach to estimating Q-values helps to mitigate the risk of erroneous policy evaluation and decision-making based on the limited or biased offline data. The key strength of CQL lies in its ability to cautiously navigate the exploration-exploitation trade-off in environments where collecting new data may be expensive or impractical. On the other hand, Cal-QL takes a slightly different approach to tackling the offline-to-online transition challenge. It focuses on calibrating the Q-function estimates to be more aligned with the true action values observed in the offline data set. By adjusting the Q-values to reflect more realistic outcomes, Cal-QL aims to improve the policy's performance by ensuring that the learning process is grounded in the empirical realities of the environment. This calibration process helps in reducing the overoptimism often associated with conventional Q-learning algorithms, especially in contexts where the offline data might not fully represent the complexities of the target environment.

Our experiments were conducted on both Antmaze and Adroit tasks, with the results illustrated in Table \ref{tab:performance_metrics} (PEX) and Figures \ref{fig: all Cal-QL} and \ref{fig: all CQL}. It was observed that fixed mixing ratios exhibit a certain level of instability in these cases. For instance, a mixing ratio fixed at 0.3 shows promising overall results with Cal-QL across both types of environments, yet the same setting performs significantly weaker with CQL. Similarly, strategies such as decreasing mixing ratios and uniform selection among mixing ratios demonstrated commendable performance on certain tasks but were unable to consistently outperform the best fixed mixing ratio. Compared to the baselines, ROAD achieved superior performance across various backbone offline-to-online RL algorithms and different task domains. It was able to match or even surpass the fixed optimal mixing ratio, further showcasing ROAD's adaptability to diverse settings. This comprehensive evaluation highlights ROAD's robustness and adaptability, indicating its potential to enhance learning efficiency and effectiveness in a broad range of reinforcement learning applications. 

% \begin{figure*}[ht]
% \centering
%   \begin{minipage}[c]{\textwidth}
%     \centering
%     \begin{subfigure}{0.5\textwidth}
%       \centering      
%     \includegraphics[width=0.45\textwidth]{figures/ICML/MainResult/Cal-QL/Cal-QL-all-antmaze.pdf}
%       % \label{fig:second-image}
%     \end{subfigure}
%     \begin{subfigure}{0.5\textwidth}
%       \centering
%       \includegraphics[width=0.45\textwidth]{figures/ICML/MainResult/Cal-QL/Cal-QL-all-adroit.pdf}
%       % \label{fig:second-image}
%     \end{subfigure}
%   \end{minipage}
%   \caption{Normalized Returns of ROAD and the baseline methods with Cal-QL \cite{nakamoto2023cal} as the backbone in Antmaze and Adroit tasks.}
%   \label{fig: all Cal-QL}
% \end{figure*}

\begin{figure*}[ht]
\centering
\begin{minipage}{\textwidth}
    \centering
    \begin{subfigure}{0.5\textwidth}
        \centering
        \includegraphics[width=\textwidth]{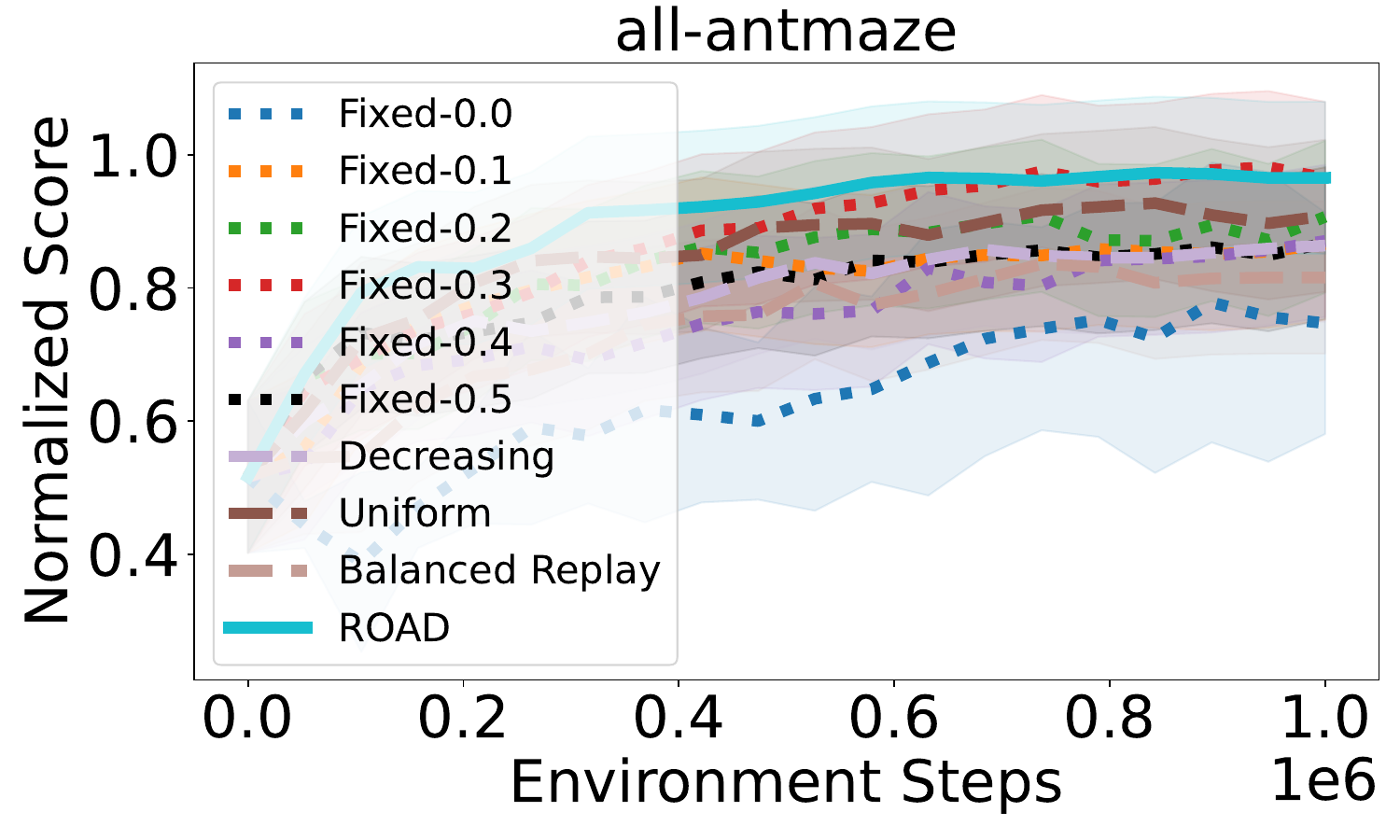}
        % \label{fig:antmaze}
    \end{subfigure}%
    \begin{subfigure}{0.5\textwidth}
        \centering
        \includegraphics[width=\textwidth]{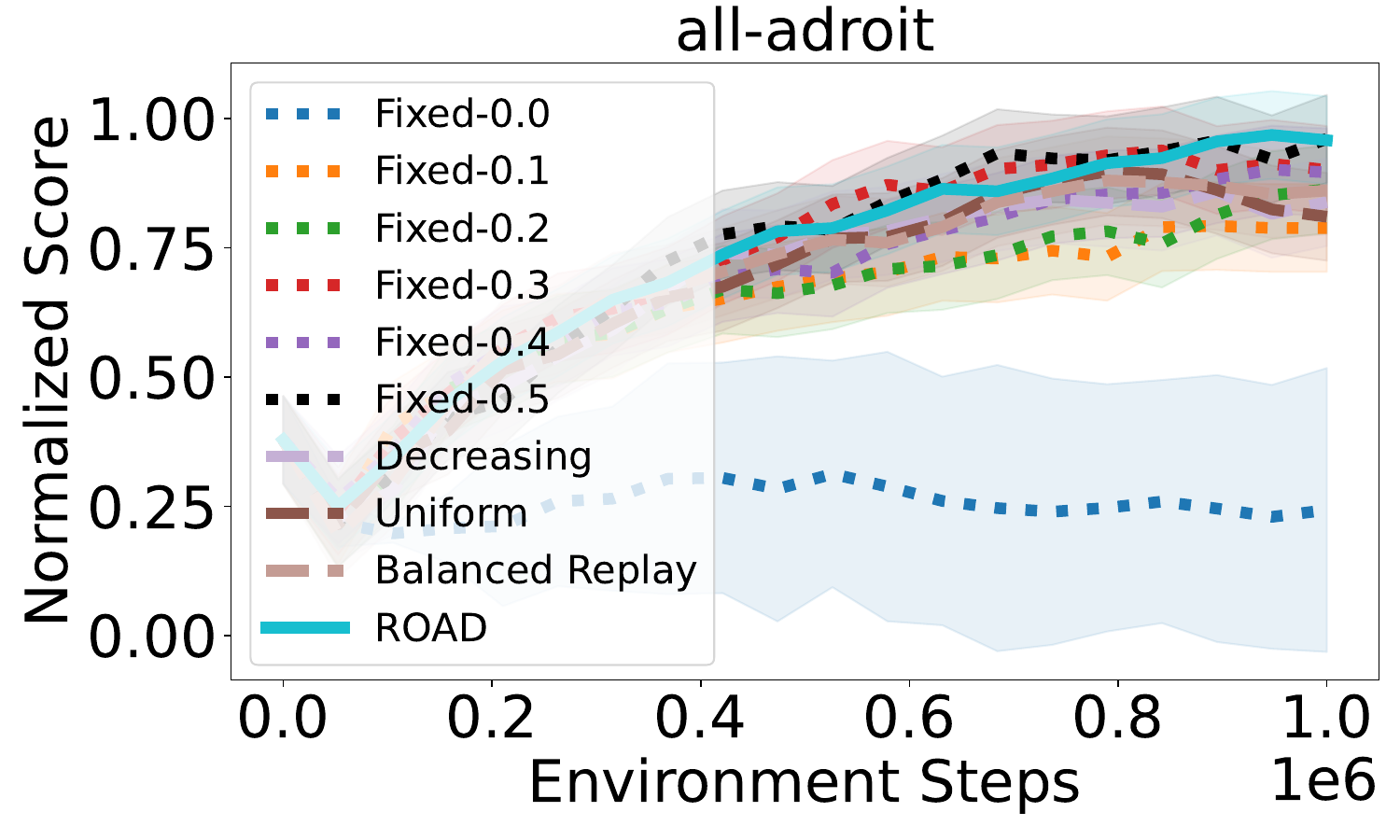}
        % \label{fig:adroit}
    \end{subfigure}
\end{minipage}
\caption{Normalized Returns of ROAD and the baseline methods with Cal-QL \protect\cite{cal-ql} as the backbone in Antmaze and Adroit tasks.}
\label{fig: all Cal-QL}
\end{figure*}

\begin{figure*}[ht]
\centering
\begin{minipage}{\textwidth}
    \centering
    \begin{subfigure}{0.5\textwidth}
        \centering
        \includegraphics[width=\textwidth]{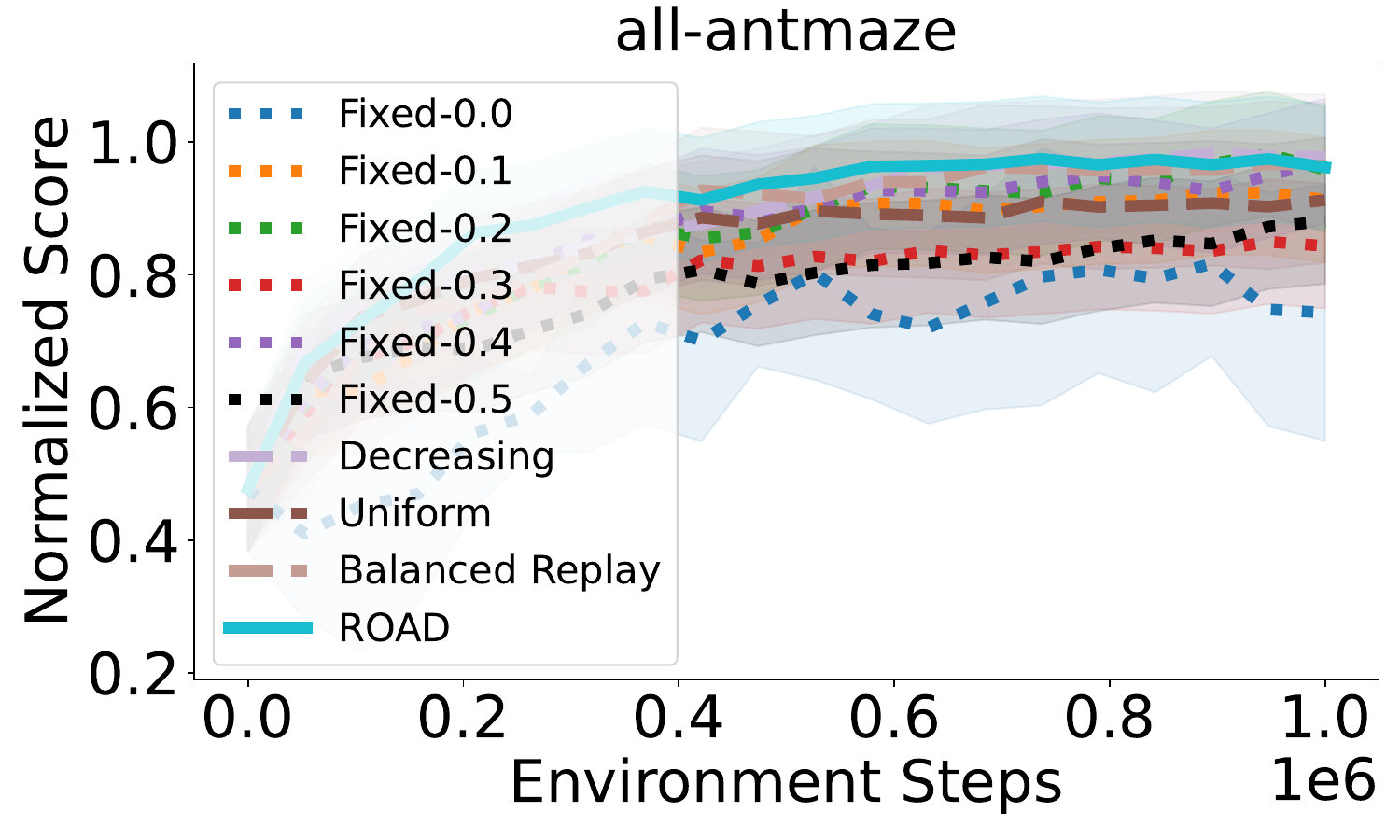}
        % \label{fig:antmaze}
    \end{subfigure}%
    \begin{subfigure}{0.5\textwidth}
        \centering
        \includegraphics[width=\textwidth]{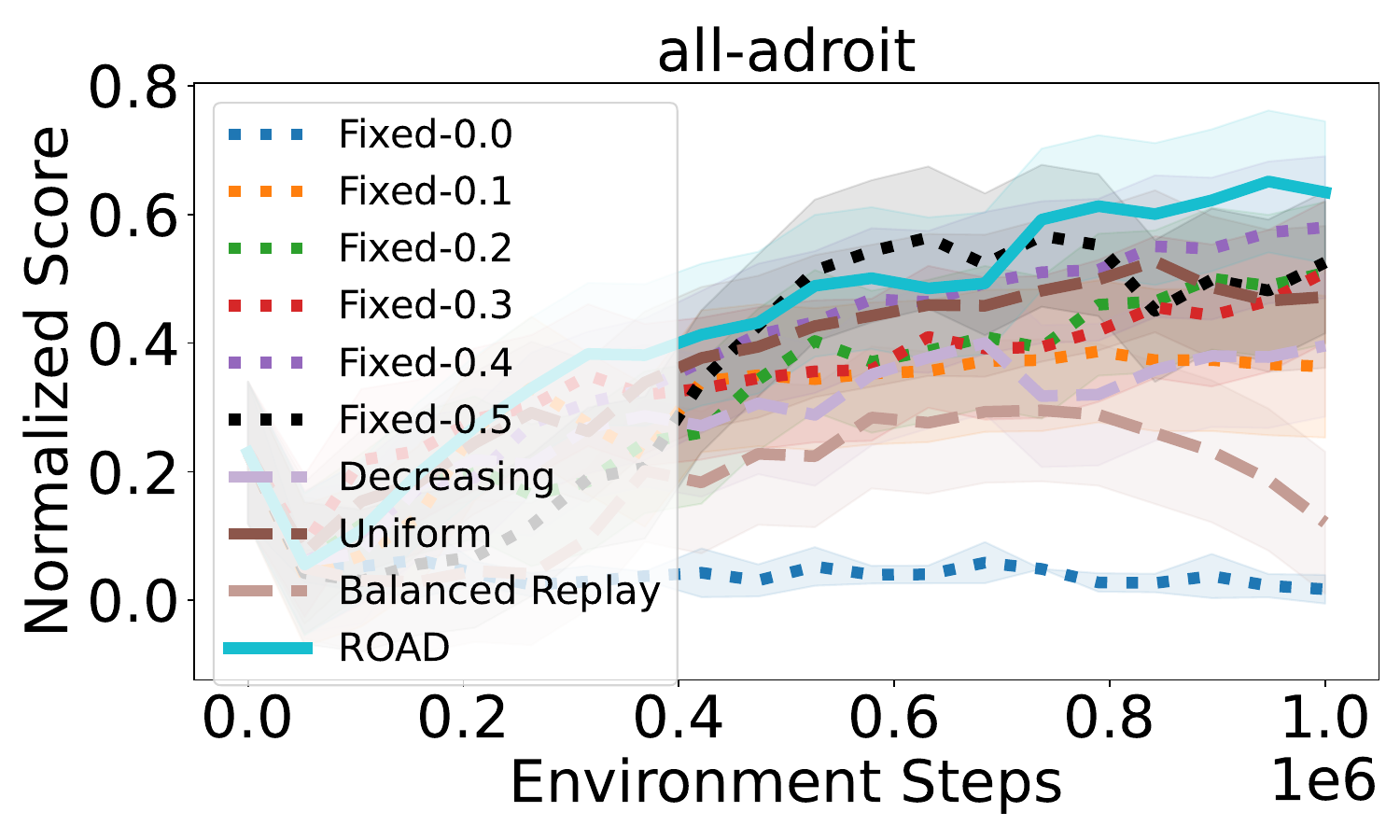}
        % \label{fig:adroit}
    \end{subfigure}
\end{minipage}
\caption{Normalized Returns of ROAD and the baseline methods with CQL \protect\cite{cql} as the backbone in Antmaze and Adroit tasks.}
\label{fig: all CQL}
\end{figure*}

Besides, we have also conducted experiments combining ROAD with Proto \cite{proto}, an offline2online RL algorithm which uses a 50\% fixed mixing ratio (symmetric sampling strategy) in its initial implementation. We examined Proto under different fixed mixing ratios and evaluated its performance when combined with ROAD. Our results on the Halfcheetah domain, across random, medium-replay, and medium quality offline datasets, are shown in Table~\ref{tab: proto}. Reflecting on the empirical results discussed in our paper, we observed that the effectiveness of various mixing ratios can be influenced by the quality of the offline data. This observation highlights the importance of adaptive methods in offline-to-online reinforcement learning. Integrating ROAD with the Proto algorithm, we found that it improves upon the best static mixing ratios. This suggests that ROAD's adaptability could provide a valuable tool for optimizing data utilization in a dynamic learning environment, potentially enhancing both efficiency and policy performance under diverse conditions.

\begin{table}[ht]
\centering
\caption{The performance of ROAD with Proto and fixed mixing ratios on Halfcheetah tasks. The best scores are \textbf{bolded}. \\ \ }
\label{tab: proto}
\begin{tabular}{cccc}
\toprule
& Random    & Medium-Replay & Medium   \\ \midrule
Fixed-0.0   & 100.20    & 91.06         & 90.80    \\
Fixed-0.1   & 101.24 & 93.40 & 92.55    \\
Fixed-0.2   & 95.35     & 89.74         & 94.60 \\
Fixed-0.3   & 98.69     & 90.19         & 92.06    \\
Fixed-0.4   & 95.38     & 87.62         & 91.41    \\
Fixed-0.5   & 97.30     & 86.25         & 89.89    \\
ROAD        & \textbf{103.45} & \textbf{96.71} & \textbf{94.88} \\ \bottomrule
\bottomrule
\end{tabular}
\end{table}

% \begin{figure*}[ht]
% \centering
%   \begin{minipage}[c]{\textwidth}
%     \centering
%     \begin{subfigure}{\textwidth}
%       \centering      
%       \includegraphics[width=0.45\textwidth]{figures/ICML/MainResult/Cal-QL/CQL-all-antmaze.pdf}
%       \label{fig:second-image}
%     \end{subfigure}
%     \begin{subfigure}{\textwidth}
%       \centering      
%       \includegraphics[width=0.45\textwidth]{figures/ICML/MainResult/Cal-QL/CQL-all-adroit.pdf}
%       \label{fig:second-image}
%     \end{subfigure}
%   \end{minipage}
%   \caption{Normalized Returns of ROAD and the baseline methods with CQL \cite{kumar2020conservative} as the backbone in Antmaze and Adroit tasks.}
%   \label{fig: all CQL}
% \end{figure*}

\section{Result on RLPD: Online RL with Offline Data Replay Without Offline Pretraining} \label{appendix: result of RLPD}

As we previously discussed, recent research has uncovered that in certain settings, incorporating offline data directly into the online learning process, without any initial offline pretraining, can be highly effective. In some cases, this approach may even surpass traditional offline-to-online RL methodologies. A notable example of such an innovation is RLPD (Reinforcement Learning with Prior Data), which achieves enhanced agent performance through strategies like employing a higher update-to-data (UTD) ratio among others. In RLPD, a symmetric sampling strategy is used, meaning a 50\% mixing ratio, for blending offline and online data.

In this context, we explored integrating ROAD into the RLPD framework to attempt a more adaptive offline data replay strategy. We conducted extensive experiments across various task categories, including antmaze, locomotion, and adroit. Given the absence of offline pretraining in this setting, we designed a broader candidate set for the mixing ratio ($\Lambda = \{0.1, 0.3, 0.5, 0.7, 0.9\}$), to accommodate the unique challenges of this approach. The experimental results, depicted in Figure~\ref{fig: all RLPD}, demonstrate the outcomes of these tests. The outcomes reveal that, within this setup, ROAD consistently matches or even surpasses the optimal fixed mixing ratio across different task domains. This performance underscores ROAD's adaptability across various settings, demonstrating its potential to enhance learning outcomes without relying on offline pretraining. This exploration into the application of ROAD within the RLPD framework reveals its flexibility and effectiveness in leveraging offline data to boost online learning, even when traditional pretraining is not utilized. The broadened candidate set for mixing ratios further illustrates the importance of adaptability in the utilization of offline data, showcasing ROAD's capability to dynamically adjust to the learning environment's demands. Such results emphasize the significance of adaptive strategies in the evolving landscape of reinforcement learning, particularly in scenarios where blending offline and online learning processes can lead to improved performance and efficiency.

% \begin{figure*}[ht]
% \centering
%   \begin{minipage}[c]{\textwidth}
%     \centering
%     \begin{subfigure}{\textwidth}
%       \centering      
%       \includegraphics[width=0.32\textwidth]{figures/ICML/MainResult/RLPD/All_Antmaze_Tasks.pdf}
%       \label{fig:second-image}
%     \end{subfigure}
%     \begin{subfigure}{\textwidth}
%       \centering      
%       \includegraphics[width=0.32\textwidth]{figures/ICML/MainResult/RLPD/All_Locomotion_Tasks.pdf}
%       \label{fig:second-image}
%     \end{subfigure}
%     \begin{subfigure}{\textwidth}
%       \centering      
%       \includegraphics[width=0.32\textwidth]{figures/ICML/MainResult/RLPD/All_Adroit_Tasks.pdf}
%       \label{fig:second-image}
%     \end{subfigure}
%   \end{minipage}
%   \caption{Normalized Returns of ROAD and the fixed mixing ratio approaches with RLPD \cite{kumar2020conservative} as the backbone in Antmaze, Locomotion and Adroit tasks.}
%   \label{fig: all RLPD}
% \end{figure*}

\begin{figure*}[ht]
\centering
  \begin{minipage}[c]{\textwidth}
    \centering
    \begin{subfigure}[b]{0.32\textwidth}
      \centering      
      \includegraphics[width=\textwidth]{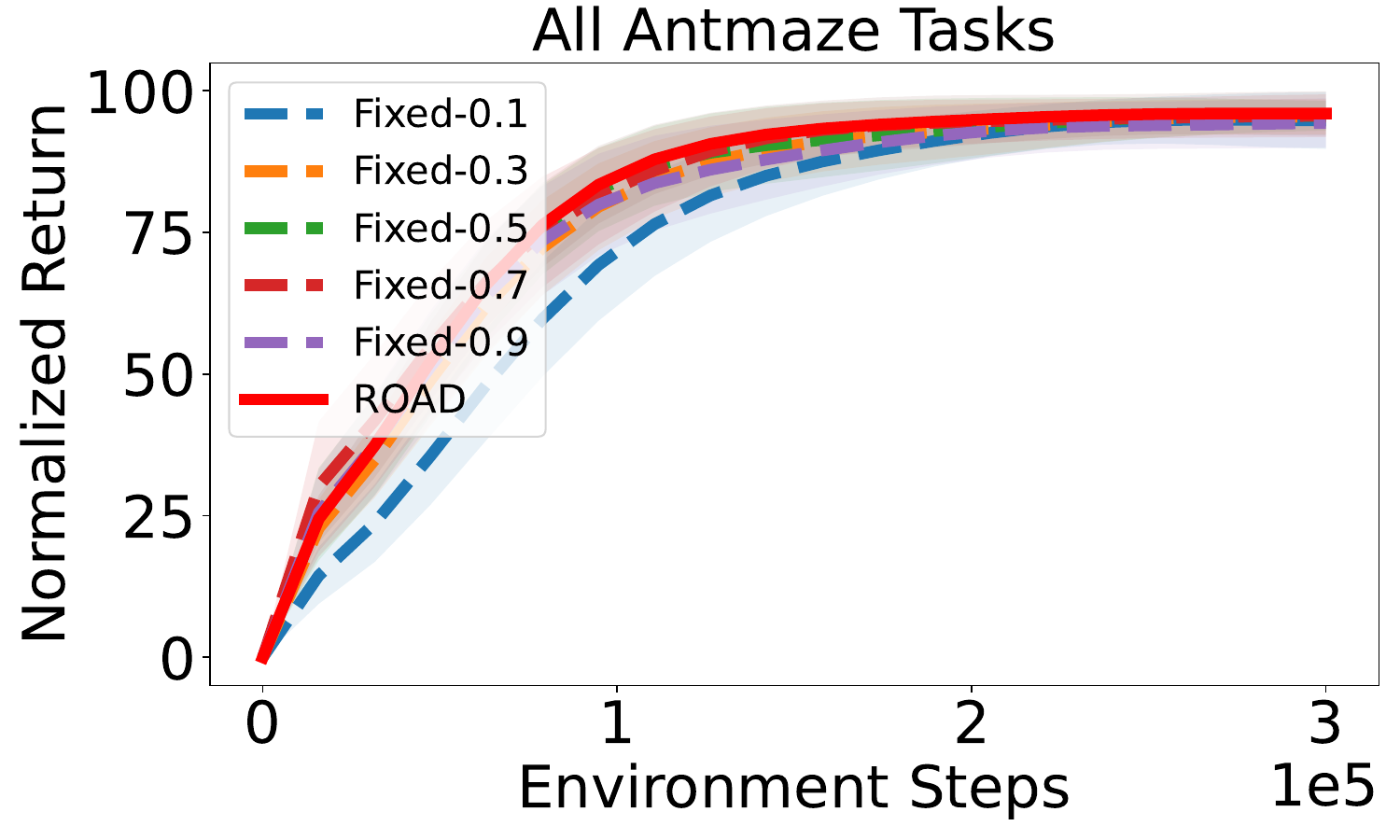}
      \caption{Antmaze Tasks}
      \label{fig:antmaze-image}
    \end{subfigure}
    \hfill % 添加空格以分隔子图
    \begin{subfigure}[b]{0.32\textwidth}
      \centering      
      \includegraphics[width=\textwidth]{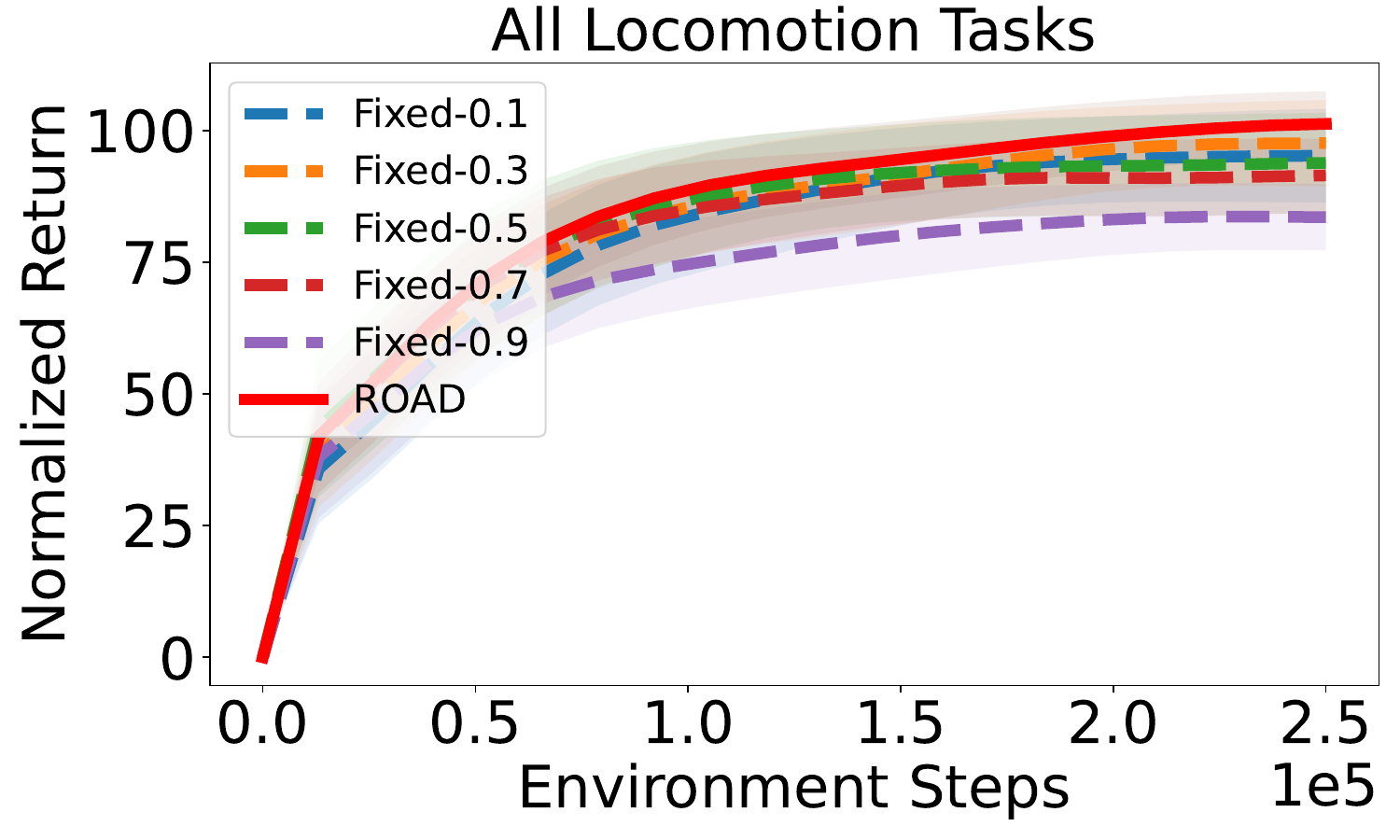}
      \caption{Locomotion Tasks}
      \label{fig:locomotion-image}
    \end{subfigure}
    \hfill % 添加空格以分隔子图
    \begin{subfigure}[b]{0.32\textwidth}
      \centering      
      \includegraphics[width=\textwidth]{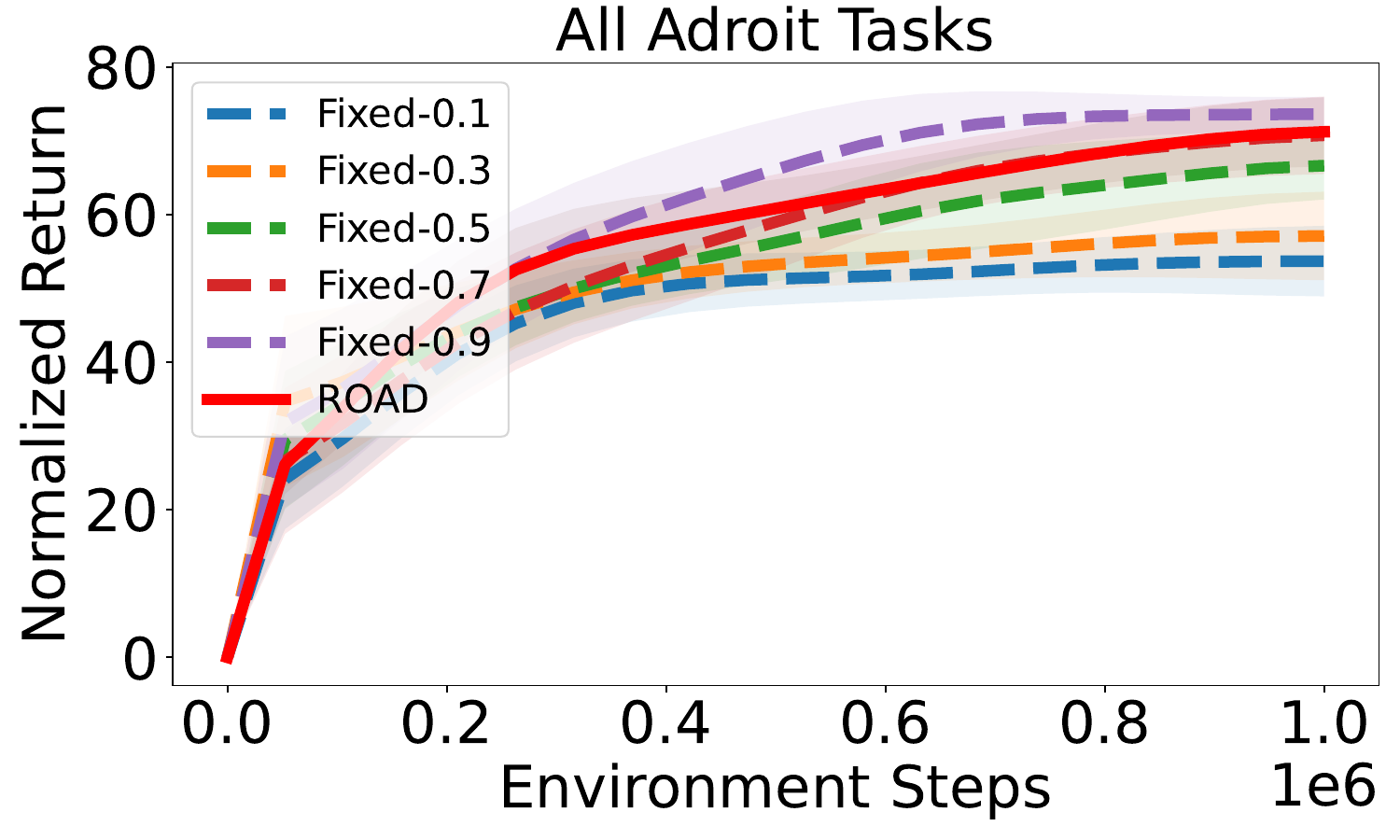}
      \caption{Adroit Tasks}
      \label{fig:adroit-image}
    \end{subfigure}
  \end{minipage}
  \caption{Normalized Returns of ROAD and the fixed mixing ratio approaches with RLPD \protect\cite{cql} as the backbone in Antmaze, Locomotion and Adroit tasks.}
  \label{fig: all RLPD}
\end{figure*}

Moreover, we also evaluated ROAD's adaptability to offline data of varying quality within the RLPD framework, particularly focusing on the locomotion task with medium, medium-replay, and random quality offline data. These assessments were compared against fixed mixing ratios. The findings, as illustrated in Figure~\ref{fig: all RLPD dataset quality}, reveal that a static mixing ratio does not consistently yield the best outcomes across different qualities of offline data. Conversely, ROAD exhibits superior performance across multiple offline data quality levels, substantiating its enhanced adaptability over fixed mixing ratios.

% \begin{figure*}[ht]
% \centering
%   \begin{minipage}[c]{\textwidth}
%     \centering
%     \begin{subfigure}{\textwidth}
%       \centering      
%       \includegraphics[width=0.32\textwidth]{figures/ICML/MainResult/RLPD/All_Locomotion_Tasks_(medium).pdf}
%       \label{fig:second-image}
%     \end{subfigure}
%     \begin{subfigure}{\textwidth}
%       \centering      
%       \includegraphics[width=0.32\textwidth]{figures/ICML/MainResult/RLPD/All_Locomotion_Tasks_(medium-replay).pdf}
%       \label{fig:second-image}
%     \end{subfigure}
%     \begin{subfigure}{\textwidth}
%       \centering      
%       \includegraphics[width=0.32\textwidth]{figures/ICML/MainResult/RLPD/All_Locomotion_Tasks_(random).pdf}
%       \label{fig:second-image}
%     \end{subfigure}
%   \end{minipage}
%   \caption{Examining ROAD under the setting of online RL with offline data without offline pretraining with various dataset qualities.}
%   \label{fig: all RLPD dataset quality}
% \end{figure*}

\begin{figure*}[ht]
\centering
  \begin{minipage}{\textwidth}
    \centering
    \begin{subfigure}[b]{0.32\textwidth}
      \centering      
      \includegraphics[width=\textwidth]{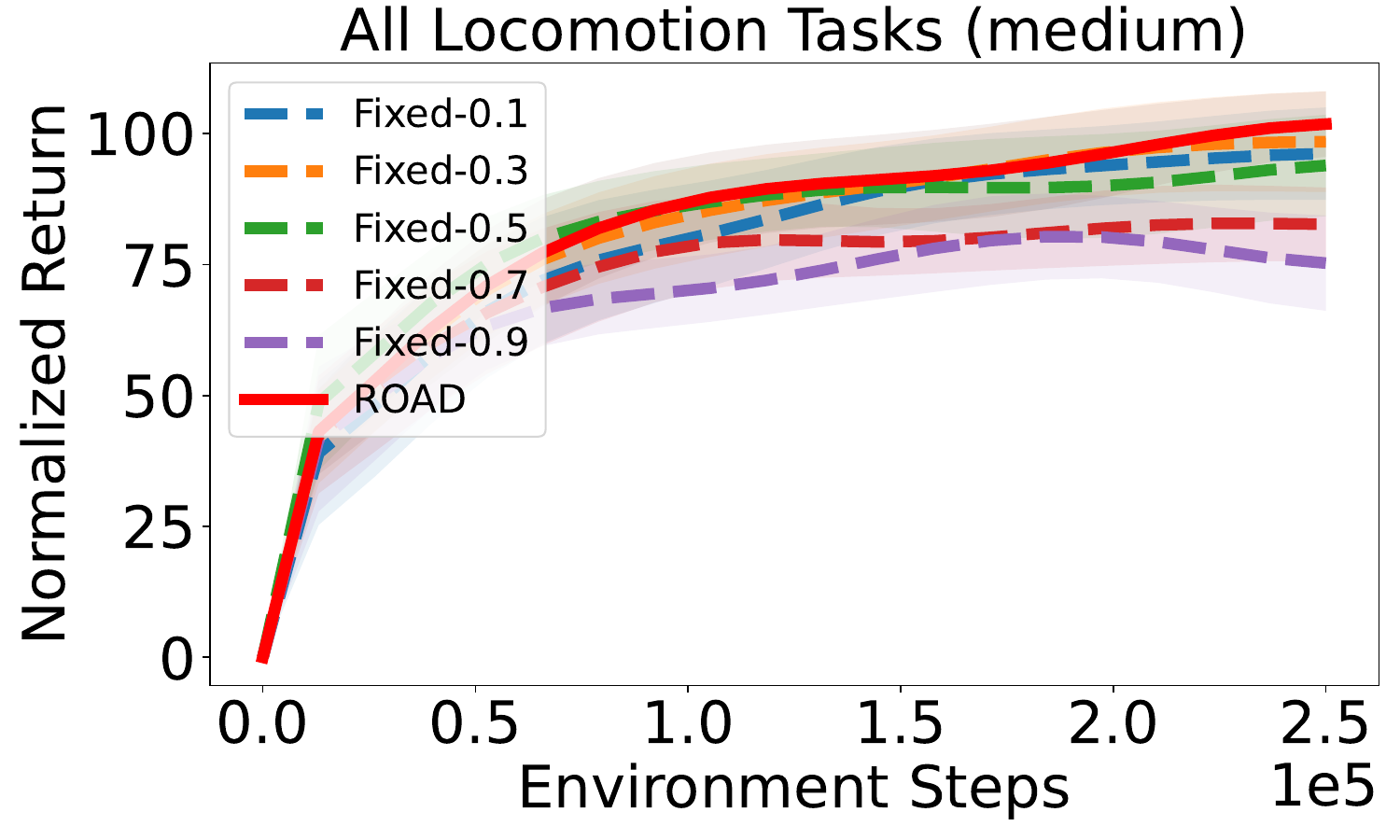}
      \caption{Medium}
      \label{fig:medium-image}
    \end{subfigure}%
    \begin{subfigure}[b]{0.32\textwidth}
      \centering      
      \includegraphics[width=\textwidth]{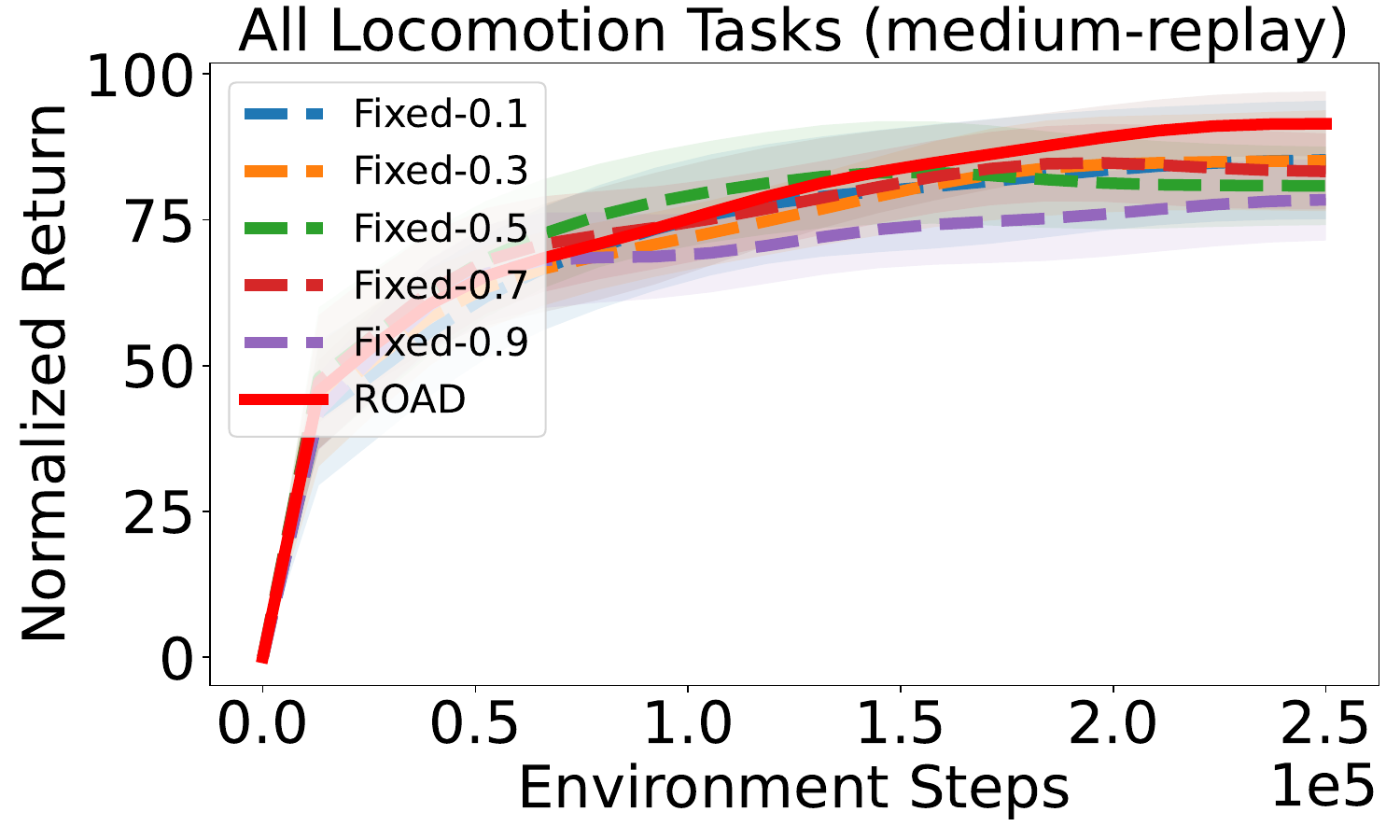}
      \caption{Medium-Replay}
      \label{fig:medium-replay-image}
    \end{subfigure}%
    \begin{subfigure}[b]{0.32\textwidth}
      \centering      
      \includegraphics[width=\textwidth]{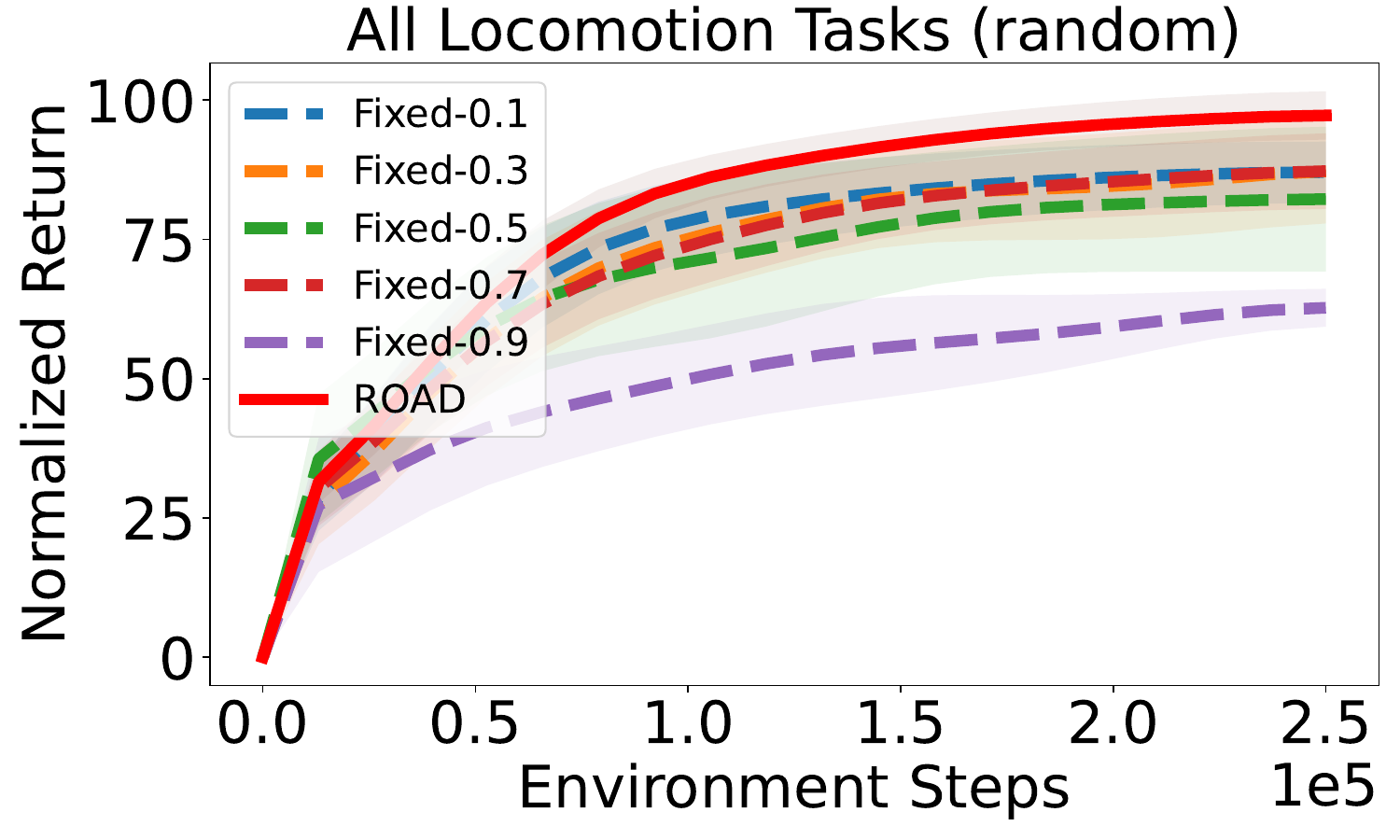}
      \caption{Random}
      \label{fig:random-image}
    \end{subfigure}
  \end{minipage}
  \caption{Examining ROAD under the setting of online RL with offline data without offline pretraining with various dataset qualities.}
  \label{fig: all RLPD dataset quality}
\end{figure*}

\newpage

\section{Hyper-Parameters Setting and Resources for Implementation} \label{appendix: params}

In our experimental setup, we employed 8 NVIDIA GeForce RTX 4090 GPUs to facilitate the main analyses presented in this paper. The computational demand of our experiments primarily stems from the offline-to-online reinforcement learning algorithms we utilized, such as Conservative Q-Learning (CQL) and Implicit Q-Learning (IQL). Notably, the introduction of ROAD did not significantly impact the overall computation cost and time cost.

The computation of expectations in Equation (\ref{eq:R_q}) is achieved through a sampling method. For this purpose, we select a batch size, $B$, and draw samples of this size from both offline and online data buffers. We then compute the empirical mean of each term using the current agent's $Q$ function, a process efficiently parallelized on GPUs that mitigates computational overhead. Additionally, the UCB algorithm updates, which involve calculating the upper confidence bounds for the arms used in our experiments, also incur minimal computational and temporal cost. Empirical evidence from our experiments shows that the execution time of ROAD, when compared to fixed mixing ratio approaches under equivalent environmental and offline dataset conditions, remains competitive. The primary computational bottleneck continues to be the interaction with the online environment, confirming that the added computational and temporal costs associated with implementing ROAD are manageable and do not detract from its efficacy.

Here we list the hyperparameters in ROAD with offline-to-online RL algorithms in Table~\ref{Tab: hyper-params2}, and for RLPD in Table~\ref{Tab: hyper-params3}.

\begin{table}[ht]
\caption{Main hyper-parameters for IQL, PEX, CQL and Cal-QL in ROAD.}
\label{Tab: hyper-params2}
\begin{center}
\begin{tabular}{cc}
\toprule
\multicolumn{1}{c}{\bf Hyper-parameter}  &\multicolumn{1}{c}{\bf Implemented Value}
\\ \midrule
batch size & 256 \\
replay buffer size & 1000000 \\
total env steps & 1000000 \\
discount & 0.99 \\
policy learning rate & 1e-4 (Cal-QL \& CQL), 3e-4 (IQL \& PEX) \\
Q function learning rate & 3e-4 \\
orthogonal init & True \\
policy arch & 256-256 \\
CQL Q function arch & 256-256-256-256 \\
CQL min Q weight & 5.0 \\
CQL target action gap & 0.8 \\
IQL expectile value & 0.9 \\
IQL inverse temperature & 10.0 \\
Exploration parameter $c$ & 2.0 \\
Sliding-window size $\tau$ & $\tau=t$ for Antmaze, $\tau = 1000$ for other tasks \\
\bottomrule

\end{tabular}
\end{center}
\end{table}

\begin{table}[ht]
\caption{Main hyper-parameters for RLPD in ROAD.}
\label{Tab: hyper-params3}
\begin{center}
\begin{tabular}{cc}
\toprule
\multicolumn{1}{c}{\bf Hyper-parameter}  &\multicolumn{1}{c}{\bf Implemented Value}
\\ \midrule
batch size & 128 \\
replay buffer size & 1000000 \\
total env steps & 1000000 \\
discount & 0.99 \\
policy learning rate & 3e-4 \\
ensemble size & 10 \\
RLPD network width & 256 \\
UTD ratio & 20 \\
Exploration parameter $c$ & 2.0 \\
Sliding-window size $\tau$ & $\tau=t$ \\
\bottomrule

\end{tabular}
\end{center}
\end{table}

% \include{sections/7-appendix}
% \include{sections/8-appendix2}

% \section*{Ethical Statement}

% There are no ethical issues.

% \section*{Acknowledgments}

% The preparation of these instructions and the \LaTeX{} and Bib\TeX{}
% files that implement them was supported by Schlumberger Palo Alto
% Research, AT\&T Bell Laboratories, and Morgan Kaufmann Publishers.
% Preparation of the Microsoft Word file was supported by IJCAI.  An
% early version of this document was created by Shirley Jowell and Peter
% F. Patel-Schneider.  It was subsequently modified by Jennifer
% Ballentine, Thomas Dean, Bernhard Nebel, Daniel Pagenstecher,
% Kurt Steinkraus, Toby Walsh, Carles Sierra, Marc Pujol-Gonzalez,
% Francisco Cruz-Mencia and Edith Elkind.

\end{document}